\Crefname{algocf}{Algorithm}{Algorithms}
\newtheorem{theorem}{Theorem}[section]
\newtheorem{corollary}[theorem]{Corollary}
\newtheorem{proposition}[theorem]{Proposition}
\newtheorem{lemma}[theorem]{Lemma}
\newtheorem{assumption}[theorem]{Assumption}
\newtheorem{definition}[theorem]{Definition}
\newtheorem{remark}[theorem]{Remark}
\DeclarePairedDelimiterX{\infdivx}[2]{(}{)}{%
  #1\;\delimsize\|\;#2%
}
\newcommand\numberthis{\addtocounter{equation}{1}\tag{\theequation}}
\newcommand{\RR}{\mathbf{R}}
\newcommand{\ZZ}{\mathbf{Z}}
\newcommand{\defeq}{:=}
\newcommand{\ind}{\mathbf{1}}
\newcommand{\cov}{\textup{cov}}
\newcommand{\eps}{\epsilon}
\newcommand{\gammaconv}{\stackrel{\Gamma}{\to}}
\renewcommand{\dd}{\mathop{}\!\mathrm{d}}
\DeclareMathOperator*{\argmin}{arg\,min}
\DeclareMathOperator*{\supp}{\textup{supp}}
\newcommand{\cP}{\mathcal{P}}
\newcommand{\cF}{\mathcal{F}}
\newcommand{\cE}{\mathcal{E}}
\newcommand{\cL}{\mathcal{L}}
\newcommand{\cMsigned}{\mathcal{M}_{\textup{sign}}}
\newcommand{\hess}{\mathrm{H}}
\setlist[itemize]{noitemsep, nolistsep}
\setlist[enumerate]{noitemsep, nolistsep}
\title{Sampling with Mollified Interaction Energy Descent}
\author{Lingxiao Li\\
MIT CSAIL\\
\texttt{lingxiao@mit.edu}
\And
Qiang Liu\\
University of Texas at Austin\\
\texttt{lqiang@cs.utexas.edu}
\And
Anna Korba\\
CREST, ENSAE, IP Paris\\
\texttt{anna.korba@ensae.fr}
\And
Mikhail Yurochkin\\
IBM Research, MIT-IBM Watson AI Lab\\
\texttt{mikhail.yurochkin@ibm.com}
\And
Justin Solomon\\
MIT CSAIL\\
\texttt{jsolomon@mit.edu}
}
\begin{document}

\maketitle

\begin{abstract}
Sampling from a target measure whose density is only known up to a normalization constant is a fundamental problem in computational statistics and machine learning. In this paper, we present a new optimization-based method for sampling called \emph{mollified interaction energy descent} (MIED). MIED minimizes a new class of energies on probability measures called \emph{mollified interaction energies} (MIEs). These energies rely on mollifier functions---smooth approximations of the Dirac delta originated from PDE theory. We show that as the mollifier approaches the Dirac delta, the MIE converges to the chi-square divergence with respect to the target measure and the minimizers of MIE converge to the target measure. Optimizing this energy with proper discretization yields a practical first-order particle-based algorithm for sampling in both unconstrained and constrained domains. We show experimentally that for unconstrained sampling problems, our algorithm performs on par with existing particle-based algorithms like SVGD, while for constrained sampling problems our method readily incorporates constrained optimization techniques to handle more flexible constraints with strong performance compared to alternatives.
\end{abstract}

\section{Introduction}
Sampling from an unnormalized probability density is a ubiquitous task in statistics, mathematical physics, and machine learning.
While Markov chain Monte Carlo (MCMC) methods \citep{brooks2011handbook} provide a way to obtain unbiased samples at the price of potentially long mixing times, variational inference (VI) methods \citep{blei2017variational} approximate the target measure with simpler (e.g., parametric) distributions at a lower computational cost. 
In this work, we focus on a particular class of VI methods that approximate the target measure using a collection of interacting particles.
A primary example is Stein variational gradient descent (SVGD) proposed by \citet{liu2016stein}, which iteratively applies deterministic updates to a set of particles to decrease the KL divergence to the target distribution.

While MCMC and VI methods have found great success in sampling from unconstrained distributions, they often break down for distributions supported in a constrained domain.
Constrained sampling is needed when the target density is undefined outside a given domain (e.g., the Dirichlet distribution), when the target density is not integrable in the entire Euclidean space (e.g., the uniform distribution), or when we only want samples that satisfy certain inequalities (e.g., fairness constraints in Bayesian inference \citep{liu2021sampling}).
A few recent approaches \citep{brubaker2012family, byrne2013geodesic, liu2018riemannian, shi2021sampling} extend classical sampling methods like Hamiltonian Monte Carlo (HMC) or SVGD to constrained domains.
These extensions, however, typically contain expensive numerical subroutines like solving nonlinear systems of equations and require explicit formulas for quantities such as Riemannian metric tensors or mirror maps to be derived on a case-by-case basis from the constraints.

We present an optimization-based method called \emph{mollified interaction energy descent} (MIED) that minimizes \emph{mollified interaction energies} (MIEs) for both unconstrained and constrained sampling.
An MIE takes the form of a double integral of the quotient of a \emph{mollifier}---smooth approximation of $\delta_0$, the Dirac delta at the origin---over the target density properly scaled.
Intuitively, minimizing an MIE balances two types of forces: attractive forces that drive the current measure towards the target density, and repulsive forces from the mollifier that prevents collapsing.
We show that as the mollifier converges to $\delta_0$, the MIE converges to the $\chi^2$ divergence to the target measure up to an additive constant (\cref{thm:lim_chi_square}).
Moreover, the MIE $\Gamma$-converges to $\chi^2$ divergence (\cref{thm:gamma_conv}), so that minimizers of MIEs converge to the target measure, providing a theoretical basis for sampling by minimizing MIE.

While mollifiers can be interpreted as kernels with diminishing bandwidths, our analysis is fundamentally different from that of SVGD where a fixed-bandwidth kernel is used to define a reproducing kernel Hilbert space (RKHS) on which the Stein discrepancy has a closed-form \citep{gorham2017measuring}. 
Deriving a version of the Stein discrepancy for constrained domains is far from trivial and requires special treatment \citep{shi2021sampling,xu2021generalised}.
In contrast, our energy has a unified form for constrained and unconstrained domains and approximates the $\chi^2$ divergence as long as the bandwidth is sufficiently small so that short-range interaction dominates the energy: this idea of using diminishing bandwidths in sampling is under-explored for methods like SVGD.

Algorithmically, we use first-order optimization to minimize MIEs discretized using particles.
We introduce a log-sum-exp trick to neutralize the effect of arbitrary scaling of the mollifiers and the target density; this form also improves numerical stability significantly.
Since we turn sampling into optimization, we can readily apply existing constrained sampling techniques such as reparameterization using a differentiable (not necessarily bijective) map or the dynamic barrier method by \citet{gong2021bi} to handle generic differentiable inequality constraints.
Our method is effective as it only uses first-order derivatives of both the target density and the inequality constraint (or the reparameterization map), enabling large-scale applications in machine learning (see e.g. \cref{fig:fairness_bnn}).
For unconstrained sampling problems, we show MIED achieves comparable performance to particle-based algorithms like SVGD, while for constrained sampling problems, MIED demonstrates strong performance compared to alternatives while being more flexible with constraint handling.

\section{Related Works}

\textbf{KL gradient flow and its discretization for unconstrained sampling.}
The Wasserstein gradient flow of the Kullback-Leibler (KL) divergence has been extensively studied, and many popular sampling algorithms can be viewed as discretizations of the KL-divergence gradient flow.
Two primary examples are Langevin Monte Carlo (LMC) and Stein variational gradient descent (SVGD).
LMC simulates Langevin diffusion and can be viewed as a forward-flow splitting scheme for the KL-divergence gradient flow \citep{wibisono2018sampling}.
At each iteration of LMC, particles are pulled along $-\nabla \log p$ where $p$ is the target density, while random Gaussian noise is injected, although a Metropolis adjusting step is typically needed for unbiased sampling.
In contrast with LMC, SVGD is a deterministic algorithm that updates a collection of particles using a combination of an attractive force involving $-\nabla \log p$ and a repulsive force among the particles; it can be viewed as a kernelized gradient flow of the KL divergence \citep{liu2017stein} or of the $\chi^2$ divergence \citep{chewi2020svgd}.
The connection to the continuous gradient flow in the Wasserstein space is fruitful for deriving sharp convergence guarantees for these sampling algorithms \citep{durmus2019analysis,balasubramanian2022towards,korba2020non,salim2022convergence}. 

\textbf{Sampling in constrained domains.}
Sampling in constrained domains is more challenging compared to the unconstrained setting.
Typical solutions are rejection sampling and reparameterization to an unconstrained domain.
However, rejection sampling can have a high rejection rate when the constrained domain is small, while reparameterization maps need to be chosen on a case-by-case basis with a determinant-of-Jacobian term that can be costly to evaluate.
\citet{brubaker2012family} propose a constrained version of HMC for sampling on implicit submanifolds, but their algorithm is expensive as they need to solve a nonlinear system of equations for every integration step in each step of HMC. 
\citet{byrne2013geodesic} propose geodesic Hamiltonian Monte Carlo for sampling on embedded manifolds, but they require explicit geodesic formulae. 
\citet{zhang2020wasserstein,ahn2021efficient} propose discretizations of the mirror-Langevin diffusion for constrained sampling provided that a mirror map is given to capture the constraint.
Similarly, \citet{shi2021sampling} propose mirror SVGD that evolves particles using SVGD-like updates in the dual space defined by a mirror map.
While these two methods have strong theoretical convergence guarantees, they are limited by the availability of mirror maps that capture the constraints.
\citet{liu2021sampling} extend SVGD to incorporate a population constraint over the samples obtained; their setting is different from ours where the constraint is applied to every sample.

\textbf{Pairwise interaction energies on particles.}
Pairwise interaction energies on particles take the form $E(\{x_i\}_{i=1}^N) = \sum_{i\neq j} W(x_i, x_j)$ for a kernel $W(x,y)$. %
The gradient flow of $E$ on $N$ particles can give rise to phenomena like flocking and swarming, and a long line of mathematical research studies mean-field convergence of such particle gradient flow to continuous solutions of certain PDEs as $N \to \infty$ \citep{carrillo2014derivation, serfaty2020mean}.
Of particular interest to sampling, a separate line of works summarized by \citet{borodachov2019discrete} demonstrates that for the hypersingular Riesz kernel $W(x, y) = \norm{x-y}^{-s}_2$ with sufficiently big $s$, minimizing $E$ over a compact set yields uniform samples as $N \to \infty$.
Moreover, $W$ can depend on $p$ to obtain non-uniform samples distributed according to $p$.
As the hypersingular Riesz kernel is not integrable, their analysis is entirely based on geometric measure theory and avoids any variational techniques. %
In contrast, we take a different approach by using integrable kernels in MIEs through mollifiers. %
This enables us to apply variational analysis to establish interesting connections between MIEs and $\chi^2$ divergence.  %
We compare our formulation with theirs further in \cref{sec:weight_riesz}. 
\cite{joseph2019deterministic} propose to minimize the maximum of pairwise interaction akin to the interaction in \citet{borodachov2019discrete} without gradient information using a greedy algorithm. 
Recently, \citet{korba2021kernel} propose sampling via descending on kernel Stein discrepancy which can also be viewed as a type of interaction energy, but like SVGD, it is limited to unconstrained sampling and can be slow as higher-order derivatives of $\log p$ are required.
\citet{craig2022blob} propose approximating the $\chi^2$ divergence with a functional different from ours that also involves a mollifier.
The resulting algorithm needs to evaluate an integral containing the target density over the whole domain at each step which can be costly.
Consequently, their method is only applied to special target densities for which the integral has a closed form.
In comparison, our method can handle general target densities with  cheap updates in each step.

\section{Mollified interaction energy}
\textbf{Notation.}
Let $X \subset \RR^n$ be the domain we work in.
We use $\mathcal{L}_n$ to denote the Lebesgue measure on $\RR^n$ and write $\int f(x) \dd x$ to denote integration with respect to $\mathcal{L}_n$.
We will assume all measures are Borel.
We use $\mathcal{H}_d$ to denote the $d$-dimensional Hausdorff measure.
We use $\omega_N$ to denote a set of points $\{x_1,\ldots,x_N\} \subset \RR^n$.
For $x \in \RR^n$, let $\delta_x$ be the Dirac delta measure at $x$ and $\delta_{\omega_N}$ be the empirical measure $\frac{1}{N}\sum_{k=1}^N \delta_{x_k}$. 
We denote $L^p(X) \defeq \{f: X \to \RR \text{ Lebesgue measurable with } \int_X \abs{f(x)}^p \dd x < \infty \}$, and for $f \in L^p(X)$ we let $\norm{f}_p \defeq \left(\int_X \abs{f(x)}^p \dd x\right)^{1/p}$.
We use $\norm{f}_\infty$ to denote the essential supremum of $\abs{f}$.
For $f, g: \RR^n \to \RR$, we define convolution $(f * g)(x) \defeq \int f(y) g(x-y) \dd y$ provided this integral exists.
We use $C^k(X)$ to denote continuously $k$-differentiable functions and $C^\infty(X)$ to indicate the smooth functions.
The space of continuous $k$-differentiable functions with compact support on $X$ is $C_c^{k}(X)$. 
We denote by $\cP(X)$ the set of probability measures, and $\cP_2(X)$ the set of probability measures with bounded second moments. 
We denote by $\cMsigned(X)$ the set of finite signed measures.
Given a Lebesgue measurable map $\bm T: X\to X$ and $\mu\in \cP_2(X)$, ${\bm T}_{\#}\mu$ is the pushforward measure of $\mu$ by $\bm T$.

\paragraph{Setup.}
The domain $X \subset \RR^n$ where we constrain samples is assumed to be closed and full-dimensional, i.e., $\mathcal{L}_n(X) > 0$; the unconstrained case corresponds to $X = \RR^n$. 
The target density, always denoted as $p$, is assumed to satisfy $p \in C^1(X)$, $p(x) > 0$ for all $x \in X$, and $0 < \int_X p(x) \dd x < \infty$.
Let $\mu^*$ be the target probability measure $\mu^*(B) \defeq \nicefrac{\int_B p(x)\dd x}{\int_X p(x) \dd x}$ for Borel $B \subset X$.
Our goal is to sample from $\mu^*$.

\subsection{Mollifiers}
Our framework relies on the notion of \emph{mollifiers} originally introduced in \citet{friedrichs1944identity}.
\begin{definition}[family of mollifiers]\label{defn:mollifiers}
  We say $\{\phi_\eps\}_{\eps > 0} \subset C^1(\RR^n)$ is a family of \emph{mollifiers} if it satisfies
  \begin{enumerate}[label=(\alph*)]
    \item\label{itm:mollifier_sym}
      For any $\eps > 0$, $x \in \RR^n$, $\phi_\eps(x) \ge 0$ and $\phi_\eps(x) = \phi_\eps(-x)$.
    \item\label{itm:mollifier_int}
      For any $\eps > 0,$ $\norm{\phi_\eps}_1 = 1$ and $\sup_{x\in\RR^n} \phi_\eps(x) < \infty$.
    \item\label{itm:ball_c_int_vanish}
      For any $\delta > 0$, $p \in \{1,\infty\}$, 
       $ \lim_{\eps \to 0} \norm{\ind_{\RR^n\setminus B_\delta(0)} \phi_\eps}_p = 0$.
  \end{enumerate}
\end{definition}
As $\eps \to 0$, the distribution with density $\phi_\eps$ converges to $\delta_0$, and if $f \in L^p(\RR^n)$ and is continuous, then convolution $\phi_\eps * f$ converges to $f$ both pointwise (\cref{prop:conv_pw_conv}) and in $L^p$ (\cref{prop:conv_lp_conv}).

Our definition is different from the one used in PDE theory \citep{hormander2015analysis}, where some $\phi \in C_c^\infty(\RR^n)$ is fixed and the family of mollifiers is generated by $\phi_\eps(x) \defeq \eps^{-n}\phi(x/\eps)$.
While in PDE theory mollifiers are typically used to improve the regularity of non-smooth functions, in our framework they are used to construct interaction energies that approximate the $\chi^2$ divergence.

We will use the following mollifiers. 
In the following definitions, we include normalizing constants $Z_\eps$ that ensure $\norm{\phi_\eps} = 1$. 
We do not write out $Z_\eps$ explicitly as they might not admit clean forms and they are only relevant in theory but not in our practical algorithm.
For $s > n$, the \emph{$s$-Riesz family of mollifiers} is defined as 
$ \phi^s_\eps(x) \defeq (\norm{x}^2_2 + \eps^2)^{-s/2} / Z^s_\eps$.
The \emph{Gaussian family of mollifiers} is defined as 
$\phi^g_\eps(x) \defeq \exp\left(-\nicefrac{\norm{x}_2^2}{2\eps^2}\right) / Z_\eps^g$.
The \emph{Laplace family of mollifiers} is defined as 
$\phi^l_\eps(x) \defeq \exp\left(-\nicefrac{\norm{x}_2}{\eps}\right) / Z_\eps^l$.
Since $\{\phi_\eps^g\}$ and $\{\phi_\eps^l\}$ correspond to the densities of centered Gaussian and Laplace random variables, they satisfy \cref{defn:mollifiers}.
\cref{prop:riesz_family_is_mollifier} proves that $\{\phi_\eps^s\}$ also satisfies \cref{defn:mollifiers}.

\subsection{Mollified Interaction Energies}
\begin{definition}[mollified interaction energy]
  Given a family of mollifiers $\{\phi_\eps\}_{\eps > 0}$ satisfying \cref{defn:mollifiers}, for each $\eps > 0$, define a symmetric kernel $W_\eps: \RR^n\times \RR^n \to [0,\infty]$ by
  \begin{equation}\label{eqn:weighted_kernel}
    W_\eps(x, y) \defeq 
\left\{
  \begin{array}{ll}
    \phi_\eps(x-y)(p(x)p(y))^{-1/2} & \text{ if } x,y\in X \\
    \infty & \text{ otherwise.}
  \end{array}
\right.
  \end{equation}
  Define the \emph{mollified interaction energy} (MIE), $\mathcal{E}_\eps: \mathcal{P}(\RR^n) \to [0, \infty]$, to be
\begin{align}
\label{eqn:mollified_energy}
\mathcal{E}_{\eps}(\mu) &\defeq
    \iint W_\eps(x, y) \dd \mu(x)\dd\mu(y).
\end{align}
\end{definition}
Intuitively, minimizing $\mathcal{E}_\eps(\mu)$ balances two forces: a repulsive force from $\phi_\eps(x - y)$ makes sure $\mu$ has a good spread, and an attractive force from $(p(x)p(y))^{-1/2}$ helps $\mu$ concentrate on high-density regions.
The exponent $-1/2$ is chosen to balance the two forces so that, as we will show in \cref{thm:lim_chi_square},  $\mathcal{E}_\eps(\mu)$ approximates the $\chi^2$ divergence with respect to $\mu^*$ for small $\eps$.

\subsection{Convergence to $\chi^2$-divergence}

Recall that the $\chi^2$-divergence between probability measures $P, Q$ is defined by
\begin{align}
  \begin{split}\label{eqn:chi_square_divergence}
    \chi^2 \infdivx{Q}{P} \defeq \left\{ \begin{array}{ll}
        \int \left(\frac{\dd Q}{\dd P} - 1\right)^2 \dd P & \text{ if } Q \ll P \\
        \infty & \text{ otherwise,}
      \end{array}
      \right.
  \end{split}
\end{align}
where $Q \ll P$ denotes absolute continuity of $Q$ with respect to $P$ with  density $\nicefrac{\dd Q}{\dd P}$. 
\begin{theorem}\label{thm:lim_chi_square}
  Suppose $\mu \in \cP(\RR^n)$ satisfies $\chi^2 \infdivx{\mu}{\mu^*} < \infty$. Then,
  $\mathcal{E}_\eps(\mu) < \infty$ for any $\eps > 0$.
  Furthermore, 
  \begin{align*}
    \lim_{\eps \to 0} \mathcal{E}_{\eps}(\mu) = \chi^2 \infdivx{\mu}{\mu^*} + 1.
  \end{align*}
\end{theorem}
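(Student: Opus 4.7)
The plan is to reduce the claim to the standard fact that convolution with a mollifier converges to the identity in $L^2$. Since $\chi^2(\mu\|\mu^*)<\infty$ forces $\mu \ll \mu^*$, and $\mu^*$ is supported in $X$ with Lebesgue density proportional to $p$, the measure $\mu$ admits a Lebesgue density $q$ supported in $X$. I would introduce the auxiliary function
$$
h(x) \defeq
\begin{cases}
q(x)\,p(x)^{-1/2} & x\in X,\\
0 & x\notin X,
\end{cases}
$$
and show that $h\in L^2(\RR^n)$ by the identity $\|h\|_2^2 = \int_X q^2/p\,dx$, which (after accounting for the normalization of $p$) equals $\chi^2(\mu\|\mu^*)+1$.

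Second, I would rewrite the MIE as a convolution pairing:
$$
\mathcal{E}_\eps(\mu) \;=\; \iint \phi_\eps(x-y)\,h(x)\,h(y)\,dx\,dy \;=\; \langle h,\, \phi_\eps * h\rangle_{L^2(\RR^n)}.
$$
Note this rewriting is valid because $\mu$ charges only $X$, so the $+\infty$ values of $W_\eps$ off $X\times X$ are irrelevant. From here, Young's convolution inequality gives $\|\phi_\eps * h\|_2 \le \|\phi_\eps\|_1 \|h\|_2 = \|h\|_2$, and Cauchy--Schwarz yields $\mathcal{E}_\eps(\mu) \le \|h\|_2^2 < \infty$, proving finiteness for every $\eps>0$.

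For the limit, I would invoke the mollifier convergence result \cref{prop:conv_lp_conv}: as $\eps\to 0$, $\phi_\eps * h \to h$ in $L^2(\RR^n)$. Cauchy--Schwarz then gives
$$
\bigl|\mathcal{E}_\eps(\mu) - \|h\|_2^2\bigr| \;=\; \bigl|\langle h,\,\phi_\eps * h - h\rangle\bigr| \;\le\; \|h\|_2\,\|\phi_\eps * h - h\|_2 \;\longrightarrow\; 0,
$$
so $\lim_{\eps\to 0}\mathcal{E}_\eps(\mu) = \|h\|_2^2 = \chi^2(\mu\|\mu^*) + 1$.

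The main technical step is the $L^2$ convergence $\phi_\eps * h \to h$, which is exactly the content of \cref{prop:conv_lp_conv}; the only nontrivial hypothesis needed there is $h\in L^2(\RR^n)$, supplied by the finiteness of the $\chi^2$ divergence. The remaining work is bookkeeping: verifying that $q$ exists and is supported in $X$, justifying Fubini to rewrite the double integral as an $L^2$-inner product (valid by nonnegativity of the integrand), and tracking the normalization of $p$ in the identification $\|h\|_2^2 = \chi^2(\mu\|\mu^*)+1$.
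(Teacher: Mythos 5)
Your proposal is correct and follows essentially the same route as the paper: both reduce the energy to the $L^2$ pairing $\langle h,\phi_\eps*h\rangle$ with $h=q/\sqrt{p}$ extended by zero off $X$, identify $\|h\|_2^2=\chi^2\infdivx{\mu}{\mu^*}+1$, and conclude via the $L^2$ mollifier convergence of \cref{prop:conv_lp_conv} together with Cauchy--Schwarz. The only cosmetic difference is that you get finiteness from Young's inequality while the paper bounds $\abs{\cE_\eps(\mu)-\|h\|_2^2}$ directly; both rest on the same ingredients.
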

We provide a succinct proof of \cref{thm:lim_chi_square} in \cref{sec:proof_lim_chi_square} using the theory of mollifiers developed in \cref{sec:app_mollifier}.
In \cref{rmk:hausdorff_d} we discuss an extension of \cref{thm:lim_chi_square} to cases when $X$ is not full-dimensional; in particular \cref{thm:lim_chi_square} still holds when $X$ is ``flat'', i.e., has Hausdorff dimension $d < n$ and is contained in a $d$-dimensional linear subspace.

\subsection{Convexity and $\Gamma$-convergence} \label{sec:gamma_conv}
We next study properties of the minimizers of MIEs: Does $\min_{\mu\in\mathcal{P}(X)}\mathcal{E}_\eps(\mu)$ admit a unique minimum? If so, do minima of $\{\mathcal{E}_\eps\}_{\eps>0}$ converge to $\mu^*$ as $\eps \to 0$?
In order to answer affirmatively to these questions, we will need the associated kernel $k_\eps(x,y) \defeq \phi_\eps(x-y)$ to satisfy the following property.
\begin{definition}[i.s.p.d. kernel]
A symmetric lower semicontinuous (l.s.c.) kernel $K$ on $X \times X$ is \emph{integrally strictly positive definite} (i.s.p.d.) if for every finite signed Borel measure $\nu$ on $X$, the energy 
$
  \mathcal{E}_K(\nu) \defeq \iint K(x, y) \dd(\nu \times \nu)(x, y)
$
is well-defined (i.e., the integrals over the negative and positive parts of $\nu$ are not both infinite), and $\mathcal{E}_K(\nu) \ge 0$ where the equality holds only if $\nu = 0$ on all Borel sets of $X$.
\end{definition}
For the mollifiers we consider, the associated kernel $k_\eps(x,y) = \phi_\eps(x-y)$ is i.s.p.d. on any compact set (\citet[Example 1.2]{pronzato2021minimum}).

\begin{proposition}\label{prop:energy_cvx}
  Suppose $X$ is compact and $k_\eps(x,y) \defeq \phi_\eps(x-y)$ is i.s.p.d. on $X$.
  Then,
  \begin{enumerate}[label=(\alph*)]
    \item\label{itm:ispd_kernel}
      The kernel $W_\eps(x, y)$ defined in \eqref{eqn:weighted_kernel} also i.s.p.d on $X$.
    \item\label{itm:energy_cvx}
      The functional $\mathcal{E}_\eps$ is strictly convex on $\cMsigned(X)$ and attains a unique minimum on $\mathcal{P}(X)$.
  \end{enumerate}
\end{proposition}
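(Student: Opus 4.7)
\begin{proofsketch}
The plan is to reduce both statements to the hypothesis that $k_\eps(x,y) = \phi_\eps(x-y)$ is i.s.p.d.\ on $X$, by absorbing the factor $(p(x)p(y))^{-1/2}$ into the signed measure.

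For part \ref{itm:ispd_kernel}, first I would verify lower semicontinuity of $W_\eps$: since $X$ is compact, $\phi_\eps \in C^1$, and $p \in C^1(X)$ with $p > 0$, the function $(x,y) \mapsto p(x)^{-1/2}p(y)^{-1/2}$ is continuous and bounded on $X \times X$, so $W_\eps$ is continuous (hence l.s.c.) on $X \times X$ and takes the value $\infty$ off of it. For the positivity, given a finite signed Borel measure $\nu$ on $X$, I would define a new finite signed Borel measure $\tilde\nu$ by $\dd\tilde\nu(x) = p(x)^{-1/2}\dd\nu(x)$; this is finite because $p^{-1/2}$ is bounded on the compact set $X$. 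A direct application of Fubini (valid because the kernels are bounded on $X \times X$) gives
\begin{equation*}
  \iint W_\eps(x,y)\dd(\nu\times\nu)(x,y) = \iint k_\eps(x,y)\dd(\tilde\nu\times\tilde\nu)(x,y),
\end{equation*}
so the i.s.p.d.\ property of $k_\eps$ transfers: the right-hand side is $\ge 0$, with equality only if $\tilde\nu=0$ on all Borel sets, which, since $p^{-1/2}>0$ pointwise, forces $\nu=0$ as well.

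For part \ref{itm:energy_cvx}, I would exploit the bilinear structure of $\mathcal{E}_\eps$. Writing $B(\mu,\nu)\defeq \iint W_\eps\, \dd\mu\, \dd\nu$, which is finite whenever $\mu,\nu$ are finite signed measures on compact $X$ because $W_\eps$ is bounded there, a standard polarization for $\mu_t = (1-t)\mu_1 + t\mu_2$ gives
\begin{equation*}
  (1-t)\mathcal{E}_\eps(\mu_1) + t\mathcal{E}_\eps(\mu_2) - \mathcal{E}_\eps(\mu_t) = t(1-t)\,B(\mu_1-\mu_2,\mu_1-\mu_2).
\end{equation*}
By part \ref{itm:ispd_kernel} applied to the signed measure $\mu_1 - \mu_2 \in \cMsigned(X)$, the right-hand side is strictly positive whenever $\mu_1 \neq \mu_2$, giving strict convexity on $\cMsigned(X)$.

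For the existence of a minimum on $\mathcal{P}(X)$, I would appeal to the direct method: $\mathcal{P}(X)$ is weakly compact by Prokhorov's theorem since $X$ is compact, and $\mathcal{E}_\eps$ is weakly continuous on $\mathcal{P}(X)$ because $W_\eps$ is continuous and bounded on $X \times X$ (so $\mu \mapsto \iint W_\eps\, \dd\mu\, \dd\mu$ is continuous for the product weak topology). Hence a minimizer exists, and uniqueness follows from strict convexity restricted to $\mathcal{P}(X)$. I do not anticipate any deep obstacle; the main subtlety is simply to make sure that multiplying by $(p(x)p(y))^{-1/2}$ — which is well-defined and bounded precisely because $X$ is compact and $p$ is continuous and strictly positive — preserves both the bilinear and the i.s.p.d.\ structure, which is what the change-of-measure trick cleanly handles.
\end{proofsketch}
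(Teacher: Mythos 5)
Your proposal is correct and takes essentially the same route as the paper: part \ref{itm:ispd_kernel} is proved by the identical change-of-measure trick $\dd\tilde\nu(x) = p(x)^{-1/2}\dd\nu(x)$, relying on $p$ being bounded below on the compact set $X$ so that $\tilde\nu$ is finite and $\tilde\nu = 0 \iff \nu = 0$. For part \ref{itm:energy_cvx} the paper outsources strict convexity to \citet[Lemma 1.1]{pronzato2021minimum} and existence to a separate lemma (Prokhorov's theorem plus lower semicontinuity of $\cE_\eps$); your explicit polarization identity and direct-method argument via weak continuity of the bounded continuous kernel are just self-contained versions of those same two steps.
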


We next show the convergence of minima of $\{\cE_\eps\}$ to $\mu^*$ as a consequence of $\Gamma$-convergence of the sequence $\{\cE_\eps\}$.
\begin{theorem}\label{thm:gamma_conv}
  Suppose $k_\eps(x,y)\defeq \phi_\eps(x-y)$ is i.s.p.d. on compact sets for every $\eps > 0$.
  Then we have $\Gamma$-convergence (\cref{defn:gamma_conv}) $\cE_\eps \gammaconv \chi^2\infdivx{\cdot}{\mu^*} + 1$ as $\eps \to 0$.
  In particular, if $X$ is compact, if we denote $\mu_\eps^* \defeq \argmin_{\mu \in \cP(X)} \cE_\eps(\mu)$, then $\mu^*_\eps \to \mu^*$ weakly and $\lim_{\eps \to 0} \cE_\eps(\mu_\eps^*) = 1$.
\end{theorem}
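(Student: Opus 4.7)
My plan is to verify the two defining inequalities of $\Gamma$-convergence—a recovery-sequence ($\limsup$) bound and a $\liminf$ bound—and then deduce the minimizer statement via the fundamental theorem of $\Gamma$-convergence, using weak compactness of $\cP(X)$ when $X$ is compact.

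The recovery sequence is immediate from \cref{thm:lim_chi_square}: given $\mu\in\cP(X)$, take the constant sequence $\mu_\eps\equiv\mu$, so $\cE_\eps(\mu)\to\chi^2\infdivx{\mu}{\mu^*}+1$ whenever the right-hand side is finite (and the $\limsup$ bound is vacuous otherwise).

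The main work is the liminf. Given $\mu_\eps\to\mu$ weakly, I would linearize the quadratic functional $\cE_\eps$ using the i.s.p.d.\ property of $W_\eps$ from \cref{prop:energy_cvx}\ref{itm:ispd_kernel}: for any test measure $\nu\in\cP(X)$,
\[
0 \;\le\; \iint W_\eps\,d(\mu_\eps-\nu)\,d(\mu_\eps-\nu) \;=\; \cE_\eps(\mu_\eps) - 2\iint W_\eps\,d\mu_\eps\,d\nu + \cE_\eps(\nu),
\]
which rearranges to $\cE_\eps(\mu_\eps)\ge 2\iint W_\eps(x,y)\,d\mu_\eps(x)\,d\nu(y)-\cE_\eps(\nu)$. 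I would restrict to $\nu$ with a continuous bounded density $\rho_\nu$ satisfying $\chi^2\infdivx{\nu}{\mu^*}<\infty$. Then \cref{thm:lim_chi_square} gives $\cE_\eps(\nu)\to\chi^2\infdivx{\nu}{\mu^*}+1$; the inner integral in the cross term equals $p(x)^{-1/2}\bigl(\phi_\eps*(\rho_\nu/\sqrt{p})\bigr)(x)$, which tends uniformly on compacta to $\rho_\nu(x)/p(x)$ by the standard mollifier results from \cref{sec:app_mollifier} (using continuity of $\rho_\nu/\sqrt{p}$). Combining with weak convergence of $\mu_\eps$ tested against this continuous bounded function yields $\iint W_\eps\,d\mu_\eps\,d\nu\to\int(\rho_\nu/p)\,d\mu$. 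Passing to the liminf and then the supremum over admissible $\nu$ (reparametrized through $f\defeq\rho_\nu/p$), one obtains
\[
\liminf_{\eps\to 0}\cE_\eps(\mu_\eps) \;\ge\; \sup_\nu \Bigl\{2\!\int\!\tfrac{\rho_\nu}{p}\,d\mu-\chi^2\infdivx{\nu}{\mu^*}-1\Bigr\} \;=\; \chi^2\infdivx{\mu}{\mu^*}+1,
\]
where the last equality invokes the variational characterization $\chi^2\infdivx{\mu}{\mu^*}+1=\sup_f\{2\int f\,d\mu-\int f^2\,d\mu^*\}$, the restriction to continuous $\rho_\nu$ being justified by a density/mollification argument. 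The minimizer statement then follows from the fundamental theorem of $\Gamma$-convergence: by weak compactness of $\cP(X)$, any weak limit point $\bar\mu$ of $\{\mu_\eps^*\}$ satisfies $\chi^2\infdivx{\bar\mu}{\mu^*}+1\le\liminf_\eps\cE_\eps(\mu_\eps^*)\le\limsup_\eps\cE_\eps(\mu^*)=1$, which forces $\bar\mu=\mu^*$ (the unique minimizer) and $\cE_\eps(\mu_\eps^*)\to 1$.

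I expect the main obstacle to be the cross-term limit: making the chain \emph{uniform convergence of $\phi_\eps*(\rho_\nu/\sqrt p)$ on compacta} $+$ \emph{weak convergence of $\mu_\eps$} rigorous forces the restriction to $\nu$ with continuous bounded density, and then one must verify that this restricted class still realizes the variational supremum equal to $\chi^2\infdivx{\mu}{\mu^*}+1$. The verification is particularly delicate when $\mu$ is singular with respect to $\mu^*$, so that $\chi^2\infdivx{\mu}{\mu^*}=\infty$ and the approximating measures $\nu$ must drive the cross term to $+\infty$ along a carefully chosen sequence of continuous densities.
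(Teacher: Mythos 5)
Your recovery sequence and your final minimizer argument coincide with the paper's. The liminf inequality is where you genuinely diverge: the paper proves it by cutting $\mu_\eps$ off with a compactly supported weight $h_m\,p^{-1/2}$, rewriting the resulting energy via Bochner's theorem as $\int \abs{\hat\nu_{\eps,m}(\xi)}^2\hat\phi_\eps(\xi)\dd\xi$, and concluding with Fatou, Plancherel, and monotone convergence; you instead linearize the quadratic form around a test measure $\nu$ and recover $\chi^2$ through the variational characterization $\sup_f\{2\int f\dd\mu-\int f^2\dd\mu^*\}$. This duality route is a legitimate alternative and is more transparent about why the limit is the $\chi^2$ divergence, but as written it has two concrete gaps.

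First, the cross-term limit fails as stated when $X$ is unbounded. The function $g_\eps(x)=p(x)^{-1/2}\bigl(\phi_\eps*(\rho_\nu/\sqrt{p})\bigr)(x)$ that you test $\mu_\eps$ against is in general \emph{not} bounded: the convolution is strictly positive everywhere (the Riesz and Gaussian mollifiers have full support), while $p^{-1/2}(x)$ can grow faster than the convolution decays as $\norm{x}_2\to\infty$ (e.g., Gaussian $p$ with a Riesz mollifier), so the product is unbounded no matter how nice $\rho_\nu$ is. Weak convergence of $\mu_\eps$ plus uniform convergence on compacta therefore does not yield $\int g_\eps\dd\mu_\eps\to\int(\rho_\nu/p)\dd\mu$; you would need to truncate $\mu_\eps$ to a compact set before linearizing---which is exactly the role of the cutoff $h_m$ in the paper's proof---and then remove the truncation at the end. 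Relatedly, \cref{prop:energy_cvx}\ref{itm:ispd_kernel} is proved only for compact $X$, and for unbounded $X$ the reweighted measure $p^{-1/2}\dd\mu_\eps$ need not be finite, so the nonnegativity of $\iint W_\eps\dd(\mu_\eps-\nu)\dd(\mu_\eps-\nu)$ requires its own justification. Second, you defer precisely the hard half of the duality step: showing that test functions of the restricted form $f=\rho_\nu/p$, with $\rho_\nu$ a continuous bounded probability density, still realize $\sup_f\{2\int f\dd\mu-\int f^2\dd\mu^*\}=\chi^2\infdivx{\mu}{\mu^*}+1$---in particular driving the supremum to $+\infty$ when $\mu$ is singular with respect to $\mu^*$---is not routine and is not carried out. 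The paper's Fourier argument sidesteps both issues (Plancherel identifies the limit with no supremum over test functions, and $\int\abs{\hat\nu_m}^2=\infty$ automatically covers the singular case), so if you keep the duality route you should expect to import the paper's truncation device and supply a genuine approximation argument for the supremum.
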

We prove \cref{thm:gamma_conv} in \cref{sec:proof_cvx_gamma} using Fourier transforms and Bochner's theorem.
\cref{thm:gamma_conv} provides basis for minimizing $\mathcal{E}_\eps$ for a small $\eps$, since its unique minimum will be a good approximation of $\mu^*$.

\subsection{Differential calculus of $\mathcal{E}_\eps$ in $\mathcal{P}_2(\RR^n)$}\label{sec:diff_calc}
We next study the gradient flow of $\mathcal{E}_\eps$ in Wasserstein space $\mathcal{P}_2(\RR^n)$ for $X = \RR^n$.
Understanding the gradient flow of a functional often provides insights into the convergence of algorithms that simulates gradient flow with time discretization \citep[Chapter 11]{ambrosio2005gradient} or spatial discretization \citep{chizat2022sparse}.
Proofs of this section are given in \cref{sec:diff_calc_proofs}.

Let $\mathcal{F}:\mathcal{P}_2(\RR^n)\to \RR$ be a functional.
The \emph{Wasserstein gradient flow} of $\mathcal{F}$ \citep[Definition 11.1.1]{ambrosio2005gradient} is defined as the solution $\{\mu_t\}_{t\ge 0}$ of the PDE:
$\frac{\partial \mu_t}{\partial t} = \div(\mu_t \bm{w}_{\mathcal{F},\mu_t})$
where $\bm{w}_{\mathcal{F},\mu} \in L^2(\mu;\RR^n)$ is a \emph{Frech\'{e}t subdifferential} of $\mathcal{F}$ at $\mu$. 
Intuitively, gradient flows capture the evolution of the variable being optimized if we were to do gradient descent\footnote{The more precise notion is of minimizing movements \citep[Chapter 2]{ambrosio2005gradient}.} on $\mathcal{F}$ when the step sizes go to $0$.
We next show that the gradient flow of $\mathcal{E}_\eps$ agrees with that of $\chi^2\infdivx{\cdot}{\mu^*}$ in the sense that their subdifferentials coincide as $\eps \to 0$.

\begin{proposition}\label{prop:W2_subdiff_energy}
  Assume $\mu \in \cP_2(\RR^n)$ has density $q \in C_c^1(\RR^n)$\footnote{We assume that $\mu$ has compact support because $W_\eps$ can be unbounded and cause integrability issues due to the presence of $(p(x)p(y))^{-1/2}$ term.}. 
  Then any strong Frech\'{e}t subdifferential (\cref{defn:strong_subdiff}) of $\mathcal{E}_\eps$ at $\mu$ takes the form
  \begin{align}\label{eqn:W2_subdiff_energy}
    \bm{w}_{\epsilon,\mu}(x) = 2\nabla \left(p(x)^{-1/2}(\phi_\eps * q/\sqrt{p})(x) \right),\text{ for } \mu\text{-a.e. } x \in \RR^n.
  \end{align}
  Moreover, if for sufficiently small $\eps$, $\phi_\eps$ has compact support independent of $\eps$, then
  \begin{align}\label{eqn:subdiff_conv}
    \lim_{\eps \to 0} \bm{w}_{\eps,\mu}(x) = \bm{w}_{\chi^2,\mu}(x), \text{ for } \mu\text{-a.e. } x \in \RR^n,
  \end{align}
  where $\bm w_{\chi^2,\mu}$ is a strong Frech\'{e}t subdifferential of $\chi^2\infdivx{\cdot}{\mu^*}$.
\end{proposition}
While simulating $\chi^2$ divergence gradient flow is often intractable \citep[Section 3.3]{trillos2020bayesian}, our MIE admits a straightforward practical algorithm (\cref{sec:algorithm}).

We next show that $\mathcal{E}_\eps$ is \emph{displacement convex} at $\mu^* \in \cP_2(\RR^n)$ as $\eps \to 0$, obtaining a result similar to \citet[Corollary 4]{korba2021kernel}.
This hints that gradient flow initialized near $\mu^*$ will have fast convergence.
\begin{proposition}\label{prop:displacement_cvx_mu_star}
  Suppose $p \in C^2_c(\RR^n)$.
  Suppose that for sufficiently small $\eps$, $\phi_\eps$ has compact support independent of $\eps$.
  Assume $k_\eps(x,y) \defeq \phi_\eps(x-y)$ is i.s.p.d.
  Let $\bm\xi\in C_c^\infty(\RR^n;\RR^n)$ and $\mu_t \defeq (\bm I + t\bm\xi)_\# \mu^*$.
  Then 
  $
    \lim_{\eps \to 0}\dv[2]{}{t}\bigg|_{t=0} \mathcal{E}_\eps(\mu_t) \ge 0.
    $
\end{proposition}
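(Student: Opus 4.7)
The strategy is to express $\cE_\eps(\mu_t)$ as an $L^2$ inner product involving the ``square-root density'' $u_t \defeq \rho_t/\sqrt{p}$ (with $\rho_t$ the Lebesgue density of $\mu_t$), differentiate twice in $t$ while keeping the mollifier structure explicit, take $\eps\to 0$ using the standard $L^2$ mollifier approximation, and recognize the resulting limit as the second derivative of $\chi^2\infdivx{\mu_t}{\mu^*}+1$ at $t=0$. The latter is nonnegative because $\mu^*$ is the global minimizer of $\chi^2\infdivx{\cdot}{\mu^*}$ and the curve $t \mapsto \mu_t$ is smooth with $\mu_0 = \mu^*$.

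Concretely, since $\bm\xi \in C_c^\infty(\RR^n;\RR^n)$, the map $T_t \defeq \bm I + t\bm\xi$ is a $C^\infty$ diffeomorphism for $|t|$ sufficiently small, and $\mu_t = (T_t)_\#\mu^*$ admits a compactly supported $C^2$ density $\rho_t$ depending smoothly on $t$. Setting $u_t \defeq \rho_t/\sqrt p$ gives $\cE_\eps(\mu_t) = \iint \phi_\eps(x-y)\,u_t(x) u_t(y)\dd x\dd y = \langle \phi_\eps * u_t,\, u_t\rangle_{L^2}$. The symmetry $\phi_\eps(x)=\phi_\eps(-x)$ from \cref{defn:mollifiers} makes $f\mapsto \phi_\eps*f$ a self-adjoint bounded operator on $L^2$ that commutes with $\partial_t$, so differentiating twice and using that $u_t, \dot u_t, \ddot u_t$ are all $C^2$ with a common compact support yields
\[
\dv[2]{}{t}\bigg|_{t=0}\cE_\eps(\mu_t) = 2\langle \phi_\eps * \dot u_0,\, \dot u_0\rangle_{L^2} + 2\langle \phi_\eps * \ddot u_0,\, u_0\rangle_{L^2},
\]
where dots denote $t$-derivatives at $0$. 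Because $u_0, \dot u_0, \ddot u_0 \in L^2(\RR^n)$, \cref{prop:conv_lp_conv} gives $\phi_\eps * f\to f$ in $L^2$ for each fixed $f$, hence
\[
\lim_{\eps \to 0} \dv[2]{}{t}\bigg|_{t=0}\cE_\eps(\mu_t) = 2\|\dot u_0\|_{L^2}^2 + 2\langle \ddot u_0,\, u_0\rangle_{L^2}.
\]

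This right-hand side is precisely $\dv[2]{}{t}|_{t=0}\|u_t\|_{L^2}^2$, and the same algebra that underlies \cref{thm:lim_chi_square} shows $\|u_t\|_{L^2}^2 = \chi^2\infdivx{\mu_t}{\mu^*}+1$; since $\mu^*$ globally minimizes $\chi^2\infdivx{\cdot}{\mu^*}$ and $t\mapsto \mu_t$ is a smooth curve through $\mu^*$, this second derivative is $\ge 0$. One can also check nonnegativity by hand: the continuity equation gives $\dot u_0 = -\div(p\bm\xi)/\sqrt p$, while $\langle \ddot u_0, u_0\rangle_{L^2} = \int \ddot\rho_0 \dd x = \partial_t^2\bigl(\int \rho_t \dd x\bigr)\big|_{t=0} = 0$ because $\rho_t$ is a probability density for all $t$, producing the explicit nonnegative expression $2\int (\div(p\bm\xi))^2/p\,\dd x$. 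The main technical obstacle is ensuring $\rho_t$ and its $t$-derivatives stay inside $\{p>0\}$ for small $|t|$ so that $u_t \in C^2_c\subset L^2$ is well defined, and justifying differentiation under the integral; both follow from compactness of $\supp \bm\xi$ together with $p\in C^2_c(\RR^n)$ via dominated convergence. The i.s.p.d. hypothesis is not strictly necessary for the limit itself, but it neatly guarantees the dominant term $\langle \phi_\eps * \dot u_0, \dot u_0\rangle_{L^2}\ge 0$ already at finite $\eps$, while the remaining term $\langle \phi_\eps * \ddot u_0, u_0\rangle_{L^2}\to 0$ by the same mollifier argument.
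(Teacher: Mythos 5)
Your route is genuinely different from the paper's. You write $\cE_\eps(\mu_t)=\langle \phi_\eps * u_t,\, u_t\rangle_{L^2}$ with $u_t=\rho_t/\sqrt p$, differentiate the quadratic form in $t$, send $\eps\to 0$ term by term via \cref{prop:conv_lp_conv}, and land on the explicit nonnegative limit $2\norm{\dot u_0}_{L^2}^2=2\int (\div(p\bm\xi))^2/p\,\dd x$ (after killing $\langle\ddot u_0,u_0\rangle=\int\ddot\rho_0\,\dd x=0$). The paper never introduces the curve $t\mapsto u_t$: it uses the second-variation formula of \cref{lem:diff_calculus}, $\dv[2]{}{t}\big|_{t=0}\cE_\eps(\mu_t)=2(F_\eps+G_\eps)$, shows $F_\eps=\iint \div(\bm\xi p)(x)\,W_\eps(x,y)\,\div(\bm\xi p)(y)\dd x\dd y\ge 0$ for \emph{every} $\eps$ by integration by parts and the i.s.p.d.\ property from \cref{prop:energy_cvx}, and shows $G_\eps\to 0$ by a Leibniz computation reducing to $\hess\big(p^{-1/2}(\phi_\eps*\sqrt p)\big)\to\hess(1)=0$. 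Your version buys an exact identification of the limit (hence existence of the limit, not merely a sign) and correctly observes that the i.s.p.d.\ hypothesis is not needed for the limiting statement; the paper's version buys a sign on the interaction term already at finite $\eps$ and, crucially, only ever integrates against the fixed measure $\mu^*\otimes\mu^*$.

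That last point is where your write-up has a genuine gap. You need $u_t=\rho_t/\sqrt p$ to be a well-defined $L^2$ function for all $t$ near $0$, and you assert this ``follows from compactness of $\supp\bm\xi$ together with $p\in C^2_c(\RR^n)$.'' It does not: $\supp\rho_t=(\bm I+t\bm\xi)(\supp p)$, and if $\bm\xi$ points outward on $\partial\supp p$ this set is not contained in $\{p>0\}$, so $\rho_t/\sqrt p\notin L^2$ and $\chi^2\infdivx{\mu_t}{\mu^*}=\infty$ for every $t\ne 0$. In that case the ``second derivative at a global minimum'' argument says nothing (the function is $+\infty$ off $t=0$), and differentiating $t\mapsto u_t$ as an $L^2$-valued curve is vacuous. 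The repair is to avoid $u_t$ for $t\neq 0$ altogether---differentiate the composed integrand $W_\eps(x+t\bm\xi(x),\,y+t\bm\xi(y))$ against $\dd\mu^*(x)\dd\mu^*(y)$ as in \cref{lem:diff_calculus}---and observe that the only objects your limit actually uses, $u_0=\sqrt p$ and $\dot u_0=-\div(p\bm\xi)/\sqrt p$, are well defined and in $L^2$ (using that $\abs{\nabla p}\le C\sqrt p$ for nonnegative $C^2$ functions). With that rerouting, your computation and conclusion are correct and in fact recover the paper's limiting expression for $F_\eps$ exactly.
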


\section{A practical sampling algorithm}\label{sec:algorithm}
\vspace*{-0.1in}
We now present a first-order particle-based algorithm for constrained and unconstrained sampling by minimizing a discrete version of \eqref{eqn:mollified_energy}.
Substituting an empirical distribution for $\mu$ in \eqref{eqn:mollified_energy} gives the discrete mollified interaction energy, for $N$ particles $\omega_N = \{x_1,\ldots, x_N\}$,
\begin{align}
  E_\eps(\omega_N) \defeq \frac{1}{N^2} \sum_{i=1}^N \sum_{j=1}^N \phi_\eps(x_i - x_j)(p(x_i) p(x_j))^{-1/2}. \label{eqn:unnorm_disc_energy}
\end{align}

Denote $\omega^*_{N,\eps} \in \argmin_{\omega_N \subset X} E_\eps(\omega_N)$ and $\mu^*_\eps = \argmin_{\mu \in \mathcal{P}_2(X)} \mathcal{E}_\eps(\mu)$,
If $X$ is compact, by \citet[Corollary 4.2.9]{borodachov2019discrete}, we have weak convergence 
$\delta_{\omega^*_{N,\eps}} \to \mu^*_\eps$ as $N \to \infty$.
If in addition $k_\eps(x,y) \defeq \phi_\eps(x-y)$ is i.s.p.d., then by \cref{thm:gamma_conv}, we have weak convergence $\mu^*_\eps \to \mu^*$.
This shows that minimizing \eqref{eqn:unnorm_disc_energy} with a large $N$ and  a small $\eps$ will result in an empirical distribution of particles that approximates $\mu^*$.
Our sampling method, mollified interaction energy descent (MIED), is simply running gradient descent on \eqref{eqn:unnorm_disc_energy} over particles (\cref{alg:mied}) with an update that resembles the one in SVGD (see the discussion in \cref{sec:cf_svgd}).
Below we address a few practical concerns.

\textbf{Optimization in the logarithmic domain.}
In practical applications, we only have access to the unnormalized target density $p$. 
The normalizing constant of the mollifier $\phi_\eps$ can also be hard to compute (e.g., for the Riesz family).
While these normalization constants do not affect the minima of \eqref{eqn:unnorm_disc_energy}, they can still affect gradient step sizes during optimization.
Moreover, $\phi_\eps$ can be very large when $\eps$ is small, and in many Bayesian applications $p$ can be tiny and only $\log p$ is numerically significant.
To address these issues, we optimize the logarithm of \eqref{eqn:unnorm_disc_energy} using the log-sum-exp trick \citep{blanchard2021accurately} to improve numerical stability and to get rid of the arbitrary scaling of the normalizing constants:
\begin{align}\label{eqn:log_disc_energy}
  \log E_\eps(\omega_N) \defeq \log \sum_{i=1}^N \sum_{j=1}^N \exp(\log \phi_\eps(x_i - x_j) - \frac{1}{2}(\log p(x_i) + \log p(x_j))) - 2\log N.
\end{align}

\textbf{Special treatment of the diagonal terms.}
Since $\phi_\eps$ goes to the Dirac delta as $\eps \to 0$, the discretization of \eqref{eqn:mollified_energy} on a neighborhood of the diagonal $\{(x,y): x=y\}$ needs to be handled with extra care.
In \eqref{eqn:unnorm_disc_energy} the diagonal appears as $\sum_{i=1}^N \phi_\eps(0)p(x_i)^{-1}$ which can dominate the summation when $\eps$ is small and then $\phi_\eps(0)$ becomes too large.
For the mollifiers that we consider, we use a different diagonal term $\sum_{i=1}^N \phi_\eps(h_i / \kappa_n)p(x_i)^{-1}$ where $h_i \defeq \min_{j\neq i} \norm{x_i - x_j}$ and $\kappa_n \ge 1$ is a constant depending only on the dimension $n$.
Since $0 \le \phi_\eps(h_i/\kappa_n) \le \phi_\eps(0)$, the energy obtained will be bounded between \eqref{eqn:unnorm_disc_energy} and the version of \eqref{eqn:unnorm_disc_energy} without the diagonal terms.
Hence by the proof of Theorem 4.2.2 of \citet{borodachov2019discrete}, the discrete minimizers of $E_\eps$ still converge to $\mu^*_\eps$ as $N \to \infty$ and $\eps \to 0$.
Empirically we found the choice $\kappa_n = (1.3n)^{1/n}$ works well for the Riesz family of mollifiers and we use the Riesz family primarily for our experiments.

\textbf{Constraint handling.}
If $X \neq \RR^n$, we need to minimize \eqref{eqn:log_disc_energy} subject to constraints $\omega_N = \{x_1,\ldots,x_N\} \subset X$.
Since the constraint is the same for each particle $x_i$, we want our algorithm to remain parallelizable across particles.

We consider two types of constraints: (a) there exists a differentiable map $f: \RR^n \to X$, with $\mathcal{L}_n(X \setminus f(\RR^n)) = 0$; (b) the set $X$ is given by $\{x \in \RR^n: g(x) \le 0\}$ for a differentiable $g: \RR^n \to \RR$.
For (a), we reduce the problem to unconstrained optimization in $\RR^n$ using objective $\log E_\eps(f(\omega_N))$ with $\omega_N = \{x_1,\ldots,x_N\} \subset \RR^n$ and $f(\omega_N) \defeq \{f(x_1), \ldots, f(x_N)\}$.
For (b), we apply the dynamic barrier method by \citet{gong2021bi} to particles in parallel; see \cref{sec:db} for details and an extension to handling multiple constraints.

\section{Experiments}\label{sec:experiments}
\vspace*{-0.1in}
We compare MIED with recent alternatives on unconstrained and constrained sampling problems.
Unless mentioned otherwise, we choose the $s$-Riesz family of mollifiers $\{\phi^s_\eps\}$ with $s = n + 10^{-4}$ and $\eps = 10^{-8}$: we found minimizing the MIE with such mollifiers results in well-separated particles so that we can take $\eps$ to be very small as our theory recommends.
This is not the case for the Gaussian or the Laplace family as setting $\eps$ too small can cause numerical issues even when particles are well-separated.
In \cref{sec:effect_of_s}, we compare different choices of $s$ on a constrained mixture distribution.

Unless mentioned otherwise: for SVGD \citep{liu2016stein}, we use the adaptive Gaussian kernel as in the original implementation (adaptive kernels can be prone to collapse samples---see \cref{sec:collapse});
for KSDD \citep{korba2021kernel}, we use a fixed Gaussian kernel with unit variance.
All methods by default use a learning rate of $0.01$ with Adam optimizer \citep{kingma2014adam}.
The source code can be found at \url{https://github.com/lingxiaoli94/MIED}.

\subsection{Unconstrained sampling}

\textbf{Gaussians in varying dimensions.}
We first compare MIED with SVGD and KSDD on Gaussians of varying dimensions and with different numbers of particles.
In \cref{fig:gaussian_plot}, we see that as the number of dimensions increases, MIED results in best samples in terms of $W_2$ distance (Wasserstein-2 distance computed using linear programming) with respect to $10^4$ i.i.d.\ samples, while SVGD yields lower energy distance \citep{szekely2013energy}. 
We think this is because MIED results in more evenly spaced samples (\cref{fig:2d_gaussian_vis}) so the $W_2$ distance is lower, but it is biased since $\eps > 0$ so for energy distance SVGD performs better.
More details can be found in \cref{sec:detailed_gaussian}.

\begin{figure}[htb]
  \centering
  \includegraphics[width=1.0\textwidth]{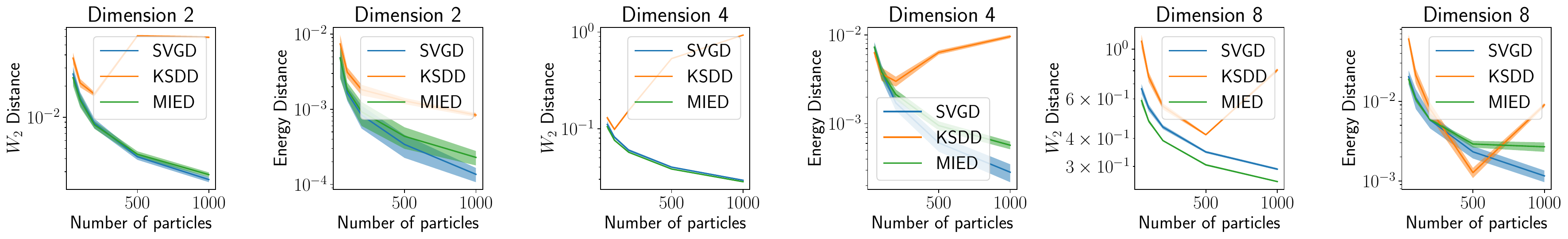}
  \caption{Gaussian experiments results in varying dimensions and different numbers of particles. For each method, we plot the metric ($W_2$ distance or energy distance) versus the number of particles, averaged over 10 trials (shaded region indicates standard deviation).}
  \label{fig:gaussian_plot}
\end{figure}

\textbf{Product of two Student's $t$-distributions.}
Next, we consider a 2-dimensional distribution constructed as the product of two independent $t$-distributions with the degree of freedom $\nu = 2$ composed with a linear transform.
Unlike Gaussians, Student's $t$-distributions have heavy tails.
On the left of \cref{fig:student_vis}, we visualize the results of each method with $1000$ samples after $2000$ iterations.
MIED captures the heavy tail while SVGD fails.
Quantitatively, on the right of \cref{fig:student_vis}, while SVGD captures the distribution in $[-a,a]^2$ better for $a \le 3$, MIED yields lower metrics for a bigger $a$. %
\begin{figure}[htb]
  \centering
  \includegraphics[align=c, width=0.6\textwidth]{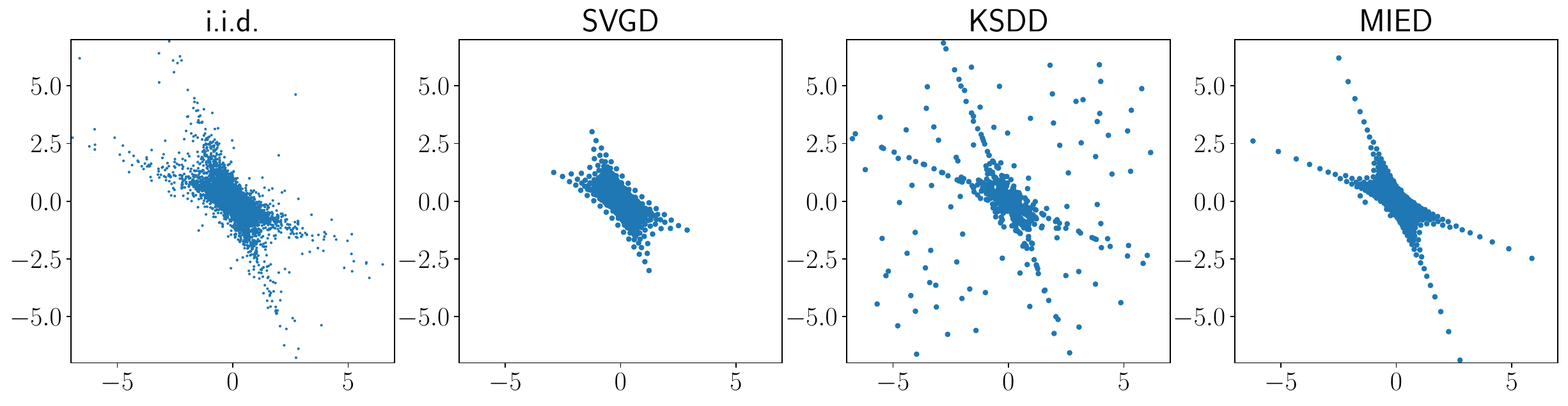}
  \includegraphics[align=c, width=0.3\textwidth]{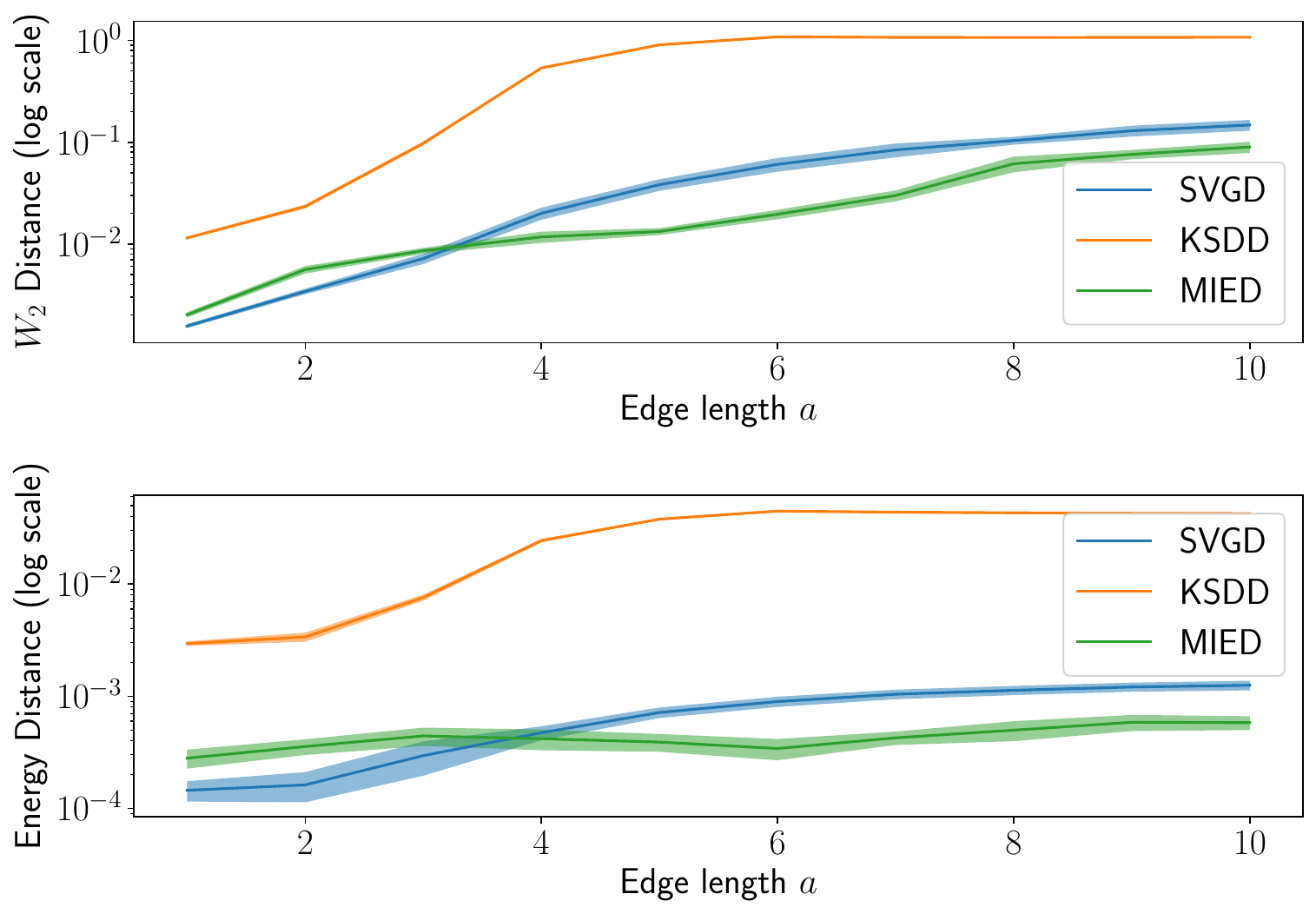}
  \caption{
    Left: visualization of samples from each method for the product of two Student's $t$-distribution (composed with a linear transform).
    Right: metrics of each method when restricting to $[-a,a]^2$.
As $t$-distributions have heavy tails, if we draw i.i.d. samples from the true product distribution, a small number of them will have large norms, making the computation of metrics unstable.
Thus we restrict both i.i.d. samples and the resulting samples from each method to $[-a,a]^2$ before computing the metrics for $a \in [2, 10]$. 
}
  \label{fig:student_vis}
\end{figure}

\textbf{Bayesian logistic regression.}
We compare MIED with SVGD and KSDD for the Bayesian logistic regression setting in \citet{liu2016stein}.
We further include a na\"{i}ve baseline, independent particle descent (IPD), which simply runs gradient descent independently on each particle to maximize the posterior probability.
In addition to using test accuracy as the metric, we include $W_2$ distance and energy distance between the samples from each method and $10^4$ samples from NUTS \citep{hoffman2014no} after sufficient burn-in steps. We summarize the results in Table \ref{tab:lr}. 
All methods, including the na\"{i}ve baseline IPD, are comparable in terms of test accuracy.
In other words, accuracy is not a good metric for comparing the quality of posterior samples.
Bayesian inference is typically preferred over maximum a posteriori estimation for its ability to capture uncertainty. 
When evaluating the quality of the uncertainty of the samples using distances between distributions, MIED provides the best approximation in terms of the $W_2$ distance and all methods (except IPD) are comparable in terms of the energy distance.

\begin{table}[htb]
  \scriptsize
	\label{tab:lr}
  \centering
	\begin{tabular}{lccccc}
		\toprule
		Dataset&NUTS&IPD&SVGD&KSDD&MIED\\
		\midrule
		banana ($d=3$)&0.55&-6.14/-3.58/\textbf{0.55}&-7.81/-5.24/\textbf{0.55}&\textbf{-8.24}/\textbf{-5.76}/\textbf{0.55}&-7.37/-5.06/\textbf{0.55}\\
		breast cancer ($d=10$)&0.64&-1.51/-1.03/\textbf{0.60}&-1.62/-2.06/\textbf{0.60}&-1.71/\textbf{-2.23}/\textbf{0.60}&\textbf{-1.99}/-2.18/\textbf{0.60}\\
		diabetis ($d=9$)&0.78&-2.18/-1.55/\textbf{0.77}&-3.09/-3.42/\textbf{0.77}&-2.91/\textbf{-3.90}/\textbf{0.77}&\textbf{-3.11}/-3.13/\textbf{0.77}\\
		flare solar ($d=10$)&0.59&3.30/2.65/0.48&6.91/4.09/0.52&\textbf{1.77}/\textbf{-0.08}/\textbf{0.55}&7.09/4.25/0.48\\
		german ($d=21$)&0.65&-1.80/-1.25/\textbf{0.65}&-1.89/-2.63/0.64&-1.27/\textbf{-2.83}/\textbf{0.65}&\textbf{-1.96}/-2.80/\textbf{0.65}\\
		heart ($d=14$)&0.87&-0.40/-0.56/\textbf{0.87}&-0.41/-1.50/\textbf{0.87}&-0.10/\textbf{-1.76}/\textbf{0.87}&\textbf{-0.92}/-1.67/\textbf{0.87}\\
		image ($d=19$)&0.82&6.53/4.31/\textbf{0.83}&7.17/4.01/\textbf{0.83}&2.16/-0.50/0.82&\textbf{1.14}/\textbf{-1.88}/0.82\\
		ringnorm ($d=21$)&0.77&-3.82/-2.45/\textbf{0.77}&\textbf{-4.11}/\textbf{-5.98}/\textbf{0.77}&1.07/-2.21/0.76&-4.03/-5.70/\textbf{0.77}\\
		splice ($d=61$)&0.85&\textbf{-1.47}/-1.18/\textbf{0.85}&-1.22/\textbf{-2.65}/\textbf{0.85}&2.04/-0.05/0.84&1.45/0.70/0.82\\
		thyroid ($d=6$)&0.84&1.95/0.53/\textbf{0.84}&1.17/-0.00/\textbf{0.84}&2.42/1.57/0.74&\textbf{0.84}/\textbf{-0.37}/\textbf{0.84}\\
		titanic ($d=4$)&0.40&\textbf{-1.59}/-0.16/\textbf{0.40}&-0.46/-0.31/\textbf{0.40}&-0.63/-0.39/\textbf{0.40}&-1.00/\textbf{-0.45}/\textbf{0.40}\\
		twonorm ($d=21$)&0.97&-1.21/-1.13/\textbf{0.97}&-1.32/-2.78/\textbf{0.97}&1.55/-0.62/\textbf{0.97}&\textbf{-1.44}/\textbf{-3.21}/\textbf{0.97}\\
		waveform ($d=22$)&0.77&-2.67/-1.87/\textbf{0.78}&-2.98/\textbf{-5.23}/\textbf{0.78}&-2.60/-4.18/0.77&\textbf{-3.09}/-3.17/\textbf{0.78}\\
		\bottomrule
	\end{tabular}

  \caption{Bayesian logistic regression results with 1000 particles. We include the test accuracy for NUTS in the second column.
    Three numbers A/B/C in the following columns are logarithmic $W_2$ distance, logarithmic energy distance, and test accuracy.
		Bold indicates the best numbers.
    We use $80\%/20\%$ training/test split.
		All methods are run with identical initialization and learning rate $0.01$.
		Results are reported after $10^4$ iterations.
  }
\end{table}

\subsection{Constrained sampling}

\textbf{Uniform sampling in 2D.}
We consider uniform sampling in the square $[-1,1]^2$ with $500$ particles.
We reparameterize our particles using $\tanh$ to eliminate the constraint and show results with various choices of mollifiers---we always choose the smallest $\eps$ while the optimization remains stable.
We compare our method with mirror LMC \citep{zhang2020wasserstein} and SVMD/MSVGD by \citet{shi2021sampling} using entropic mirror map $\phi(\theta) = \sum_{i=1}^n \left((1+\theta_i)\log(1+\theta_i) + (1-\theta_i)\log(1-\theta_i) \right)$.
To demonstrate that SVGD and KSDD break down in constrained domains, we implement these two methods adapted to the constrained setting using the same reparameterization as our method.
The initial particles are drawn uniformly from $[-0.5,0.5]^2$.
\begin{figure}[htb]
  \centering
  \includegraphics[align=c, width=0.40\textwidth, trim=0 0 13in 0, clip]{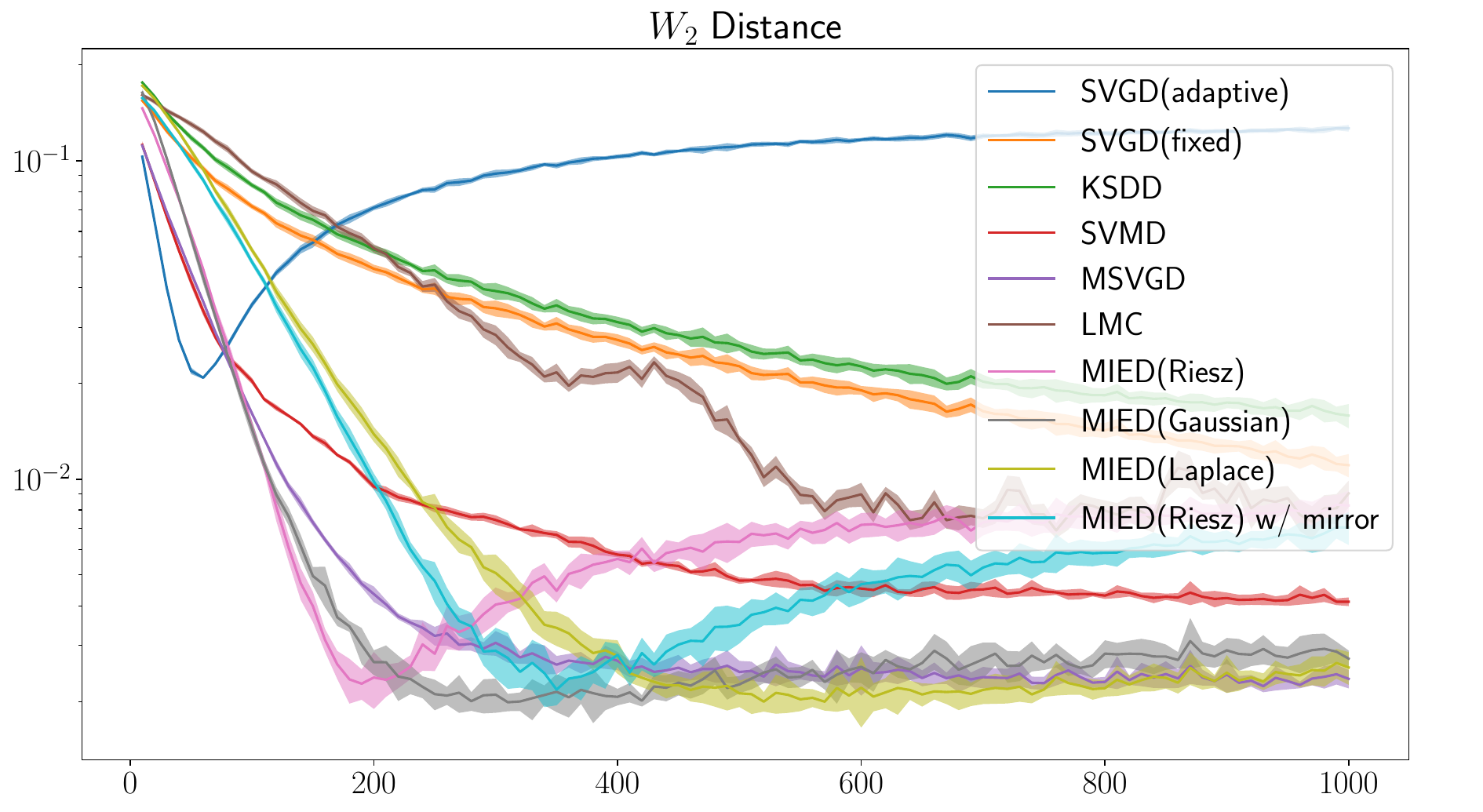}
  \includegraphics[align=c, width=0.50\textwidth]{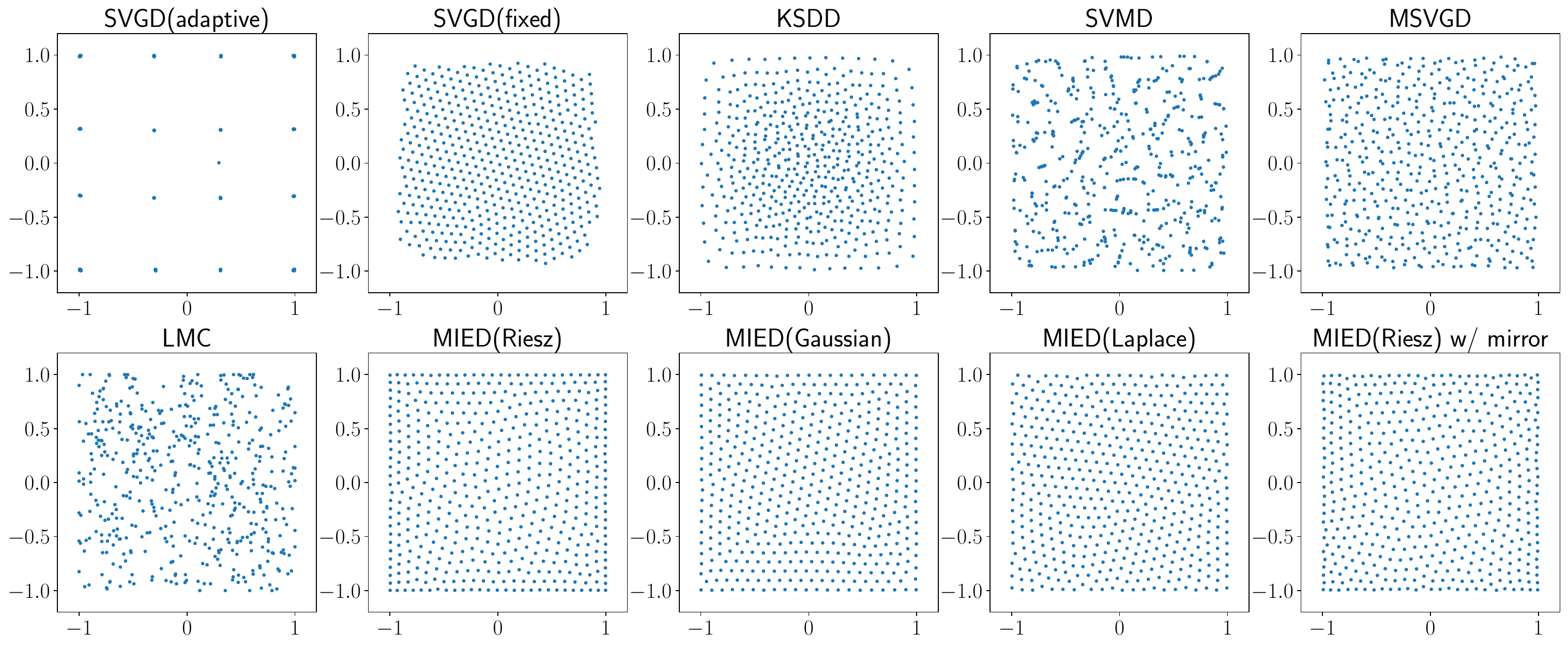}
  \caption{Convergence of metrics and visualization of samples for uniform sampling from a 2D box.}
  \label{fig:uniform_box}
\end{figure}
The left plot of \cref{fig:uniform_box} shows that quantitatively our method achieves much lower energy distance and $W_2$ distance (compared to a $5000$ i.i.d. samples uniformly drawn in $[-1, 1]^2$.).
On the right of \cref{fig:uniform_box}, we visualize the samples from each method.
We see that samples from SVGD with an adaptive kernel collapse---we investigate this issue in \cref{sec:collapse}. Samples from SVGD with a fixed kernel size and KSDD are non-uniform (we choose kernel sizes that produce the best result).
While SVMD creates locally clustered artifacts, MSVGD produces good results.
For mirror LMC, the resulting samples are not evenly spaced, resulting in worse $W_2$ distances.
When using $\tanh$ as the reparameterization map, MIED produces decent results for all three families of mollifiers (Riesz, Gaussian, Laplace).
When using the same entropic mirror map, MIED also produces good results.
This highlights the flexibility of our method with constraint handling, whereas, for LMC and SVMD/MSVGD, the mirror map has to be chosen carefully to be the gradient of a strongly convex function while capturing the constraints.

In \cref{sec:mirror_map_challenge}, we further test SVMD/MSVGD with a different choice of the mirror map  where they break down.
In \cref{sec:dirichlet_20d}, we conduct a similar comparison for sampling from a 20-dimensional Dirichlet distribution using the same setup as \citet{shi2021sampling}.
In scenarios where a good choice of a mirror map is available, SVMD/MSVGD can obtain better performance compared to MIED.
In \cref{sec:exotic_constraints}, we conduct additional qualitative experiments for MIED, demonstrating its effectiveness for challenging constraints and multi-modal distributions.

\textbf{Fairness Bayesian neural networks.}
We train fair Bayesian neural networks to predict whether the annual income of a person is at least $\$50,000$ with gender as the protected attribute using the \emph{Adult Income} dataset \citep{kohavi1996scaling}.
We follow the same setup as in \citet{liu2021sampling} where the dataset $\mathcal{D} = \{x^{(i)}, y^{(i)}, z^{(i)}\}_{i=1}^{|\mathcal{D}|}$ consists of feature vectors $x^{(i)}$, labels $y^{(i)}$ (whether the income is $\ge \$50,000$), and genders $z^{(i)}$ (protected attribute).
The target density is taken to be the posterior of logistic regression with a two-layer Bayesian neural network $\hat y(\cdot;\theta)$ with weights $\theta$, and we put a standard Gaussian prior on each entry of $\theta$ independently.
Given $t > 0$, the fairness constraint is $g(\theta) = (\cov_{(x, y, z) \sim \mathcal{D}}[z, \hat{y}(x;\theta)])^2 - t \le 0$.
On the left of \cref{fig:fairness_bnn}, we plot the trade-off curve of the result obtained using our method and the methods from \citet{liu2021sampling} for $t \in \{10^{-5}, 10^{-4}, 0.0001, 0.001, 0.002, 0.005, 0.01\}$.
Details can be found in \cref{sec:fairness_details}.
Our method recovers a much larger Pareto front compared to the alternatives.
On the right of \cref{fig:fairness_bnn}, we visualize the curves of the energy and the covariance versus the number of training iterations: as expected we see a smaller $t$ results in bigger MIE (lower log-likelihood) and smaller covariance between the prediction and the protected attribute.

\begin{figure}[htb]
  \centering
  \includegraphics[align=c, width=0.30\textwidth]{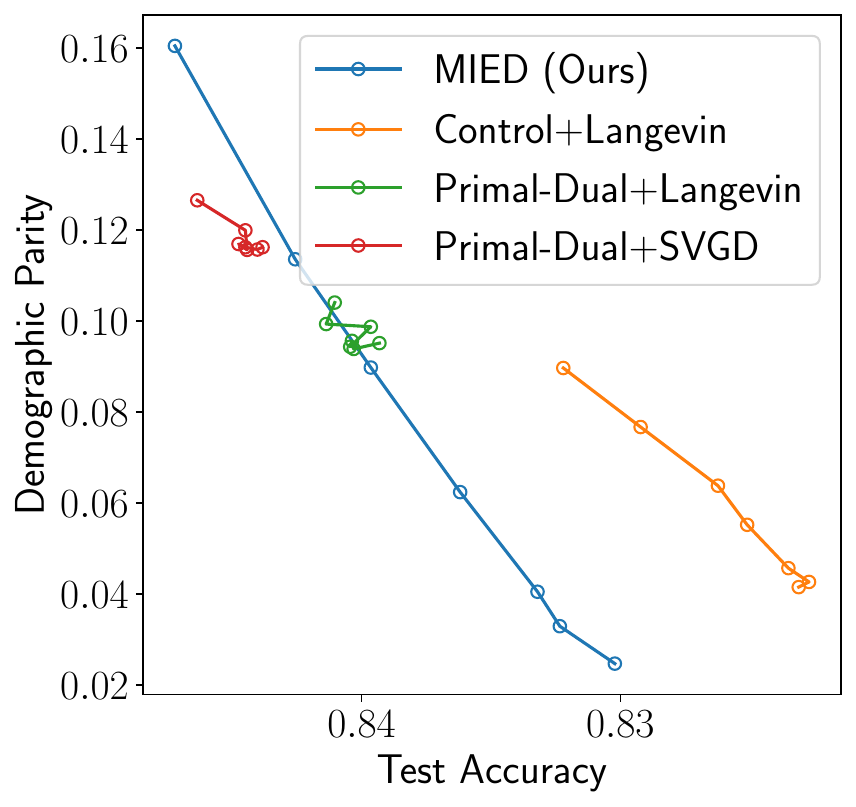}\hspace*{0.3in}
  \includegraphics[align=c, width=0.30\textwidth]{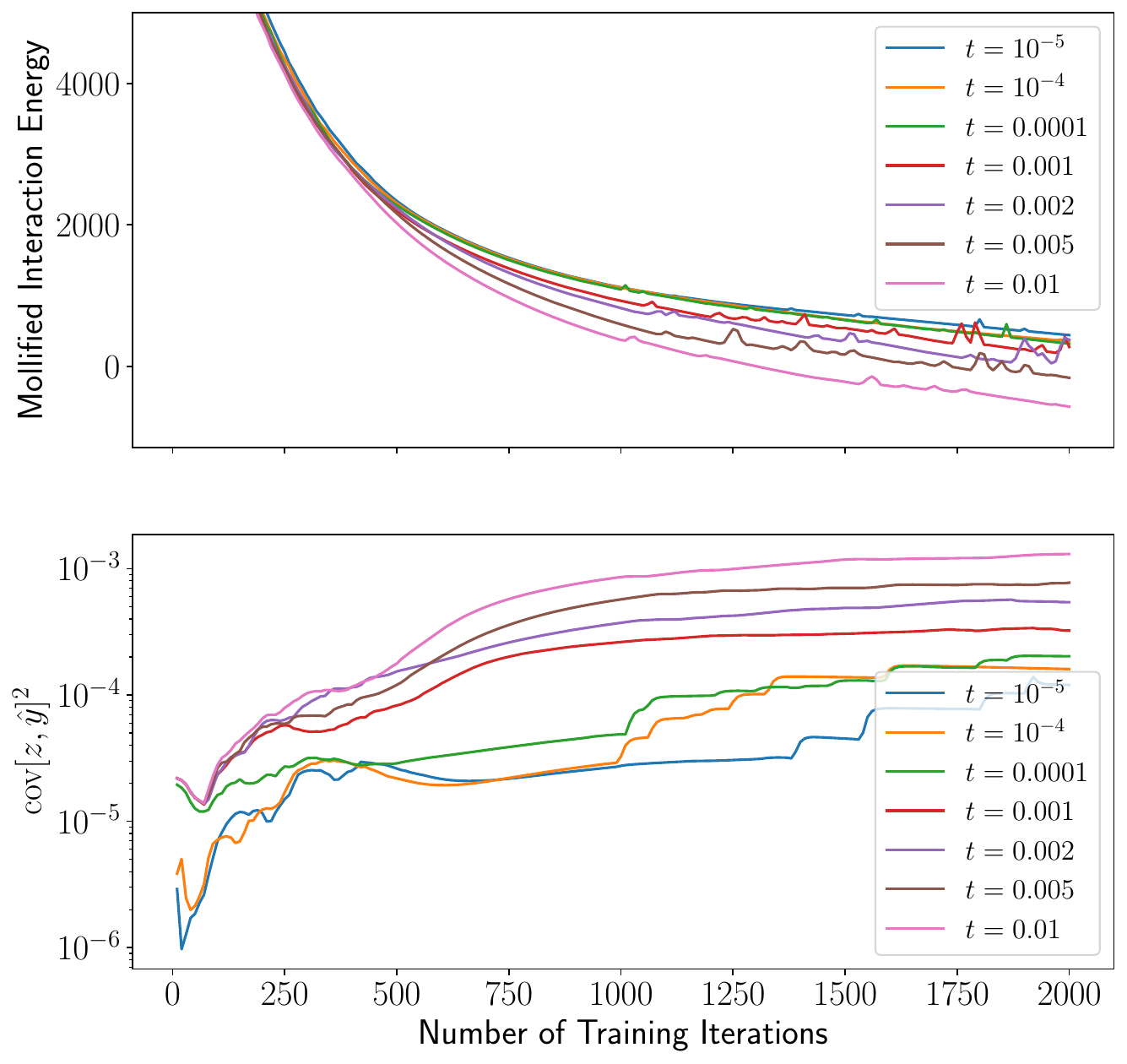}
  \caption{Left: trade-off curves of demographic parity versus accuracy on the test data for MIED and methods from \citet{liu2021sampling}.
  Right: MIEs and $(\cov_{(x, y, z) \sim \mathcal{D}}[z, \hat{y}(x;\theta)])^2$ (measured on the training data) versus the number of training iterations, for various $t$.}
  \label{fig:fairness_bnn}
\end{figure}

\section{Conclusion}
\vspace*{-0.1in}
We present a new sampling method by minimizing MIEs discretized as particles for unconstrained and constrained sampling.
This is motivated by the insight that MIEs converge to $\chi^2$ divergence with respect to the target measure as the mollifier parameter goes to 0.
The proposed method achieves promising results on the sampling problems we consider. 

Below we highlight three limitations.
First, as discussed in \cref{rmk:hausdorff_d}, our theory only applies when the domain is full-dimensional or flat. Extending our theory to handle cases where the domain is an arbitrary $d$-rectifiable set is an important next step as it allows the handling of more complicated constraints such as nonlinear equality constraints.
Secondly, when $\eps > 0$, the minimizer of MIE can be different from the target measure.
Finding a way to debias MIE (e.g., like how Sinkhorn distances are debiased \citep{feydy2019interpolating}) is an interesting direction.
Lastly, the connection between the minimizers of the discretized MIE \eqref{eqn:unnorm_disc_energy} and those of the continuous MIE \eqref{eqn:mollified_energy} is only established in the limit as $N \to \infty$. 
We hope to investigate how well the empirical distribution of particles minimizing \eqref{eqn:unnorm_disc_energy} approximates the target measure when $N$ is finite as in \cite{xu2022accurate}.

\newpage
\paragraph{Reproducibility statement.}
We provide self-contained proofs in \cref{sec:detailed_analysis} for the theoretical results stated in the main text.
The source code for all experiments can be found at \url{https://github.com/lingxiaoli94/MIED}.

\paragraph{Acknowledgements}
Qiang Liu would like to acknowledge the support of NSF CAREER 1846421 and ONR.
The MIT Geometric Data Processing group acknowledges the generous support of Army Research Office grants W911NF2010168 and W911NF2110293, of Air Force Office of Scientific Research award FA9550-19-1-031, of National Science Foundation grants IIS-1838071 and CHS-1955697, from the CSAIL Systems that Learn program, from the MIT–IBM Watson AI Laboratory, from the Toyota–CSAIL Joint Research Center, from a gift from Adobe Systems, and from a Google Research Scholar award.

\bibliography{main}
\bibliographystyle{iclr2023_conference}

\appendix
\section{Detailed Analysis}\label{sec:detailed_analysis}

\subsection{Preliminaries on mollifiers}\label{sec:app_mollifier}

\begin{proposition}\label{prop:riesz_family_is_mollifier}
For $s > n$, the $s$-Riesz family of mollifiers, defined as 
$ \phi^s_\eps(x) \defeq (\norm{x}^2_2 + \eps^2)^{-s/2} / Z^s_\eps$, satisfies \cref{defn:mollifiers}.
\end{proposition}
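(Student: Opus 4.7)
The plan is to verify each of the three defining properties (a), (b), (c) of \cref{defn:mollifiers} in turn, exploiting the scaling symmetry $\phi^s_\eps(x) = \eps^{-n}\psi^s(x/\eps)$ where $\psi^s(y) \defeq (\|y\|_2^2+1)^{-s/2}/Z_1^s$ (up to rewriting the normalizer).

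First I would dispense with the easy property (a): non-negativity is immediate since the integrand is positive, and symmetry follows from $\|{-x}\|_2 = \|x\|_2$. Next, for (b), the main task is to show that $(\|x\|_2^2+\eps^2)^{-s/2}$ is integrable whenever $s > n$, so that the normalizing constant $Z^s_\eps$ is finite and $\norm{\phi^s_\eps}_1 = 1$ by definition. Passing to spherical coordinates and changing variables $r = \eps u$ yields
\begin{equation*}
  \int_{\RR^n} (\|x\|_2^2+\eps^2)^{-s/2}\dd x \;=\; \sigma_{n-1}\,\eps^{n-s}\int_0^\infty (u^2+1)^{-s/2} u^{n-1}\dd u,
\end{equation*}
where $\sigma_{n-1}$ is the surface area of the unit sphere; the remaining integral is finite precisely when $s>n$ (the integrand decays like $u^{n-1-s}$ at infinity). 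Thus $Z^s_\eps = C_s\,\eps^{n-s}$ for a constant $C_s \defeq \sigma_{n-1}\int_0^\infty (u^2+1)^{-s/2}u^{n-1}\dd u<\infty$. Boundedness of $\phi^s_\eps$ follows since the supremum is attained at $x=0$ with value $\eps^{-s}/Z^s_\eps = (C_s \eps^n)^{-1} < \infty$.

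The work is really in property (c). For the $L^\infty$ case, observe that on $\{\|x\|\ge \delta\}$ the kernel is bounded by $(\delta^2+\eps^2)^{-s/2}/Z^s_\eps$, whose numerator is bounded by $\delta^{-s}$ and whose denominator grows like $\eps^{n-s}\to\infty$ as $\eps\to 0$ (recall $n-s<0$), giving the desired vanishing. For the $L^1$ case, the same spherical substitution $r=\eps u$ shows
\begin{equation*}
  \int_{\|x\|\ge \delta} \phi^s_\eps(x)\dd x \;=\; \frac{1}{C_s}\int_{\delta/\eps}^\infty (u^2+1)^{-s/2} u^{n-1}\dd u,
\end{equation*}
and the right-hand side tends to zero as $\eps\to 0$ because $\delta/\eps\to\infty$ and the full integral over $[0,\infty)$ converges.

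I don't foresee a serious obstacle: the computation is essentially one change of variables, and the condition $s>n$ enters in exactly the place one would expect (ensuring both the finiteness of $Z^s_\eps$ and that $\eps^{n-s}\to\infty$ for the mass-concentration step). The only mild subtlety is being careful that the exponent $n-s$ is negative when reasoning about limits, so the suppressed normalization scale $\eps^{n-s}$ actually blows up rather than vanishes.
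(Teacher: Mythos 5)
Your proof is correct, and it rests on the same underlying computation as the paper's---polar coordinates plus the substitution $r=\eps u$---but it is organized differently. The paper deliberately never evaluates $Z_\eps^s$ (it states earlier that the normalizers ``might not admit clean forms''): its auxiliary lemma (\cref{lem:riesz_kernel_int}) gives only a lower bound $C\eps^{n-s}$ on the unnormalized mass inside $B_\eps(0)$ and an $\eps$-independent upper bound $C'\delta^{n-s}$ on the unnormalized mass outside $B_\delta(0)$ (obtained by dropping the $\eps^2$), and then deduces property \ref{itm:ball_c_int_vanish} for $p=1$ from the fact that the ratio of inner to outer mass blows up while the two pieces must sum to one. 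You instead exploit the exact scaling $\phi^s_\eps(x)=\eps^{-n}\phi^s_1(x/\eps)$ to compute $Z_\eps^s=C_s\,\eps^{n-s}$ in closed form, after which everything is immediate: the $L^1$ tail is the tail of a convergent one-dimensional integral, and the $L^\infty$ bound is $\delta^{-s}/(C_s\eps^{n-s})\to 0$. Your route is slightly cleaner and, notably, treats the $p=\infty$ case of \ref{itm:ball_c_int_vanish} explicitly, which the paper's ratio argument does not spell out (it follows from the same observation that $Z_\eps^s\to\infty$, but the paper leaves it implicit). One trivial bookkeeping slip: your displayed formula for $\int_{\norm{x}_2\ge\delta}\phi^s_\eps(x)\dd x$ should carry a factor of $\sigma_{n-1}$ in front of the tail integral, since $C_s$ as you defined it already absorbs $\sigma_{n-1}$; equivalently, the quantity is the ratio of $\int_{\delta/\eps}^\infty(u^2+1)^{-s/2}u^{n-1}\dd u$ to $\int_0^\infty(u^2+1)^{-s/2}u^{n-1}\dd u$. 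This does not affect the conclusion.
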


In order to prove \cref{prop:riesz_family_is_mollifier}, we first prove a lemma.
\begin{lemma}\label{lem:riesz_kernel_int}
  Let $b \ge 0$.
  Then for any $\epsilon > 0$, 
  \begin{align}\label{riesz_kernel_int_eq}
    Z_\eps^s \int_{B_\epsilon(0)} \phi^s_\epsilon(y) \norm{y}_2^b \dd y =
    \mathcal{H}_{n-1}(S^{n-1})\left( \int_0^1 \frac{t^{n+b-1}}{(t^2+1)^{s/2}} \right)\epsilon^{n+b-s},
  \end{align}
  where $\mathcal{H}_{n-1}(S^{n-1})$ is the volume of the $(n-1)$-dimensional sphere.

  Furthermore, assuming $s > b + n$, then for any $\epsilon > 0, \delta > 0$,
  \begin{align}
    Z_\eps^s \int_{\RR^n \setminus B_\delta(0)} \phi^s_\epsilon(y) \norm{y}_2^b \dd y  
    &\le \mathcal{H}_{n-1}(S^{n-1})\frac{\delta^{n+b-s}}{s-(n+b)} .\label{eqn:riesz_kernel_int_complement_ub}
  \end{align}
\end{lemma}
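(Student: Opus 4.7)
\begin{proofsketch}
The plan is to reduce both integrals to one-dimensional radial integrals via polar coordinates, and then exploit the scaling of the Riesz kernel in $\eps$ by a suitable change of variables.

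For the equality, I first recall that $Z_\eps^s \phi_\eps^s(y) = (\norm{y}_2^2 + \eps^2)^{-s/2}$. Using the standard polar decomposition $\dd y = r^{n-1} \dd r \dd \sigma$ where $\sigma$ denotes the surface measure on $S^{n-1}$, I rewrite
\begin{align*}
  Z_\eps^s \int_{B_\eps(0)} \phi_\eps^s(y) \norm{y}_2^b \dd y = \mathcal{H}_{n-1}(S^{n-1}) \int_0^\eps \frac{r^{n+b-1}}{(r^2 + \eps^2)^{s/2}} \dd r.
\end{align*}
I then apply the substitution $r = \eps t$, so that $r^{n+b-1} \dd r = \eps^{n+b} t^{n+b-1} \dd t$ and $(r^2 + \eps^2)^{s/2} = \eps^s (t^2 + 1)^{s/2}$. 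Pulling out the factor $\eps^{n+b-s}$ and integrating $t$ over $[0,1]$ yields exactly \eqref{riesz_kernel_int_eq}.

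For the upper bound in \eqref{eqn:riesz_kernel_int_complement_ub}, I again reduce to a radial integral:
\begin{align*}
  Z_\eps^s \int_{\RR^n \setminus B_\delta(0)} \phi_\eps^s(y) \norm{y}_2^b \dd y = \mathcal{H}_{n-1}(S^{n-1}) \int_\delta^\infty \frac{r^{n+b-1}}{(r^2 + \eps^2)^{s/2}} \dd r.
\end{align*}
The key estimate is the trivial bound $(r^2 + \eps^2)^{-s/2} \le r^{-s}$, so the integrand is dominated by $r^{n+b-1-s}$. Since the hypothesis $s > n + b$ means the exponent $n + b - 1 - s$ is strictly less than $-1$, the tail integral is convergent and evaluates to $\delta^{n+b-s}/(s - (n+b))$, giving the claim.

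There is no real obstacle here: the argument is a textbook polar-coordinates computation followed by a one-variable rescaling and a crude (but sharp enough) pointwise bound. The only care needed is to keep track of the signs of the exponents so as to correctly apply the fundamental theorem of calculus at infinity, which is where the assumption $s > n + b$ is used.
\end{proofsketch}
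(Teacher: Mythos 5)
Your proposal is correct and follows exactly the paper's own argument: polar coordinates plus the substitution $r = \eps t$ for the equality, and the pointwise bound $(r^2+\eps^2)^{-s/2} \le r^{-s}$ combined with convergence of the tail integral under $s > n+b$ for the upper bound. No differences worth noting.
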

\begin{proof}
  For \eqref{riesz_kernel_int_eq}, we compute
  \begin{align*}
    Z_\eps^s \int_{B_\epsilon(0)} \phi^s_\epsilon(y) \norm{y}_2^b \dd y &= \int_{B_\epsilon(0)}\frac{\norm{y}_2^b}{(\norm{y}_2^2+\epsilon^2)^{s/2}} =
\mathcal{H}_{n-1}(S^{n-1})\int_0^\epsilon \frac{r^{n+b-1}}{(r^2+\epsilon^2)^{s/2}} \dd r \\
                                                             &= \mathcal{H}_{n-1}(S^{n-1})\int_0^1 \frac{(\epsilon t)^{n+b - 1}}{(\epsilon^2 t^2 + \epsilon^2)^{s/2}} \epsilon \dd t \\
                                                             &= \mathcal{H}_{n-1}(S^{n-1})\left(\int_0^1 \frac{t^{n+b - 1}}{(t^2+1)^{s/2}}\dd t\right)\epsilon^{n+b-s},
  \end{align*}
  where we use substitution $r = \epsilon t$ on the second line.

  If $s > b + n$, then
  \begin{align*}
   Z_\eps^s \int_{\RR^n \setminus B_\delta(0)} \phi^s_\epsilon(0, y) \norm{y}_2^b \dd y &= \int_{\RR^n \setminus B_\delta(0)} \frac{\norm{y}_2^b}{(\norm{y}_2^2+\epsilon^2)^{s/2}}  \\
                                                                           &\le \int_{\RR^n \setminus B_\delta(0)} \frac{\norm{y}_2^b}{\norm{y}_2^s} = \mathcal{H}_{n-1}(S^{n-1})\int_\delta^\infty r^{n-1+b-s}  \\
                                                                           &= \mathcal{H}_{n-1}(S^{n-1}) \frac{\delta^{n+b-s}}{s-(n+b)},
  \end{align*}
  where the last integral uses $s > b + n$.
\end{proof}

\begin{proof}[Proof of \cref{prop:riesz_family_is_mollifier}]
  It is clear that \ref{itm:mollifier_sym} of \cref{prop:riesz_family_is_mollifier} holds for $\phi_\eps^s(x)$.
  Since $\phi_\eps^s(x)$ is bounded we have $\phi_\eps^s \in L^\infty(\RR^n)$.
  For any $\delta > 0$, we have, for $\epsilon \le \delta$,
  \begin{align*}
    Z_\eps^s \int_{B_\delta(0)} \phi^s_\epsilon(x) \dd x 
                              \ge Z_\eps^s \int_{B_\epsilon(0)} \phi^s_\epsilon(x) \dd x = C \epsilon^{n-s},
  \end{align*}
  where we use \eqref{riesz_kernel_int_eq} with $b = 0$ and $C$ is a constant depending only on $n, s$.
  On the other hand, using \eqref{eqn:riesz_kernel_int_complement_ub} with $b = 0$,
  \begin{align*}
    Z_\eps^s \int_{\RR^n \setminus B_\delta(0)}\phi^s_\epsilon(x)\dd x \le C' \delta^{n-s},
  \end{align*}
  where $C'$ depends only on $n, s$.
  With $\delta = \eps$, we see that $Z_\eps^s \phi_\eps^s \in L^1(\RR^n)$ so \ref{itm:mollifier_int} is satisfied.

  Since $\delta$ is fixed and $s > n$, we see that
  \[
    \lim_{\epsilon \to 0} \frac{\int_{B_\delta(0)} \phi_\eps^s(x)\dd x}{\int_{\RR^n \setminus B_\delta(0)} \phi_\eps^s(x)\dd x} \ge \lim_{\epsilon \to 0} \frac{C\epsilon^{n-s}}{C'\delta^{n-s}} = \infty
  .\]

  Since $\int_{B_\delta(0)} \phi_\eps^s(x)\dd x$ $+ \int_{\RR^n \setminus B_\delta(0)} \phi_\eps^s(x)\dd x = 1$, we have shown \ref{itm:ball_c_int_vanish}.
\end{proof}

In the rest of this section, we assume $\{\phi_\eps\}_{\eps > 0}$ is a family of mollifiers satisfying \cref{defn:mollifiers}.
\begin{lemma}\label{lem:ass_ball_c_vanish}
  For any $p \in [1,\infty]$, for any $\delta > 0$, $\phi_\eps \in L^p(\RR^n)$ and 
       $ \lim_{\eps \to 0} \norm{\ind_{\RR^n\setminus B_\delta(0)} \phi_\eps}_p = 0$.
  holds.
\end{lemma}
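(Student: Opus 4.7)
The plan is to reduce everything to the two endpoint cases $p=1$ and $p=\infty$ (which are assumed in \cref{defn:mollifiers}\ref{itm:ball_c_int_vanish}) via a standard $L^p$ interpolation inequality. For a nonnegative measurable function $f$ with $f \le M$ a.e. and $f \in L^1$, one has $\int f^p = \int f^{p-1} f \le M^{p-1}\int f$, hence $\|f\|_p^p \le \|f\|_\infty^{p-1}\|f\|_1$ for every $p \in [1,\infty)$. I will apply this twice.

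First, to show $\phi_\eps \in L^p(\RR^n)$ for all $p \in [1,\infty]$, I apply the bound with $f = \phi_\eps$. By \cref{defn:mollifiers}\ref{itm:mollifier_int} we have $\|\phi_\eps\|_\infty < \infty$ and $\|\phi_\eps\|_1 = 1$, so $\|\phi_\eps\|_p^p \le \|\phi_\eps\|_\infty^{p-1} < \infty$ for any finite $p$, and the case $p = \infty$ is immediate.

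Second, to obtain the limit, I apply the same bound to $f = \ind_{\RR^n \setminus B_\delta(0)}\phi_\eps$. This gives
\begin{equation*}
\bigl\|\ind_{\RR^n \setminus B_\delta(0)}\phi_\eps\bigr\|_p^p \;\le\; \bigl\|\ind_{\RR^n \setminus B_\delta(0)}\phi_\eps\bigr\|_\infty^{\,p-1} \cdot \bigl\|\ind_{\RR^n \setminus B_\delta(0)}\phi_\eps\bigr\|_1
\end{equation*}
for every $p \in [1,\infty)$. By \cref{defn:mollifiers}\ref{itm:ball_c_int_vanish} applied at the two endpoints $p=1$ and $p=\infty$, both factors on the right tend to $0$ as $\eps \to 0$, hence so does the left side. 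The case $p = \infty$ and $p = 1$ are given by hypothesis, completing the proof.

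There is no real obstacle here; the only mild subtlety is making sure the interpolation inequality is applied to the truncated function $\ind_{\RR^n \setminus B_\delta(0)}\phi_\eps$ rather than to $\phi_\eps$ itself, so that one exploits the $L^\infty$ decay away from the origin (which does go to $0$) instead of the global $L^\infty$ norm (which generally blows up as $\eps \to 0$).
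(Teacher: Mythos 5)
Your proof is correct and follows essentially the same route as the paper: both reduce the case $p\in(1,\infty)$ to the endpoint cases via the H\"older/interpolation bound $\|f\|_p^p\le\|f\|_\infty^{p-1}\|f\|_1$ applied to the truncated function. The only cosmetic difference is that the paper bounds the truncated $L^1$ factor by $\|\phi_\eps\|_1=1$ and lets the truncated $L^\infty$ factor alone carry the decay, whereas you let both truncated factors tend to zero; either suffices.
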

\begin{proof}
  Assume $p \notin \{1, \infty\}$ since both cases are covered in the assumptions.
  Then for any $\delta > 0$, by H\"{o}lder's inequality,
  \begin{align*}
    \norm{\phi_\eps}_p^p = \norm{\phi_\eps \cdot \phi_\eps^{p-1}}_1 \le \norm{\phi_\eps}_1 \norm{\phi_\eps^{p-1}}_\infty = \norm{\phi_\eps}_1 \norm{\phi_\eps}_{\infty}^{p-1} < \infty.
  \end{align*}
  Similarly,
  \begin{align*}
    \norm{\ind_{\RR^n \setminus B_\delta(0)}\phi_\eps}_p^p &= \norm{\ind_{\RR^n \setminus B_\delta(0)}\phi_\eps^p}_1 
                                                               = \norm{\phi_\eps \cdot \ind_{\RR^n \setminus B_\delta(0)}\phi_\eps^{p-1}}_1 \\
                                                                      &\le \norm{\phi_\eps}_1 \norm{\ind_{\RR^n\setminus B_\delta(0)} \phi_\eps^{p-1}}_{\infty} 
                                                                      = \norm{\ind_{\RR^n\setminus B_\delta(0)} \phi_\eps}_{\infty}^{p-1}.
  \end{align*}
  Letting $\eps \to 0$ and applying \ref{itm:ball_c_int_vanish} gives $\lim_{\eps \to 0} \norm{\ind_{\RR^n\setminus B_\delta(0)} \phi_\eps}_p = 0$.
\end{proof}

\begin{proposition}\label{prop:conv_pw_conv}
  Let $f \in L^p(\RR^n)$, $p \in [1, \infty]$.
  Then for every $\eps > 0$, the integral
  \[\int f(x-y) \phi_\eps(y) \dd y\]
  exists, so that $(f * \phi_\eps)(x)$ is finite.
  Moreover, if $f$ is continous at $x$, then
  \begin{align}\label{eqn:conv_pw_conv}
    \lim_{\eps \to 0} (f * \phi_\eps)(x) = f(x).
  \end{align}
\end{proposition}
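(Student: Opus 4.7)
The plan is to prove the two claims independently with standard techniques from real analysis. The first claim (integrability of the convolution integral) follows from Hölder's inequality together with Lemma \ref{lem:ass_ball_c_vanish}, which extends the $L^p$ bounds on $\phi_\eps$ from the endpoints $p\in\{1,\infty\}$ to all $p\in[1,\infty]$. The second claim (pointwise convergence at a continuity point) is the classical ``approximation to the identity'' argument: write the difference as a single integral against $\phi_\eps$, then split it into a small-ball part controlled by continuity and a tail controlled by the vanishing $L^q$ norm outside $B_\delta(0)$.

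For the existence statement, I would fix $\eps > 0$ and let $q \in [1,\infty]$ be the Hölder conjugate of $p$ (so $1/p+1/q=1$). By Lemma \ref{lem:ass_ball_c_vanish}, $\phi_\eps \in L^q(\RR^n)$. The translated function $y \mapsto f(x-y)$ lies in $L^p(\RR^n)$ with the same norm as $f$, by translation invariance of Lebesgue measure. Hölder's inequality then gives
\[
  \int |f(x-y)|\, \phi_\eps(y)\dd y \le \|f\|_p\,\|\phi_\eps\|_q < \infty,
\]
so the integral defining $(f * \phi_\eps)(x)$ is absolutely convergent.

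For the limit, I would use that $\|\phi_\eps\|_1 = 1$ to rewrite
\[
  (f * \phi_\eps)(x) - f(x) = \int \bigl(f(x-y) - f(x)\bigr)\,\phi_\eps(y)\dd y.
\]
Fix $\eta > 0$. By continuity of $f$ at $x$, choose $\delta > 0$ with $|f(x-y) - f(x)| < \eta$ for $\|y\|_2 \le \delta$. The integral over $B_\delta(0)$ is bounded by $\eta\int_{B_\delta(0)} \phi_\eps \le \eta$. The tail integral is bounded via the triangle inequality and Hölder by
\[
  \|f\|_p\,\bigl\|\ind_{\RR^n\setminus B_\delta(0)}\phi_\eps\bigr\|_q + |f(x)|\,\bigl\|\ind_{\RR^n\setminus B_\delta(0)}\phi_\eps\bigr\|_1,
\]
and both terms tend to $0$ as $\eps \to 0$ by Lemma \ref{lem:ass_ball_c_vanish}. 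Choosing $\eps$ small enough to make the tail at most $\eta$ gives $|(f * \phi_\eps)(x) - f(x)| \le 2\eta$, and since $\eta$ is arbitrary the limit follows.

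The main subtlety is handling the endpoint cases $p=1$ and $p=\infty$, where the Hölder exponents degenerate. For $p=\infty$ the relevant conjugate is $q=1$ and the tail bound reduces to $\|f\|_\infty \|\ind_{\RR^n\setminus B_\delta(0)}\phi_\eps\|_1$, which vanishes directly by item \ref{itm:ball_c_int_vanish} of Definition \ref{defn:mollifiers}; for $p=1$ we instead need $\|\ind_{\RR^n\setminus B_\delta(0)}\phi_\eps\|_\infty \to 0$, also covered by that item. The middle exponents $1<p<\infty$ are precisely what Lemma \ref{lem:ass_ball_c_vanish} provides via a Hölder interpolation between these two endpoints, so no separate argument is needed.
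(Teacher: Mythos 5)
Your proposal is correct and follows essentially the same route as the paper: Hölder with the conjugate exponent (via Lemma \ref{lem:ass_ball_c_vanish}) for existence, then the standard small-ball/tail split using continuity at $x$ and property \ref{itm:ball_c_int_vanish} for the limit. Your tail estimate, which separates $\|f\|_p\,\|\ind_{\RR^n\setminus B_\delta(0)}\phi_\eps\|_q$ from $|f(x)|\,\|\ind_{\RR^n\setminus B_\delta(0)}\phi_\eps\|_1$, is in fact slightly cleaner than the paper's bound $\|\tau_x f - f\|_p \le 2\|f\|_p$, since the constant function $f(x)$ is not in $L^p(\RR^n)$ for finite $p$.
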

\begin{proof}
  Observe that by H\"{o}lder's inequality, for $q$ such that $1/p+1/q=1$ (allowing infinity),
  \begin{align*}
    \int \abs{f(x-y)\phi_\eps(y)} \dd y &\le \norm{f}_p \norm{\phi_\eps}_q < \infty.
  \end{align*}
  Hence $(f * \phi_\eps)(x)$ is integrable and finite.

  For any $\eps > 0$, note that
  \begin{align*}
    \abs{(f * \phi_\eps)(x) - f(x)} &= \abs{\int f(x-y)\phi_\eps(y) \dd y - f(x)}.
  \end{align*}
  Since $\int \phi_\eps(x)\dd x = 1$, we have
  \begin{align*}
    \abs{(f * \phi_\eps)(x) - f(x)} &= \abs{\int \left(f(x-y) - f(x)\right) \phi_\eps(y) \dd y} 
    \le \int \abs{f(x-y) - f(x)} \phi_\eps(y) \dd y.
  \end{align*}

  Fix $t > 0$. Continuity of $f$ at $x$ implies there exists $\delta > 0$ such that $\abs{f(x-y) - f(x)} < t$ for all $y \in B_\delta(0)$.
  Then
  \begin{align*}
    \int_{B_\delta(0)} \abs{f(x-y) - f(x)} \phi_\eps(y) \dd y &\le t \int_{B_\delta(0)}\phi_\eps(y)\dd y \le t.
  \end{align*}
  On the other hand, by H\"{o}lder's inequality, 
  \begin{align*}
    \int_{\RR^n \setminus B_\delta(x)} \abs{f(x-y) - f(x)} \phi_\eps(y) \dd y  \le \norm{\tau_x f - f}_p \norm{\ind_{\RR^n \setminus B_\delta(0)}\phi_\eps}_q \le 2\norm{f}_p \norm{\ind_{\RR^n \setminus B_\delta(0)}\phi_\eps}_q.
  \end{align*}
  Hence
  \begin{align*}
    \abs{(f * \phi_\eps)(x) - f(x)} & \le t + 2\norm{f}_p \norm{\ind_{\RR^n \setminus B_\delta(0)}\phi_\eps}_q.
  \end{align*}
  By \cref{lem:ass_ball_c_vanish}, since $\norm{f}_p < \infty$, taking $\eps \to 0$ we get, for any $t > 0$,
  \begin{align*}
    \limsup_{\eps \to 0} \abs{(f * \phi_\eps)(x) - f(x)} & \le t.
  \end{align*}
  Now let $t \to 0$ we get \eqref{eqn:conv_pw_conv}.
\end{proof}

\begin{corollary}\label{prop:conv_pw_conv_ex}
  Let $f \in L^p(\RR^n)$, $p \in [1, \infty]$, be a continuous function.
  If $\{f_\eps\}$ is a sequence of functions that converge uniformly to $f$, then for every $x$,
  \begin{align}\label{eqn:conv_pw_conv_ex}
    \lim_{\eps \to 0} (f_\eps * \phi_\eps)(x) = f(x).
  \end{align}
\end{corollary}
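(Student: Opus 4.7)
The plan is to reduce this to \cref{prop:conv_pw_conv} by a one-line triangle inequality that isolates the discrepancy between $f_\eps$ and $f$. Concretely, I would write
\begin{align*}
  \abs{(f_\eps * \phi_\eps)(x) - f(x)} \le \abs{((f_\eps - f) * \phi_\eps)(x)} + \abs{(f * \phi_\eps)(x) - f(x)}.
\end{align*}
The second summand tends to $0$ as $\eps \to 0$ directly by \cref{prop:conv_pw_conv}, which applies because $f \in L^p(\RR^n)$ and $f$ is continuous at $x$.

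For the first summand, the key point is that $\phi_\eps$ is a probability density, so $\norm{\phi_\eps}_1 = 1$. I would estimate
\begin{align*}
  \abs{((f_\eps - f) * \phi_\eps)(x)} \le \int \abs{f_\eps(x-y) - f(x-y)} \phi_\eps(y) \dd y \le \norm{f_\eps - f}_\infty,
\end{align*}
and then invoke uniform convergence to conclude that this bound vanishes as $\eps \to 0$. Combining the two estimates gives \eqref{eqn:conv_pw_conv_ex}.

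The only technical point to check is that $(f_\eps * \phi_\eps)(x)$ is well-defined as a finite quantity in the first place. Here uniform convergence helps: for $\eps$ small enough we have $f_\eps - f$ bounded on $\RR^n$, so $f_\eps = f + (f_\eps - f) \in L^p(\RR^n) + L^\infty(\RR^n)$. Since \cref{lem:ass_ball_c_vanish} ensures $\phi_\eps \in L^1(\RR^n) \cap L^\infty(\RR^n) \subset L^q(\RR^n)$ for every $q$, H\"older's inequality (applied separately to the $L^p$ and $L^\infty$ parts) guarantees that the defining integral of $f_\eps * \phi_\eps$ converges absolutely at $x$, as in the first paragraph of the proof of \cref{prop:conv_pw_conv}.

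I do not anticipate any real obstacle: the argument is just the standard ``approximation-of-identity plus swap'' trick, and both pieces of the triangle inequality are handled by results already proved above. The whole proof should fit in a few lines.
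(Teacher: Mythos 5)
Your proposal is correct and matches the paper's own proof essentially verbatim: the same triangle-inequality split into $\abs{((f_\eps - f) * \phi_\eps)(x)}$ and $\abs{(f * \phi_\eps)(x) - f(x)}$, with the first term bounded by $\sup_x\abs{f_\eps(x)-f(x)}$ via $\norm{\phi_\eps}_1 = 1$ and the second handled by \cref{prop:conv_pw_conv}. Your extra paragraph verifying that $f_\eps * \phi_\eps$ is well-defined is a small addition of rigor the paper omits, but it does not change the argument.
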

\begin{proof}
  Note that
  \begin{align*}
    \abs{(\phi_\eps * f_\eps)(x) - f(x)} &\le \abs{(\phi_\eps * f_\eps)(x) - (\phi_\eps * f)(x)} + \abs{(\phi_\eps * f)(x) - f(x)}.
  \end{align*}
  \cref{prop:conv_pw_conv} shows the second term goes to $0$ as $\epsilon \to 0$.
  For the first term, we have
  \begin{align*}
    \abs{(\phi_\eps * f_\eps)(x) - (\phi_\eps * f)(x)} &= \abs{\int \left( f_\eps(x-y)-f(x-y)\right) \phi_\eps(y) \dd y} \\
                                                       &\le \sup_x \abs{f_\eps(x) - f(x)} \to 0
  \end{align*}
  by uniform convergence.
\end{proof}

\begin{lemma}\label{lem:trans_lp_conv}
  For $f \in L^p(\RR^n)$, $p \in [1, \infty)$, we have 
  \begin{align*}
    \lim_{y\to 0} \norm{\tau_y f - f}_p = 0,
  \end{align*}
  where we use $\tau_a f$ to denote the translated function $\tau_a f(x) \defeq f(x-a)$.
  \begin{proof}
    Fix $\eps > 0$. 
    It is a standard fact that $C_c(\RR^n)$ is dense in $L^p(\RR^n)$.
    Hence there exists $g \in C_c(\RR^n)$ such that $\norm{f - g}_p < \eps$. 
    Since $g$ is continuous with compact support, it is uniform continuous.
    Then there exists $\delta > 0$ with $\abs{g(x-y) - g(x)} < \eps^{1/p} / \mathcal{L}_n(K)$ if $y \in B_\delta(x)$.
    Hence for such $y$ we have $\norm{\tau_y g - g}^p_p < \eps$ with $\mathcal{L}_n(K) < \infty$.
    Thus
    \begin{align*}
      \norm{\tau_y f - f}_p &\le \norm{\tau_y f - \tau_y g}_p + \norm{\tau_y g - g}_p + \norm{g - f}_p \le 3\eps.
    \end{align*}
  \end{proof}
\end{lemma}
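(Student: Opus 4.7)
The plan is to use the standard approximation-by-compactly-supported-continuous-functions argument, exploiting the density of $C_c(\RR^n)$ in $L^p(\RR^n)$ for $p \in [1,\infty)$ together with the translation invariance of Lebesgue measure. The idea is to reduce the claim for arbitrary $L^p$ functions to the case of uniformly continuous, compactly supported functions, where the conclusion is nearly immediate.

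More concretely, I would fix $\eps > 0$ and choose $g \in C_c(\RR^n)$ with $\norm{f-g}_p < \eps$. Then I would apply the triangle inequality
\[
\norm{\tau_y f - f}_p \le \norm{\tau_y f - \tau_y g}_p + \norm{\tau_y g - g}_p + \norm{g - f}_p.
\]
By translation invariance of the Lebesgue measure, $\norm{\tau_y f - \tau_y g}_p = \norm{f-g}_p < \eps$, so the first and third terms together contribute at most $2\eps$, leaving only the middle term to control.

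To handle $\norm{\tau_y g - g}_p$, I would use that $g$ is uniformly continuous (being continuous with compact support $K$). For any $y$ with $|y| \le 1$, the integrand is supported in the bounded set $K \cup (K + y) \subset K'$, where $K'$ is a fixed compact neighborhood of $K$ with $\mathcal{L}_n(K') < \infty$. Uniform continuity gives $\sup_{x} |g(x-y) - g(x)| \to 0$ as $y \to 0$, hence
\[
\norm{\tau_y g - g}_p \le \mathcal{L}_n(K')^{1/p} \cdot \sup_{x} |g(x-y) - g(x)| \xrightarrow[y\to 0]{} 0.
\]
Taking $|y|$ small enough to make this bound less than $\eps$ yields $\norm{\tau_y f - f}_p \le 3\eps$, and since $\eps$ was arbitrary we conclude.

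The only subtle point is the restriction $p < \infty$: density of $C_c(\RR^n)$ in $L^p$ fails for $p = \infty$ (indeed the conclusion itself fails there, e.g.\ for $f = \ind_{[0,1]}$). I would not expect any real obstacle beyond being careful with the compact-support bookkeeping so that $\mathcal{L}_n(K')^{1/p}$ remains finite and independent of the small parameter $y$.
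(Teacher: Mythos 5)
Your proposal is correct and follows essentially the same route as the paper's proof: approximate $f$ by $g \in C_c(\RR^n)$, split via the same three-term triangle inequality, kill the outer terms by translation invariance, and control the middle term by uniform continuity of $g$. If anything, your bookkeeping is slightly more careful than the paper's (you correctly account for the support $K \cup (K+y) \subset K'$ and the exponent $\nicefrac{1}{p}$ on the measure of the support, which the paper's proof glosses over), but the underlying argument is identical.
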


\begin{proposition}\label{prop:conv_lp_conv}
  Let $f \in L^p(\RR^n)$, $p \in (1, \infty)$.
  Then for every $\eps > 0$,
  \begin{align*}
    \norm{f * \phi_\eps}_p \le 3\norm{f}_p.
  \end{align*}
  In particular, $f * \phi_\eps \in L^p(\RR^n)$.
  Moreover,
  \begin{align*}
    \lim_{\eps \to 0} \norm{f * \phi_\eps - f}_p = 0.
  \end{align*}
\end{proposition}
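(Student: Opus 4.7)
The plan is to use Minkowski's integral inequality to transfer the problem from $\|f * \phi_\eps - f\|_p$ to $\|\tau_y f - f\|_p$, which is controlled by Lemma \ref{lem:trans_lp_conv} and Lemma \ref{lem:ass_ball_c_vanish}. First I would use $\int \phi_\eps = 1$ (item \ref{itm:mollifier_int} of \cref{defn:mollifiers}) to write
\begin{equation*}
(f * \phi_\eps)(x) - f(x) = \int (\tau_y f(x) - f(x)) \, \phi_\eps(y) \dd y,
\end{equation*}
where $\tau_y f(x) \defeq f(x-y)$. Applying Minkowski's integral inequality to the $L^p(\RR^n)$-valued integrand $y \mapsto (\tau_y f - f)\phi_\eps(y)$ yields
\begin{equation*}
\|f * \phi_\eps - f\|_p \le \int \|\tau_y f - f\|_p \, \phi_\eps(y) \dd y.
\end{equation*}

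For the norm bound, since translation is an $L^p$ isometry, $\|\tau_y f - f\|_p \le 2\|f\|_p$ uniformly in $y$. Integrating against $\phi_\eps$ (a probability density) gives $\|f * \phi_\eps - f\|_p \le 2\|f\|_p$, and the triangle inequality then delivers $\|f * \phi_\eps\|_p \le 3\|f\|_p$, which also confirms $f * \phi_\eps \in L^p(\RR^n)$.

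For the convergence, fix $t > 0$. I would split the integral at $B_\delta(0)$. By \cref{lem:trans_lp_conv}, I can choose $\delta > 0$ so that $\|\tau_y f - f\|_p < t/2$ whenever $y \in B_\delta(0)$; this bounds the inner-ball contribution by $(t/2)\int_{B_\delta(0)} \phi_\eps \le t/2$. On the complement, I bound $\|\tau_y f - f\|_p \le 2\|f\|_p$ crudely, so the outer contribution is at most $2\|f\|_p \cdot \|\ind_{\RR^n \setminus B_\delta(0)} \phi_\eps\|_1$, which tends to $0$ as $\eps \to 0$ by \cref{lem:ass_ball_c_vanish} (equivalently item \ref{itm:ball_c_int_vanish} of \cref{defn:mollifiers}). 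Taking $\eps$ small enough makes the sum at most $t$, and since $t$ was arbitrary the limit vanishes.

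The only delicate step is justifying Minkowski's integral inequality, but this follows from standard Fubini-type arguments once we know $\|\tau_y f - f\|_p$ is measurable in $y$ (true by \cref{lem:trans_lp_conv}, which gives continuity) and dominated by the integrable function $2\|f\|_p \phi_\eps(y)$. Thus no real obstacle remains beyond citing this classical inequality.
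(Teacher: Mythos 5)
Your proof is correct, but it runs through a different inequality than the paper's. Where you invoke Minkowski's integral inequality to get $\norm{f*\phi_\eps - f}_p \le \int \norm{\tau_y f - f}_p\,\phi_\eps(y)\dd y$, the paper instead applies Jensen's inequality (using that $\phi_\eps(y)\dd y$ is a probability measure) together with Tonelli's theorem to obtain the analogous bound at the level of $p$-th powers, $\norm{f*\phi_\eps - f}_p^p \le \int \norm{\tau_y f - f}_p^p\,\phi_\eps(y)\dd y$. The two reductions are morally equivalent, and both then rest on \cref{lem:trans_lp_conv} for continuity of translation in $L^p$; the crude bound $\norm{\tau_y f - f}_p \le 2\norm{f}_p$ and the resulting constant $3$ are identical. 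The difference reappears in the limiting step: you carry out the $B_\delta(0)$-versus-complement splitting by hand, using \ref{itm:ball_c_int_vanish} of \cref{defn:mollifiers} directly, whereas the paper observes that $y \mapsto \norm{\tau_y f - f}_p^p$ is a bounded function continuous at $0$ with value $0$ there, and simply cites \cref{prop:conv_pw_conv} to conclude the integral against $\phi_\eps$ tends to $0$. The paper's route is slightly more economical because it reuses machinery already established; yours is more self-contained in the convergence step but leans on Minkowski's integral inequality, a heavier (though entirely standard) tool whose measurability hypotheses you correctly flag. One small refinement: \cref{lem:trans_lp_conv} as stated only gives continuity of $y \mapsto \tau_y f$ at $y = 0$, though the same density argument gives continuity everywhere, which is what you implicitly use for measurability; since you only need measurability of $y \mapsto \norm{\tau_y f - f}_p$ and domination by $2\norm{f}_p\phi_\eps(y)$, this is not a gap.
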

\begin{proof}
  For every $\eps > 0$, we have
  \begin{align*}
    \norm{f * \phi_\eps - f}^p_p &= \int \abs{ \int (f(x-y) - f(x))  \phi_\eps(y) \dd y}^p \dd x \\
                                     &\le \int \int \abs{f(x-y) - f(x)}^p  \phi_\eps(y) \dd y \dd x \\
                                     &= \int \left(\int \abs{f(x-y) - f(x)}^p \dd x\right) \phi_\eps(y) \dd y \\
                                     &= \int \norm{\tau_y f - f}_p^p \phi_\eps(y) \dd y,
  \end{align*}
  where we use Jensen's inequality with the observation that $\phi_\eps(y) \dd y$ is a probability measure and Tonelli's theorem to exchange the order of integration.
  To show the first claim, note that,
  \begin{align*}
    \norm{f * \phi_\eps - f}^p_p \le \int \norm{\tau_y f - f}_p^p \phi_\eps(y) \dd y & \le \int (2\norm{f}_p)^p \phi_\eps(y) \dd y = (2\norm{f}_p)^p.
  \end{align*}
  Hence $\norm{f * \phi_\eps}_p \le \norm{f}_p + \norm{f * \phi_\eps - f}_p \le 3\norm{f}_p$.

  For the second claim, by \cref{lem:trans_lp_conv}, the function $y \mapsto \norm{\tau_y f - f}_p^p$ is continuous at $y = 0$ with limit $0$.
  Hence by \cref{prop:conv_pw_conv}, we are done by taking $\eps \to 0$.
\end{proof}

\subsection{Convergence to $\chi^2$-divergence}\label{sec:proof_lim_chi_square}
We start by giving an alternative formula for $\chi^2$ divergence with respect to the target measure.
\begin{lemma}\label{lem:alt_chi_sqr}
  Define a functional $\cE: \cP(\RR^n) \to [0, \infty]$ by
  \begin{align}
    \begin{split}\label{eqn:limit_energy}
      \cE(\mu) \defeq \left\{ \begin{array}{ll}
          \int_X \frac{q(x)^2}{p(x)}\dd x & \text{ if } \dd\mu(x) = q(x)\dd\cL_n(x) \text{ and $\mu \ll \mu^*$} \\
          \infty & \text{ otherwise.}
        \end{array}
      \right.
    \end{split}
  \end{align}
  Then for every $\mu \in \cP(\RR^n)$, 
  \[
    \cE(\mu) = \chi^2\infdivx{\mu}{\mu^*} + 1.
  \]
\end{lemma}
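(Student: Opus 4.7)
The plan is a direct computation of $\chi^2\infdivx{\mu}{\mu^*}$ via the standard expansion $(r-1)^2 = r^2 - 2r + 1$, after first aligning the piecewise definitions on the two sides of the claimed identity. First I would observe that since $p \in C^1(X)$ is strictly positive on $X$, the measures $\mu^*$ and $\cL_n$ restricted to $X$ are mutually absolutely continuous, and consequently $\mu \ll \mu^*$ holds if and only if $\mu$ is supported in $X$ and admits a Lebesgue density $q$. In this regime the Radon--Nikodym derivative is $\frac{\dd\mu}{\dd\mu^*}(x) = q(x)/p(x)$ (here using, as throughout the theoretical analysis, the normalization $\int_X p\,\dd x = 1$, so that $\dd\mu^* = p\,\dd\cL_n$). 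Thus the non-trivial branches of $\cE$ in \eqref{eqn:limit_energy} and of $\chi^2$ in \eqref{eqn:chi_square_divergence} are triggered under the same condition.

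In the singular case $\mu \not\ll \mu^*$, both \eqref{eqn:chi_square_divergence} and \eqref{eqn:limit_energy} assign the value $+\infty$, so the identity holds as $\infty = \infty$. In the absolutely continuous case, I would substitute $q/p$ for the Radon--Nikodym derivative and expand:
\begin{align*}
\chi^2\infdivx{\mu}{\mu^*} = \int_X \left(\frac{q(x)}{p(x)}-1\right)^2 p(x)\,\dd x = \int_X \frac{q(x)^2}{p(x)}\,\dd x - 2\int_X q(x)\,\dd x + \int_X p(x)\,\dd x.
\end{align*}
The first integral equals $\cE(\mu)$ by definition; the second equals $-2$ because $\mu$ is a probability measure supported in $X$; and the third equals $+1$ by the normalization of $p$. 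Summing yields $\chi^2\infdivx{\mu}{\mu^*} = \cE(\mu) - 1$, which rearranges to the claimed identity.

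There is no substantive obstacle; the argument is essentially bookkeeping plus the one-line expansion above. The only point that deserves a careful sentence is the equivalence of the two branching conditions hidden inside the definition of $\cE$ in \eqref{eqn:limit_energy} (namely ``$\mu$ has a Lebesgue density $q$'' and ``$\mu \ll \mu^*$'') with the single branching condition in the definition of $\chi^2$ — and this equivalence is precisely what positivity of $p$ on $X$ delivers, so the ``infinite'' branches on the two sides of the identity coincide exactly.
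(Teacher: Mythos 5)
Your proof is correct and follows essentially the same route as the paper's: split on whether $\mu \ll \mu^*$, and in the absolutely continuous case expand $(q/p-1)^2 p = q^2/p - 2q + p$ and integrate term by term. Your explicit remark that $\mu \ll \mu^*$ is equivalent to $\mu$ having a Lebesgue density supported in $X$ (via positivity of $p$), and that the computation uses the normalization $\int_X p\,\dd x = 1$, makes precise two points the paper's proof leaves implicit, but the substance is identical.
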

\begin{proof}
  If $\mu$ is not absolutely continuous with respect to $\mu^*$, then both sides are infinity.
  Otherwise, $\mu(X) = 1$ and $\mu$ has some density $q$.
  We then compute
  \begin{align*}
    \chi^2 \infdivx{\mu}{\mu^*} + 1 &= \int \left(\frac{\dd\mu}{\dd\mu^*}(x) - 1\right)^2 \dd\mu^*(x) + 1 
                                    = \int_X \left(\frac{q(x)}{p(x)} - 1\right)^2 p(x)\dd x + 1 \\
                                    &= \int_X \frac{q(x)^2}{p(x)} \dd x - 2 \int_X q(x) \dd x + 2 
                                    = \int_X \frac{q(x)^2}{p(x)} \dd x = \cE(\mu).
  \end{align*}
\end{proof}

\begin{proof}[Proof of \cref{thm:lim_chi_square}]
  Since $\chi^2 \infdivx{\mu}{\mu^*} < \infty$, we know $\mu \ll \mu^*$, so $\mu(X) = 1$. We let $q(x)$ be the density of $\mu$ which satisfies $q(x) = 0$ for $x \in \RR^n \setminus X$.
  We compute using Tonelli's theorem (since our integrand is positive):
  \begin{align*}
    \mathcal{E}_{\eps}(\mu) 
                            &= \iint \phi_\eps(x-y) (p(x)p(y))^{-1/2} q(x) q(y) \dd x \dd y \\
                            &= \int \left(\phi_\eps * \frac{q}{\sqrt{p}}\right)(x) \frac{q}{\sqrt{p}}(x)  \dd x = \int (\phi_\eps * g)(x) g(x) \dd x, \numberthis\label{eqn:lim_chi_proof_intermediate}
  \end{align*}
  where we define $g(x) \defeq q(x) / \sqrt{p(x)}$ on $X$ and $g(x) \defeq 0$ for $x \notin X$.
  Moreover, by \cref{lem:alt_chi_sqr},
  \begin{align*}
    \chi^2 \infdivx{\mu}{\mu^*} + 1 = \int g(x)^2 \dd x.
  \end{align*}
  The assumption $\chi^2\infdivx{\mu}{\mu^*} < \infty$ then implies $g \in L^2(\RR^n)$.
  By \cref{prop:conv_lp_conv}, we have $\phi_\eps * g \in L^2(\RR^n)$.
  Next notice that
  \begin{align*}
    \abs{\mathcal{E}_\eps(\mu) - \left(\chi^2\infdivx{\mu}{\mu^*} + 1\right)} &\le \int \abs{(\phi_\eps * g)(x) - g(x) } g(x) \dd x 
                                                                              \le \norm{\phi_\eps * g - g}_2 \norm{g}_2 < \infty.
  \end{align*}
  This shows the first claim.
  Finally, the last expression goes to $0$ as $\eps \to 0$ since $\norm{\phi_\eps * g - g}_2 \to 0$ by \cref{prop:conv_lp_conv}.
\end{proof}
\begin{remark}
  One may ask if similar results as \cref{thm:lim_chi_square} for $f$-divergences other than $\chi^2$ divergence can be obtained using similar techniques.
  We think the proof of \cref{thm:lim_chi_square} is highly specific to $\chi^2$ divergence because in \eqref{eqn:lim_chi_proof_intermediate} there are two copies of $q$ coming from the definition of $\cE_\eps$ as a double integral over $\mu$.
  An $f$-divergence between $\mu$ and $\mu^*$ has the form $\int_X f(q(x)/p(x)) \dd p(x)$, and the only way to have $q^2$ showing up is to choose $f(x) = x^2$ corresponding to $\chi^2$ divergence.
\end{remark}
\begin{remark}\label{rmk:hausdorff_d}
  It is possible to prove versions of \cref{thm:lim_chi_square} when $X$ has Hausdorff dimension $d < n$:  in such cases $\chi^2$-divergence still makes sense as does \eqref{eqn:mollified_energy}.  %
  When $X$ is ``flat'', i.e., with Hausdorff dimension $d$ and contained in a $d$-dimensional linear subspace of $\RR^n$, e.g., when $X$ is defined by a set of linear inequalities, then \cref{thm:lim_chi_square} follows if we adapt the assumptions of \cref{defn:mollifiers} and calculation in the proof of \cref{thm:lim_chi_square} to be in the subspace.
  For a general $d$-dimensional $X$, a similar calculation yields $\mathcal{E}_\eps(\mu) = \int_X \left(\int_X \phi_\eps(x-y) \left(\frac{q}{\sqrt{p}}\right) (y) \dd \mathcal{H}_d(y) \right) \left(\frac{q}{\sqrt{p}}\right)(x)\dd\mathcal{H}_d(x).$
  For a similar argument to go through, we will need the normalizing constant $\int_X \phi_\eps(x-y) \dd\mathcal{H}_d(y)$ to the same for all $x$.
  This is true when $X$ is a $d$-dimensional sphere, but for a general $X$ the last integral will depend on the base point $x$.
  Proving a version of \cref{thm:lim_chi_square} for $X$ with Hausdorff dimension $d<n$ is an interesting future direction.
\end{remark}

\subsection{Convexity and $\Gamma$-convergence}\label{sec:proof_cvx_gamma}
We start by recalling a few definitions regarding functionals in $\cP(\RR^n)$.

\begin{definition}\label{defn:proper_lsc}
We say a functional $\mathcal{F}: \mathcal{P}(\RR^n) \to (-\infty, \infty]$ is \emph{proper} if there exists $\mu \in \mathcal{P}(\RR^n)$ such that $\mathcal{F}(\mu) < \infty$, and is \emph{lower semicontinuous} (l.s.c.) if for any weakly convergence sequence $\mu_k \to \mu$, we have $\liminf_{k\to\infty} \mathcal{F}(\mu_k) \ge \mathcal{F}(\mu)$.
\end{definition}

\begin{lemma}\label{lem:energy_basics}
  For any $\eps > 0$, the functional $\mathcal{E}_\eps: \mathcal{P}(\RR^n) \to (-\infty, \infty]$ is proper and l.s.c.
  Moreover, if $X$ is compact, the minimum $\min_{\mu \in \mathcal{P}(X)} \mathcal{E}_\eps(\mu)$ is attained by some measure in $\mathcal{P}(X)$.
\end{lemma}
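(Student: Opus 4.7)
The plan is to address the three claims separately: properness is immediate from \cref{thm:lim_chi_square}, lower semicontinuity is the only substantive step and requires a ``principle of descent'' argument, and attainment in the compact case follows from Prokhorov together with the lower semicontinuity. For properness, I would take $\mu = \mu^*$ and note $\chi^2\infdivx{\mu^*}{\mu^*} = 0$, so that \cref{thm:lim_chi_square} immediately yields $\mathcal{E}_\eps(\mu^*) < \infty$ for every $\eps > 0$; in particular $\mathcal{E}_\eps \not\equiv \infty$.

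For lower semicontinuity with respect to weak convergence in $\mathcal{P}(\RR^n)$, I would proceed in two layers. First, I would show that the kernel $W_\eps$ from \eqref{eqn:weighted_kernel} is itself l.s.c.\ on $\RR^n \times \RR^n$. On $X \times X$ it is continuous, because $\phi_\eps \in C^1(\RR^n)$ and $p \in C^1(X)$ with $p > 0$. At a point $(x,y) \notin X \times X$ the prescribed value is $\infty$, and closedness of $X$ forces any approaching sequence $(x_k, y_k) \to (x,y)$ to eventually leave $X \times X$, so $W_\eps(x_k, y_k) = \infty$ for all large $k$, matching the boundary value. Second, I would invoke the standard principle of descent for energies of nonnegative l.s.c.\ kernels on Polish spaces: write $W_\eps = \sup_j W_{\eps,j}$ as a monotone increasing pointwise supremum of bounded continuous functions $W_{\eps,j} \colon \RR^n \times \RR^n \to [0,\infty)$; for each $j$ the functional $\mu \mapsto \iint W_{\eps,j}\dd\mu\dd\mu$ is weakly continuous, since $\mu_k \to \mu$ weakly implies $\mu_k \otimes \mu_k \to \mu \otimes \mu$ weakly (standard, via Stone--Weierstrass approximation by sums of products $g(x)h(y)$) and $W_{\eps,j}$ is bounded continuous; monotone convergence then identifies $\mathcal{E}_\eps(\mu)$ with $\sup_j \iint W_{\eps,j}\dd\mu\dd\mu$, which is l.s.c.\ as a pointwise supremum of weakly continuous functionals.

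For attainment of the minimum when $X$ is compact, I would restrict $\mathcal{E}_\eps$ to $\mathcal{P}(X)$. Prokhorov's theorem gives weak compactness of $\mathcal{P}(X)$, properness together with $\mu^* \in \mathcal{P}(X)$ supplies a finite infimum, and the lower semicontinuity established above guarantees that this infimum is attained. The main obstacle is clearly the middle step: the boundary behaviour of $W_\eps$ (where it jumps to $\infty$) rules out working with a single bounded continuous kernel, so one must pass through the l.s.c.\ approximation, and the transition from l.s.c.\ of the kernel to l.s.c.\ of the energy really does require the two-layer argument above rather than a direct Portmanteau-style inequality.
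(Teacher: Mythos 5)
Your proof is correct and follows essentially the same route as the paper's: the paper obtains lower semicontinuity by invoking the Portmanteau-type inequality for nonnegative l.s.c.\ kernels under weak convergence (of $\mu_k\otimes\mu_k$), which is precisely the monotone-approximation argument you spell out in detail, and then concludes attainment via closedness of $\mathcal{P}(X)$ and Prokhorov's theorem. The only cosmetic difference is properness, where the paper simply evaluates $\mathcal{E}_\eps$ at a Dirac mass $\delta_x$ for $x\in X$ (giving $\phi_\eps(0)p(x)^{-1}<\infty$ directly) rather than at $\mu^*$ via \cref{thm:lim_chi_square}; both are valid.
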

\begin{proof}
  Taking any $x \in X$, since $\phi_\eps$ is bounded and $p(x) > 0$, we see that $\cE_\eps(\delta_x) < \infty$ so $\cE_\eps$ is proper.
  Moreover, given weakly convergence $\mu_k \to \mu$, by Portmanteau theorem and the fact that $W_\eps$ is nonnegative and l.s.c., we conclude that $\cE_\eps$ is also l.s.c. 

  The set of probability distributions $\mathcal{P}(X) \subset \cP(\RR^n)$ is tight by the compactness of $X$.
  It is closed since if $\{\mu_k\} \subset \mathcal{P}(X)$ weakly converges to $\mu$, then by Portmanteau theorem, $\mu(X) \ge \limsup \mu(X) = 1$ so that $\mu \in \mathcal{P}(X)$.
  Hence by Prokhorov's theorem (Theorem 5.1.3 \citep{ambrosio2005gradient}), $\mathcal{P}(X)$ is (sequentially) compact.
  It is then an elementary result that any l.s.c. function attains its minimum on a compact set.
\end{proof}

We next prove \cref{prop:energy_cvx} regarding the convexity of $\cE_\eps$ and the uniqueness of its minimum.
\begin{proof}[Proof of \cref{prop:energy_cvx}]
  Let $\mu$ be any finite signed Borel measure.
  The compactness assumption and the fact that $p \in C^1(X)$ imply $p(x) > \delta$ for any $x \in X$ for some $\delta > 0$.
  Hence $p(x)^{-1/2} \le \delta^{-1/2}$, so the weighted measure $\tilde{\mu}$ defined by
  $\dd \tilde{\mu}(x) \defeq p^{-1/2}(x) \dd \mu(x)$ is also finite.
  By the definition of i.s.p.d. kernels, we have $\mathcal{E}_{W_\eps}(\mu) = \mathcal{E}_{k_\eps}(\tilde{\mu}) > 0$ if $\tilde{\mu}$ is not the zero measure, which is equivalent to $\mu$ not being zero since $p > 0$.
  Thus $W_\eps$ is i.s.p.d. on $X$.
  Also note that $(p(x)p(y))^{-1/2} < \delta^{-1}$ for all $x,y \in X$, so $\mathcal{E}_{W_\eps}$ is always finite.
  By Lemma 1.1 of \citet{pronzato2021minimum}, we conclude that $\mathcal{E}_{W_\eps}$ is strictly convex in $\cMsigned$, the space of finite signed measures, and in particular it is convex on $\mathcal{P}(X)$.
  Hence combined with the existence result from \cref{lem:energy_basics} we conclude $\mathcal{E}_\eps$ attains a unique minimum in $\mathcal{P}(X)$.
\end{proof}

Our goal in the rest of this section is to prove the $\Gamma$-convergence of $\cE_\eps$ to $\cE: \cP(\RR^n) \to [0,\infty]$ defined in \eqref{eqn:limit_energy}.
By \cref{lem:alt_chi_sqr}, we have $\cE(\mu) = \chi^2\infdivx{\mu}{\mu^*} + 1$.
Hence together we will have proved \cref{thm:gamma_conv}.

\begin{definition}[$\Gamma$-convergence]\label{defn:gamma_conv}
  A sequence of functionals $\cF_\eps: \cP(\RR^n) \to (-\infty,\infty]$ is said to \emph{$\Gamma$-converge} to $\cF: \cP(\RR^n) \to (-\infty, \infty]$, denoted as $\cF_\eps \gammaconv \cF$, if:
  \begin{enumerate}[label=(\alph*)]
    \item\label{itm:gamma_conv_1}
    For any sequence $\mu_\eps \in \cP(\RR^n)$ converging weakly to $\mu \in \cP(\RR^n)$, $\liminf_{\eps \to 0} \cF_\eps(\mu_\eps) \ge \cF(\mu)$;
    \item\label{itm:gamma_conv_2}
    For any $\mu \in \cP(\RR^n)$, there exists a sequence $\mu_\eps \in \cP(\RR^n)$ converging weakly to $\mu$ with $\limsup_{\eps\to 0} \cF_\eps(\mu_\eps) \le \cF(\mu)$.
  \end{enumerate}
\end{definition}

We will show $\cE_\eps \gammaconv \cE$ using Fourier transforms and Bochner's theorem.
\begin{definition}[Fourier transform]
  For $f \in L^1(\RR^n)$, its \emph{Fourier transform} $\hat f$ is the complex-valued function defined via
  \[
    \hat{f}(\xi) \defeq \int e^{-2\pi i \xi \cdot x} f(x)\dd x
  .\]

  More generally, for a signed finite measure $\mu \in \cMsigned(\RR^n)$, its \emph{Fourier transform} $\hat{\mu}$ is the complex-valued function defined via
  \[
    \hat{\mu}(\xi) \defeq \int e^{-2\pi i \xi \cdot x}\dd\mu(x)
  .\]
  This integral is always well-defined and moreover $\hat\mu$ is uniformly continuous;
  see \citet[Section 1.10]{borodachov2019discrete}.
\end{definition}

We will prove the following weak version (under the additional assumption that a mollifier $\phi$ is integrable) of Bochner's theorem suitable for our case. 
In particular we will need the Fourier inversion formula which is not given in the usual statement of Bochner's theorem.
On the other hand, we cannot directly use the Fourier inversion formula since it is not obvious how to check the integrability of $\hat{\phi}$ when $\phi$ is a mollifier.
\begin{lemma}\label{lem:bochner}
  Suppose $\phi \in L^1(\RR^n)$ is even, bounded, continuous, and $k(x,y)\defeq \phi(x-y)$ is i.s.p.d. on any compact sets.
  Then its Fourier transform $\hat\phi$ is real, nonnegative, and the following inversion formula holds:
  \[
    \phi(x) = \int e^{2\pi i x\cdot \xi}\hat\phi(\xi)\dd\xi \quad\text{ for all $x \in \RR^n$} \numberthis\label{eqn:fourier_inverse_fml}
  .\]
\end{lemma}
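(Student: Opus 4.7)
The plan is to split the claim into three steps: (i) $\hat\phi$ is real, (ii) $\hat\phi \ge 0$, and (iii) the inversion formula holds. Throughout I use that $\phi \in L^1(\RR^n)$ implies $\hat\phi$ is bounded and (uniformly) continuous. Step (i) is immediate: because $\phi$ is real and even, writing $e^{-2\pi i\xi\cdot x} = \cos(2\pi\xi\cdot x) - i\sin(2\pi\xi\cdot x)$ kills the imaginary part by odd symmetry of $\sin$, so $\hat\phi$ is real (and in fact even).

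For step (ii), the strategy is to feed real-valued Schwartz test functions into the i.s.p.d. inequality and translate to the Fourier side. Concretely, for a real $\psi \in \mathcal{S}(\RR^n)$ the signed measure $\dd\mu = \psi\dd x$ is finite, so i.s.p.d. gives $\iint \phi(x-y)\psi(x)\psi(y)\dd x\dd y \ge 0$. Rewriting the left side as $\int \psi\,(\phi*\psi)\dd x$ and applying Parseval (both $\psi$ and $\phi*\psi$ lie in $L^1 \cap L^\infty \subset L^2$), this equals $\int \hat\phi(\xi)\lvert\hat\psi(\xi)\rvert^2 \dd\xi \ge 0$. Since $\hat\psi$ ranges over a dense class of symmetric Schwartz functions (and $\hat\phi$ is continuous and even), localizing $\lvert\hat\psi\rvert^2$ near any $\xi_0$ with $\hat\phi(\xi_0) < 0$ would yield a contradiction; hence $\hat\phi \ge 0$ everywhere.

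Step (iii) is the main obstacle, because we do not yet know $\hat\phi \in L^1$, which is what makes the integral on the right of \eqref{eqn:fourier_inverse_fml} sensible. The plan is to use the Gaussian family $G_t(x) \defeq t^{-n/2} e^{-\pi\norm{x}^2/t}$ with $\hat{G_t}(\xi) = e^{-\pi t \norm{\xi}^2}$. Since $\phi * G_t \in L^1$ and its Fourier transform $\hat\phi(\xi) e^{-\pi t\norm{\xi}^2}$ is in $L^1$ (because $\hat\phi$ is bounded and the Gaussian decays), the classical Fourier inversion theorem applies to $\phi * G_t$, giving
\[
  (\phi*G_t)(x) = \int \hat\phi(\xi) e^{-\pi t\norm{\xi}^2} e^{2\pi i x\cdot \xi}\dd\xi
\]
for every $x$. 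Evaluating at $x = 0$, the left side tends to $\phi(0)$ as $t \to 0$ by continuity of $\phi$ and the standard mollifier argument (\cref{prop:conv_pw_conv} applied with the Gaussian family, say). On the right, $\hat\phi \ge 0$ from step (ii) and the factor $e^{-\pi t\norm{\xi}^2}$ increases monotonically to $1$, so monotone convergence yields $\int \hat\phi(\xi)\dd\xi = \phi(0) < \infty$. Thus $\hat\phi \in L^1$.

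Once $\hat\phi \in L^1$ is secured, the inversion formula follows from the classical fact that when both $\phi$ and $\hat\phi$ are in $L^1$, the function $x \mapsto \int \hat\phi(\xi) e^{2\pi i x \cdot \xi}\dd\xi$ is continuous and agrees with $\phi$ almost everywhere; since $\phi$ is continuous by hypothesis, equality holds at every $x$. The delicate point in this whole argument is the interplay between step (ii) and step (iii): nonnegativity of $\hat\phi$ is precisely what lets monotone convergence close the loop and upgrade finiteness at $x=0$ to global integrability.
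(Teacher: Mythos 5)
Your proposal is correct, and steps (i) and (iii) essentially mirror the paper's argument: the paper also multiplies $\hat\phi$ by a Gaussian $e^{-\sigma^2\norm{\xi}_2^2}$, identifies the resulting integral with a Gaussian mollification of $\phi$, evaluates at $x=0$, and uses monotone convergence together with $\hat\phi\ge 0$ to conclude $\hat\phi\in L^1(\RR^n)$ — exactly the "delicate interplay" you flag. (The paper then passes to the limit $\sigma\to 0$ at a general $x$ by dominated convergence rather than invoking the a.e.\ inversion theorem plus continuity, but these are interchangeable.) Where you genuinely diverge is step (ii). The paper proves nonnegativity of $\hat\phi$ by a Fej\'er-type averaging: it shows $\hat\phi(\xi)=\lim_{T\to\infty}T^{-n}\int_{[0,T]^n}\int_{[0,T]^n}e^{-2\pi i(t-s)\cdot\xi}\phi(t-s)\dd t\dd s$ and recognizes the real part of each double integral as the i.s.p.d.\ energy of the measures $\ind_{[0,T]^n}\cos(2\pi t\cdot\xi)\dd t$ and $\ind_{[0,T]^n}\sin(2\pi t\cdot\xi)\dd t$, so no Plancherel theorem or density argument is needed. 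Your route — testing the i.s.p.d.\ inequality against $\psi\dd x$ for Schwartz $\psi$, passing to $\int\hat\phi\abs{\hat\psi}^2\ge 0$ by Parseval, and localizing — is also valid, but the localization deserves one more sentence: for real $\psi$ the weight $\abs{\hat\psi}^2$ is necessarily even, so you should note (as you hint) that a putative $\xi_0$ with $\hat\phi(\xi_0)<0$ comes paired with $-\xi_0$ where $\hat\phi$ is equally negative by evenness, and then take e.g.\ $\hat\psi(\xi)=\eta(\xi-\xi_0)+\eta(-\xi-\xi_0)$ for a compactly supported bump $\eta$ to get a strictly negative energy and the desired contradiction. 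Your approach buys a shorter, more standard derivation of positivity at the cost of Plancherel and this small symmetry bookkeeping; the paper's buys a self-contained argument using only dominated convergence and the i.s.p.d.\ hypothesis applied to explicit truncated trigonometric densities.
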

\begin{proof}
  The proof is adapted from that of \citet[Theorem 2.7]{varadhan2001probability} and is extended to the multi-dimensional case.

  Since $\hat\phi(\xi) \defeq \int e^{-2\pi i x\cdot \xi} \phi(x) \dd x$ and $\phi(x) = \phi(-x)$, with a change of variable $x' = -x$, we obtain $\overline{\hat\phi(\xi)} = \hat\phi(\xi)$, so $\hat\phi$ is real.

  Next we show that $\hat\phi$ is nonnegative.  
  For $T > 0$, we compute, for a fixed $\xi \in \RR^n$,
  \begin{align*}
		&\frac{1}{T^n} \int_{[0,T]^n}\int_{[0,T]^n} e^{-2\pi i (t-s)\cdot \xi}\phi(t-s)\dd t \dd s \\
		=& \frac{1}{T^n}\int_{[-T, T]^n}\left( \int_{\prod_i [\abs{u_i}, 2T-\abs{u_i}]} 2^{-n} \dd v \right) e^{-2\pi i u\cdot \xi} \phi(u)\dd u \\
    =& \frac{1}{T^n} \int_{[-T, T]^n} \left( \prod_{i=1}^n \frac{2T - 2\abs{u_i}}{2} \right)  e^{-2\pi i u\cdot \xi}\phi(u) \dd u \\
    =& \int_{[-T, T]^n} \left(\prod_{i=1}^n \left(1 - \frac{\abs{u_i}}{T}\right)\right) e^{-2\pi i u \cdot \xi}\phi(u) \dd u,
  \end{align*}
  where we have used change of variable $u = t-s$, $v = t+s$.
  Since $\phi \in L^1(\RR^n)$, by dominated convergence theorem, we have as $T \to \infty$
  \begin{align*}
    \hat\phi(\xi) = \int e^{-2\pi i \xi\cdot x}\phi(x)\dd x = \lim_{T \to \infty}\frac{1}{T^n} \int_{[0,T]^n}\int_{[0,T]^n} e^{-2\pi i (t-s)\cdot \xi}\phi(t-s)\dd t \dd s.
  \end{align*}
  For $t, s \in \RR^n$, we have
  \begin{align*}
    &\Re \left(e^{-2\pi i (t-s)\cdot \xi}\phi(t-s)) \right) \\
    =& \cos(2\pi t\cdot \xi)k(t,s)\cos(2\pi s\cdot \xi) + \sin(2\pi t\cdot \xi)k(t,s)\sin(2\pi s\cdot \xi).
  \end{align*}
  For a fixed $T$, if we define a finite measure $\mu$ as $\dd\mu = \ind_{t\in [0,T]^n} \cos(2\pi t \cdot \xi) \dd t$, then $k$ being i.s.p.d. implies
  \[
    \iint \phi(t-s)\dd\mu\dd\mu = \int_{[0,T]^n}\int_{[0,T]^n} \cos(2\pi t\cdot \xi)\phi(t-s)\cos(2\pi s \cdot \xi) \dd t \dd s \ge 0
  ,\]
  and similarly for $\sin$.
  Since $\hat\phi$ is real, we conclude that $\hat{\phi}$ is nonnegative.

  Finally we prove \cref{eqn:fourier_inverse_fml}.
  For $\sigma > 0$, define $\hat\phi_\sigma(\xi) \defeq \hat\phi(\xi)e^{-\sigma^2\norm{\xi}^2_2}$.
  Since $\hat\phi$ is bounded (because $\phi \in L^1(\RR^n)$), we see that $\hat\phi_\sigma \in L^1(\RR^n)$.
  We compute, for $x \in \RR^n$, using Fubini's theorem,
  \begin{align*}
    \int e^{2\pi i x \cdot \xi} \hat\phi_{\sigma}(\xi)\dd\xi &= \int \hat\phi(\xi)e^{-\sigma^2\norm{\xi}^2_2} e^{2\pi i x \cdot \xi} \dd \xi \\
                                                             &= \int \left( \int e^{-2\pi i y\cdot \xi}\phi(y)\dd y\right) e^{-\sigma^2\norm{\xi}^2_2} e^{2\pi i x \cdot \xi} \dd \xi \\
                                                             &= \int \left( \int e^{-2\pi i (y-x)\cdot \xi} e^{-\sigma^2\norm{\xi}_2^2}\dd\xi\right) \phi(y) \dd y \\
                                                             &= \int (\pi/\sigma^2)^{n/2} e^{-\pi^2\norm{x-y}^2_2/\sigma^2} \phi(y) \dd y,
  \end{align*}
  where we use the Fourier transform formula \citep[(4.4.1)]{borodachov2019discrete} of the Gaussian distribution.
  Notice that $p_{\sigma}(y) \defeq (\pi/\sigma^2)^{n/2} e^{-\pi^2\norm{x-y}^2_2/\sigma^2}$ is the density of a multivariate Gaussian centered at $x$ with covariance $\sigma^2/(2\pi^2) \cdot \bm I$.
  Hence $\int e^{2\pi i x \cdot \xi} \hat\phi_{\sigma}(\xi)\dd\xi = (p_\sigma * \phi)(0)$.
  Since $\phi$ is bounded by assumption, with $x = 0$ we find $\int \hat\phi_\sigma(\xi)\dd\xi \le \norm{\phi}_\infty$.
  Taking $\sigma \to 0$, by monotone convergence theorem, we have $\int \hat\phi(\xi)\dd\xi \le \norm{\phi}_\infty$, so together with the fact that $\hat\phi \ge 0$ we have $\hat\phi \in L^1(\RR^n)$.
  Finally, since $\hat\phi_\sigma \le \hat\phi$, by dominated convergence theorem and \cref{prop:conv_pw_conv} (note $p_\sigma$ is centered at $x$), we have
  \[
    \int e^{2\pi i x \cdot \xi} \hat\phi(\xi)\dd\xi = \lim_{\sigma\to 0}\int e^{2\pi i x \cdot \xi} \hat\phi_\sigma(\xi)\dd\xi  = \lim_{\sigma \to 0}(p_\sigma * \phi)(0) = \phi(x)
  .\]

\end{proof}

\begin{proposition}\label{prop:interaction_energy_fourier}
  Given $\phi: \RR^n \to \RR$ satisfying the assumptions of \cref{lem:bochner}, for any $\nu \in \cMsigned(\RR^n)$, it holds that
  \begin{align*}
    \iint \phi(x-y)\dd\nu(x)\dd\nu(y) = \int \abs{\hat{\nu}(\xi)}^2\hat\phi(\xi)\dd\xi,
  \end{align*}
\end{proposition}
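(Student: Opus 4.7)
The plan is to substitute the Fourier inversion formula for $\phi$ obtained in Lemma (Bochner) into the integrand $\phi(x-y)$, swap the order of integration via Fubini's theorem, and then recognize the inner double integral as $\abs{\hat\nu(\xi)}^2$. Specifically, Lemma (Bochner) gives $\phi(x-y) = \int e^{2\pi i(x-y)\cdot\xi}\hat\phi(\xi)\dd\xi$ pointwise, so
\[
\iint \phi(x-y)\dd\nu(x)\dd\nu(y) = \iint \int e^{2\pi i(x-y)\cdot\xi}\hat\phi(\xi)\dd\xi\,\dd\nu(x)\dd\nu(y),
\]
and after interchanging the order of integration we would obtain
\[
\int \hat\phi(\xi) \left(\int e^{2\pi i x\cdot\xi}\dd\nu(x)\right)\left(\int e^{-2\pi i y\cdot\xi}\dd\nu(y)\right)\dd\xi.
\]
Since $\nu$ is a real signed measure, the first inner integral is $\overline{\hat\nu(\xi)}$ and the second is $\hat\nu(\xi)$, so their product is exactly $\abs{\hat\nu(\xi)}^2$, giving the claimed identity.

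The only nontrivial step is justifying Fubini. Here is where the preparatory work in Lemma (Bochner) pays off: it shows both that $\hat\phi \ge 0$ and that $\int \hat\phi(\xi)\dd\xi \le \norm{\phi}_\infty < \infty$, i.e., $\hat\phi \in L^1(\RR^n)$. Using the Jordan decomposition $\nu = \nu^+ - \nu^-$, the total variation $\abs{\nu}$ is a finite positive measure. Since $\abs{e^{2\pi i(x-y)\cdot\xi}} = 1$, the triple integral of the absolute value of the integrand is bounded by
\[
\abs{\nu}(\RR^n)^2 \int \hat\phi(\xi)\dd\xi < \infty,
\]
which legitimizes the application of Fubini's theorem to all four signed combinations $\dd\nu^\pm(x)\dd\nu^\pm(y)$.

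I expect no serious obstacles beyond ensuring the Fubini bookkeeping is clean; the identity itself is a direct consequence of the Fourier inversion formula established in Lemma (Bochner). The argument is short and essentially a transcription of the standard Parseval-type computation, adapted to the setting of a finite signed measure paired with an i.s.p.d.\ convolution kernel.
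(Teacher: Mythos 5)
Your proposal is correct and follows essentially the same route as the paper: substitute the inversion formula from the Bochner lemma, exchange the order of integration by Fubini, and factor the resulting double integral over $\nu\times\nu$ into $\hat\nu(\xi)\overline{\hat\nu(\xi)}=\abs{\hat\nu(\xi)}^2$. Your justification of Fubini is in fact slightly more explicit than the paper's (which only remarks that the measures are finite and the exponential is bounded), since you spell out that $\hat\phi\in L^1(\RR^n)$ from the lemma and bound the triple integral by $\abs{\nu}(\RR^n)^2\norm{\hat\phi}_1$.
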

\begin{proof}
  By \cref{lem:bochner},
  \begin{align*}
    \iint \phi(x-y) \dd\nu(x)\dd\nu(y) &= \iint \left( \int e^{-2\pi i(x-y)\cdot \xi} \hat\phi(\xi)\dd\xi\right) \dd\nu(x)\dd\nu(y) \\
                                       &= \int \left( \int e^{-2\pi i x\cdot \xi}\dd\nu(x) \right) \left( \int e^{2\pi i y \cdot \xi}\dd\nu(y) \right) \hat\phi(\xi)\dd\xi \\
                                       &= \int \hat\nu(\xi)\overline{\hat\nu(\xi)} \hat\phi(\xi)\dd\xi 
                                       = \int \abs{\hat\nu(\xi)}^2 \hat\phi(\xi)\dd\xi.
  \end{align*}
  where we use Fubini's theorem (all measures are finite and $e^{-2\pi i \cdot}$ is bounded) to exchange the order of integration.
\end{proof}

\begin{proof}[Proof of \cref{thm:gamma_conv}]
  First note that \cref{defn:gamma_conv}\ref{itm:gamma_conv_2} is immediate from \cref{thm:lim_chi_square} with $\mu_\eps = \mu$: if $\cE(\mu) = \infty$, then trivially $\limsup_{\eps\to 0} \cE_\eps(\mu) \le \cE(\mu)$; otherwise we apply \cref{thm:lim_chi_square}.
  So we focus on proving criterion \cref{defn:gamma_conv}\ref{itm:gamma_conv_1}.

  Fix a sequence $\mu_\eps \in \cP(\RR^n)$ that converges weakly to $\mu \in \cP(\RR^n)$, and our goal is to show $\liminf_{\eps\to 0} \cE_\eps(\mu_\eps) \ge \cE(\mu)$.
  Without loss of generality, we may assume $\cE_\eps(\mu_\eps) < \infty$ for all $\eps > 0$ (these terms have no effect in $\liminf$), which implies $\mu_\eps(X) = 1$.
  By Portmanteau's theorem and the assumption that $X$ is closed, we have $\mu(X) \ge \limsup_{\eps\to 0} \mu_\eps(X) = 1$.
  So all of $\mu_\eps$ and $\mu$ will have support in $X$.

  For $m > 0, \eps > 0$, define a nonnegative measure $\nu_{\eps, m}$ by 
  \[
    \dd\nu_{\eps, m}(x) \defeq h_m(x) p^{-1/2}(x)\dd\mu_\eps(x),
  \] where $h_m: \RR^n \to \RR$ is a continuous monotonically decreasing cutoff function satisfying $h_m(x) = 1$ if $\norm{x}_2^2 < m$ and $h_m(x) = 0$ if $\norm{x}_2^2 > m+1$, and that $h_{m}(x) \le h_{m'}(x)$ for $m < m'$.
  Then since $p > 0$ is continuous, it is bounded below on any compact set in $X$, and hence $\nu_{\eps,m}$ is finite.
  Also define, for $m > 0$, a nonnegative measure $\nu_m$ by
  \[
    \dd\nu_{m}(x) \defeq h_m(x) p^{-1/2}(x)\dd\mu(x)
  ,\]
  which is again finite following the same reasoning.

  Then for any $m > 0$, denoting $\dd\nu_\eps(x) \defeq p^{-1/2}(x)\dd\mu_\eps(x)$,
  \begin{align*}
    \cE_\eps(\mu_\eps) &= \iint \phi_\eps(x-y) \dd\nu_{\eps}(x)\dd\nu_\eps(y) \\
                       &\ge \iint \phi_\eps(x-y) \dd\nu_{\eps, m}(x)\dd\nu_{\eps, m}(y) 
                       = \int \abs{\hat\nu_{\eps,m}(\xi)}^2 \hat\phi_\eps(\xi)\dd\xi,\numberthis\label{eqn:gamma_conv_before_fatou}
  \end{align*}
  where we apply \cref{prop:interaction_energy_fourier} for the last equality.
  On the other hand, note that
  \begin{align*}
    \hat\nu_{\eps,m}(\xi) = \int e^{-2\pi i \xi\cdot x}\dd\nu_{\eps, m}(x) = \int e^{-2\pi i \xi \cdot x} h_m(x)p^{-1/2}(x)\dd\mu_\eps(x).
  \end{align*}
  Since $\mu_\eps \to \mu$ weakly and the last integrand is a continuous bounded function, we have 
  \[
    \lim_{\eps \to 0} \hat\nu_{\eps,m}(\xi) = \int e^{-2\pi i \xi \cdot x} h_m(x)p^{-1/2}(x)\dd\mu(x) = \hat\nu_m(\xi)
  .\]
  On the other hand, $\hat\phi_\eps(\xi) = \int e^{-2\pi i \xi \cdot x} \phi_\eps(x) \dd x$.
  Since by \cref{defn:mollifiers}, $\phi_\eps \dd x$ converges to $\delta_0$ in probability (and in particular weakly), we have $\lim_{\eps \to 0} \hat\phi_\eps(\xi) = 1$.

  Applying Fatou's lemma to \eqref{eqn:gamma_conv_before_fatou}, we obtain, for any $m > 0$,
  \[
    \liminf_{\eps \to 0} \cE_\eps(\mu_\eps) \ge \int \abs{\hat\nu_m(\xi)}^2 \dd \xi
  .\]
  If $\int \abs{\hat\nu_m(\xi)}^2 \dd \xi = \infty$ for any $m > 0$, then we are done since $\liminf_{\eps \to 0} \cE_\eps(\mu_\eps) = \infty$.
  Otherwise, by \citet[Lemma 2.11]{kuhn2016probability}, for every $m > 0$, $\nu_m$ has density in $L^2(\RR^n)$.
  This imples $\mu$ has density everywhere. 
  Suppose $\dd\mu(x) = q(x) \dd \cL_n(x)$.
  By Plancherel's theorem and the monotone convergence theorem, we have
  \begin{align*}
    \liminf_{\eps \to 0} \cE_\eps(\mu_\eps) &\ge \lim_{m \to \infty}\int \abs{\hat\nu_m(\xi)}^2 \dd \xi\\
                                            &= \lim_{m\to\infty}\int_X \left(h_m(x) p^{-1/2}(x)q(x)\right)^2\dd x \\
                                                           &= \int_X p^{-1}(x)q(x)^2\dd x = \cE(\mu).
  \end{align*}
  This completes the proof of $\cE_\eps \gammaconv \cE$.

  Now suppose $X$ is compact, and let $\mu_\eps^* \defeq \arg\min_{\mu \in \cP(X)}\cE_\eps(\mu)$.
  To establish $\mu^*_\eps \to \mu^*$ weakly (resp. $\lim_{\eps \to 0}\cE_\eps(\mu^*_\eps) = \cE(\mu^*) = 1$), it suffices to show that every subsequence of $\{\mu^*_\eps\}$ (resp. $\{\cE_\eps(\mu_\eps^*)\}$) has a further convergence subsequence converging to $\mu^*$ (resp. $\cE(\mu^*)$).
  With a slight abuse of notation, we assume the sequence of $\eps$ is chosen so that $\{\mu_\eps^*\}$ (resp. $\{\cE_\eps(\mu_\eps^*)\}$) is already some subsequence of the original sequence.
  As argued in the proof of \cref{lem:energy_basics}, $\cP(X)$ is compact with respect to weak convergence.
  Hence $\{\mu_\eps^*\}$ has a weakly convergence subsequence $\mu_{\eps_k}^* \to \nu$ for some $\nu \in \cP(X)$.
  The $\Gamma$-convergence $\cE_\eps \gammaconv \cE$ implies
  \begin{align*}
    \liminf_{k\to\infty} \cE_{\eps_k}(\mu^*) \ge \liminf_{k\to\infty} \cE_{\eps_k}(\mu_{\eps_k}^*) \ge \cE(\nu) \ge \limsup_{k\to\infty}\cE_{\eps_k}(\nu) \ge \limsup_{k\to\infty} \cE_{\eps_k}(\mu_{\eps_k}^*),
  \end{align*}
  where we have used, for each inequality, $\mu_{\eps_k}^* = \argmin_{\cP(X)} \cE_{\eps_k}$, \cref{defn:gamma_conv}\ref{itm:gamma_conv_1}, the first paragraph of this proof, and again the fact that $\mu_{\eps_k}^* = \argmin_{\cP(X)} \cE_{\eps_k}$.
  Since $\liminf_{k\to\infty} \cE_{\eps_k}(\mu^*) = \cE(\mu^*)$ by \cref{thm:lim_chi_square}, we have
  \[
    \cE(\mu^*) \ge \lim_{k\to\infty} \cE_{\eps_k}(\mu_{\eps_k}^*) = \cE(\nu) \ge \cE(\mu^*)
  ,\]
  where the last inequality follows because $\mu^*$ is the minimizer of $\cE$.
  Then $\lim_{k\to\infty} \cE_{\eps_k}(\mu_{\eps_k}^*) = \cE(\mu^*) = 1$.
  Moreover, $\chi^2\infdivx{\nu}{\mu^*} = 0$. This can only happen if $\nu$ and $\mu^*$ agree on all Borel sets, so $\nu = \mu^*$.
\end{proof}

\subsection{Differential calculus of $\mathcal{E}_\eps$ in $\mathcal{P}_2(\RR^n)$}\label{sec:diff_calc_proofs}
\begin{lemma}\label{lem:diff_lem}
  Let $x_0\in \RR$, $h > 0$, and $\Omega \subset \RR^m$ be a compact set.  
  Suppose $f: (x_0-h, x_0+h) \times \Omega \to \RR$ is jointly continuous and the derivative $\pdv{x} f: (x_0-h,x_0+h)\times \Omega \to \RR$ exists and is jointly continuous.
  Then $\int_\Omega f(x,\omega)\dd\omega$ is differentiable for $x \in (x_0-h, x_0+h)$, and
  \[
    \dv{x} \int_\Omega f(x, \omega) \dd \omega = \int_\Omega \pdv{x} f(x, \omega)\dd\omega
  \]
  where the integration is with respect to the Lebesgue measure $\mathcal{L}_m$ on $\Omega$.
\end{lemma}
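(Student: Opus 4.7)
The plan is to prove this via the standard route: rewrite the difference quotient as an integral over $\Omega$ using linearity, apply the Mean Value Theorem pointwise in $\omega$, and then upgrade pointwise convergence of the integrand to uniform convergence using the Heine--Cantor theorem on a suitable compact set. Since $\Omega$ is compact, $\mathcal{L}_m(\Omega) < \infty$, and uniform convergence is enough to interchange the limit with the integral.

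More concretely, fix $x \in (x_0 - h, x_0 + h)$ and choose $r > 0$ small enough that $[x-r, x+r] \subset (x_0 - h, x_0 + h)$. For any $t$ with $0 < |t| < r$, linearity gives
\[
\frac{1}{t}\left(\int_\Omega f(x+t,\omega)\dd\omega - \int_\Omega f(x,\omega)\dd\omega\right) = \int_\Omega \frac{f(x+t,\omega) - f(x,\omega)}{t}\dd\omega,
\]
where both sides are well defined because $f$, being continuous on the compact set $[x-r, x+r] \times \Omega$, is bounded there. By the single-variable Mean Value Theorem applied to $s \mapsto f(s,\omega)$ on the interval with endpoints $x$ and $x+t$, for each $\omega \in \Omega$ there exists $\xi(t,\omega)$ strictly between $x$ and $x+t$ such that
\[
\frac{f(x+t,\omega) - f(x,\omega)}{t} = \pdv{x} f(\xi(t,\omega), \omega).
\]

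The key step is upgrading to uniform convergence in $\omega$. The set $K \defeq [x-r,x+r] \times \Omega \subset \RR \times \RR^m$ is compact, and $\pdv{x} f$ is jointly continuous on $K$ by hypothesis, so the Heine--Cantor theorem gives uniform continuity on $K$. Thus for every $\veps > 0$ there exists $\delta \in (0, r)$ such that $|t| < \delta$ implies $|\xi(t,\omega) - x| < \delta$ and hence
\[
\left|\pdv{x} f(\xi(t,\omega),\omega) - \pdv{x} f(x,\omega)\right| < \veps \quad \text{for all } \omega \in \Omega.
\]
Integrating over $\Omega$ and using $\mathcal{L}_m(\Omega) < \infty$ yields
\[
\left| \int_\Omega \frac{f(x+t,\omega) - f(x,\omega)}{t}\dd\omega - \int_\Omega \pdv{x} f(x,\omega)\dd\omega \right| \le \veps\, \mathcal{L}_m(\Omega),
\]
and sending first $t \to 0$ and then $\veps \to 0$ gives both the existence of the derivative and the claimed identity.

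There is no real obstacle here since this is the classical Leibniz rule; the only thing to be careful about is never leaving the compact neighborhood $K$ so that uniform continuity of $\pdv{x} f$ applies and the integrands remain bounded. The joint continuity of $\pdv{x} f$ (stronger than merely separate continuity together with an $L^1$ dominating function) is exactly what lets us avoid invoking dominated convergence and run the cleaner uniform-convergence argument.
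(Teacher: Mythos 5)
Your proof is correct, and it takes a genuinely different (though equally classical) route from the paper's. The paper proves this lemma by reduction to the dominated-convergence-based differentiation lemma of \citet[Theorem 6.28]{klenke2013probability}: it constructs the dominating function $\theta(\omega) \defeq \sup_{x' \in [x-t,x+t]} \abs{\pdv{x} f(x',\omega)}$, which is finite by continuity and integrable because $\mathcal{L}_m(\Omega) < \infty$, and then cites the external result. You instead give a self-contained argument: the mean value theorem converts the difference quotient into $\pdv{x} f(\xi(t,\omega),\omega)$, and Heine--Cantor on the compact set $[x-r,x+r]\times\Omega$ upgrades pointwise to \emph{uniform} convergence of the integrand as $t \to 0$, after which the interchange of limit and integral is immediate from $\mathcal{L}_m(\Omega) < \infty$. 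Both arguments exploit exactly the same two hypotheses (compactness of the parameter box times $\Omega$, and joint continuity of $\pdv{x} f$); the paper's version is shorter because it offloads the measure-theoretic work to a cited lemma, while yours is more elementary and makes explicit why no dominating function is needed when the convergence is already uniform. The only points worth stating for completeness are the (routine) measurability of $\omega \mapsto \pdv{x} f(x,\omega)$, which follows from continuity in $\omega$, and the observation that uniform continuity is applied to the pair of points $(\xi(t,\omega),\omega)$ and $(x,\omega)$ sharing the same second coordinate, which your choice $\delta \in (0,r)$ and the bound $\abs{\xi(t,\omega)-x} \le \abs{t}$ already guarantee stay inside $K$. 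Neither is a gap.
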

\begin{proof}
  Fix $x \in (x_0 - h, x_0 + h)$ and let $t > 0$ be small enough such that $[x-t,x+t] \subset (x_0 - h, x_0 + h)$.
  Note that $\int_\Omega f(x, \omega)\dd\omega$ is well-defined by the dominated convergence theorem since $\sup_{x \in [x-t, x+t], \omega\in \Omega} f(x,\omega) < \infty$ and $\mathcal{L}_m(\Omega) < \infty$.
  Define $\theta(\omega) \defeq \sup_{x \in [x-t, x+t]} \abs{\pdv{x} f(x,\omega)}$ which is finite since $\pdv{x}f(x,\omega)$ is continuous, so that $\theta(\omega) \ge \pdv{x} \abs{f(x,\omega)}$ for all $x \in [x-t,x+t]$ and $\theta$ is integrable since $\mathcal{L}_m(\Omega) < \infty$.
  Hence by the differentiation lemma \citep[Theorem 6.28]{klenke2013probability} we are done.
\end{proof}

The following lemma is similar to \citet[Proposition 3]{korba2021kernel} but with different assumptions: we do not put any integrability assumptions on $W_\eps$, but we do restrict measures to have compact support.
For a differentiable $f: \RR^n \times \RR^n \to \RR$, we use $\nabla_1 f(x,y)$ to denote the gradient with respect to $x$.
We also use $\hess_1 f(x,y)$ to denote the Hessian of $f$ with respect to $x$.
We similarly denote $\nabla_2 f(x,y)$ and $\hess_2 f(x, y)$.
\begin{lemma}\label{lem:diff_calculus}
  Assume $\mu \in \cP_2(\RR^n)$ has density $q \in C_c^1(\RR^n)$. 
  Let $\bm\xi\in C(\RR^n;\RR^n)$.
  Denote $\bm s(x,t) \defeq x + t\bm\xi(x)$.
  Then for all $t > 0$ and all $\eps > 0$,
  \begin{align}\label{eqn:energy_first_diff}
    \dv{}{t} \mathcal{E}_\eps(\mu_t) &= 2\iint \left( \bm\xi(x)^\top \nabla_1 W_\eps(\bm s(x,t), \bm s(y, t)) \right)
      \dd\mu(x)\dd\mu(y).
  \end{align}
  In particularly,
  \begin{align}\label{eqn:energy_first_diff_at_0}
  \dv{}{t}\bigg|_{t=0} \mathcal{E}_\eps(\mu_t) &= 2\iint \left( \bm\xi(x)^\top \nabla_1 W_\eps(x, y) \right)
      \dd\mu(x)\dd\mu(y).
  \end{align}
  Moreover, 
  \begin{align}\label{eqn:energy_hess_at_0}
    \dv[2]{}{t}\bigg|_{t=0} \mathcal{E}_\eps(\mu_t) &= 2\iint \left( \bm\xi(x)^\top \nabla_1\nabla_2 W_\eps(x, y)\bm\xi(y)
    + \bm\xi(x)^\top \hess_1 W_\eps(x, y)\bm\xi(x)\right)
      \dd\mu(x)\dd\mu(y).
  \end{align}
\end{lemma}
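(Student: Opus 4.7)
The plan is to reduce to an integral over a compact set via the pushforward formula, then apply \cref{lem:diff_lem} (differentiation under the integral sign) together with the chain rule, and finally exploit the symmetry of $W_\eps$ to account for the factor of $2$.

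First, by the change-of-variables formula for pushforward measures, writing $\dd\mu(x) = q(x)\dd x$ and denoting $K \defeq \supp(q)$ (which is compact by assumption), I would rewrite
\begin{align*}
\mathcal{E}_\eps(\mu_t) = \iint_{K \times K} W_\eps(\bm s(x, t), \bm s(y, t))\, q(x)\, q(y)\, \dd x\, \dd y.
\end{align*}
For any fixed $t_0 \ge 0$ and $t$ in a small neighborhood of $t_0$, since $\bm\xi$ is smooth and $K$ compact, the points $\bm s(x, t)$ and $\bm s(y, t)$ lie in a common compact subset of $\RR^n$ for all $(x, y) \in K \times K$; on that subset $W_\eps$ is $C^1$ since $\phi_\eps \in C^1$ and $p \in C^1$ with $p > 0$. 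The chain rule then gives
\begin{align*}
\partial_t W_\eps(\bm s(x, t), \bm s(y, t)) = \bm\xi(x)^\top \nabla_1 W_\eps(\bm s(x,t), \bm s(y,t)) + \bm\xi(y)^\top \nabla_2 W_\eps(\bm s(x,t), \bm s(y,t)),
\end{align*}
which is jointly continuous in $(t, x, y)$, so \cref{lem:diff_lem} permits interchange of differentiation and integration.

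Next, to obtain the factor of $2$ in \cref{eqn:energy_first_diff}, I would invoke the symmetry $W_\eps(u, v) = W_\eps(v, u)$, which yields $\nabla_2 W_\eps(u, v) = \nabla_1 W_\eps(v, u)$. Relabeling $x \leftrightarrow y$ in the integral of the second chain-rule term makes it coincide with the first, producing \cref{eqn:energy_first_diff}; specializing to $t = 0$ then recovers \cref{eqn:energy_first_diff_at_0}.

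For \cref{eqn:energy_hess_at_0}, I would differentiate \cref{eqn:energy_first_diff} once more at $t = 0$, which tacitly requires $W_\eps \in C^2$ (as holds whenever this lemma is invoked, e.g., under the $p \in C_c^2$ hypothesis of \cref{prop:displacement_cvx_mu_star}). Applying \cref{lem:diff_lem} a second time to the integrand $\bm\xi(x)^\top \nabla_1 W_\eps(\bm s(x, t), \bm s(y, t))$ and using the chain rule gives
\begin{align*}
\dv{}{t}\bigg|_{t=0}\bm\xi(x)^\top \nabla_1 W_\eps(\bm s(x,t), \bm s(y,t)) = \bm\xi(x)^\top \hess_1 W_\eps(x,y)\bm\xi(x) + \bm\xi(x)^\top \nabla_2\nabla_1 W_\eps(x,y)\bm\xi(y),
\end{align*}
and Clairaut's theorem identifies $\nabla_2 \nabla_1 W_\eps = \nabla_1 \nabla_2 W_\eps$, yielding \cref{eqn:energy_hess_at_0}. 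The main obstacle is purely technical bookkeeping: verifying joint continuity of the integrands and their $t$-derivatives uniformly over a compact neighborhood, which follows from compactness of $K$ together with smoothness of $\bm\xi$ and of $W_\eps$ on the (compact) image of $K$ under $\bm s(\cdot, t)$ for $t$ near $t_0$.
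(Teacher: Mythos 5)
Your proposal is correct and follows essentially the same route as the paper's proof: rewrite $\mathcal{E}_\eps(\mu_t)$ via the pushforward formula as an integral over the compact support of $q$, invoke \cref{lem:diff_lem} to differentiate under the integral sign, and use the chain rule together with the symmetry of $W_\eps$ to produce the factor of $2$, then repeat for the second derivative. You are in fact more explicit than the paper about the relabeling step that yields the factor of $2$ and about the $C^2$ regularity tacitly needed for \eqref{eqn:energy_hess_at_0}, both of which the paper glosses over.
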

\begin{proof}
  We compute
  \begin{align*}
    \dv{}{t} \mathcal{E}_\eps(\mu_t) &= \dv{}{t} \left(2\iint W_\eps(\bm s(x, t), \bm s(y, t)) \dd\mu(x)\dd\mu(y) \right).
  \end{align*}
  Since $\mu$ has compact support and $(x,y,t) \mapsto W_\eps(\bm s(x,t), \bm s(y,t))q(x)q(y)$ is jointly continuous and its derivative with respect to $t$ is also jointly continuous, by \cref{lem:diff_lem}, we can push $\dv{}{t}$ inside the double integral and we obtain \eqref{eqn:energy_first_diff}.
  Another application of \cref{lem:diff_lem} shows that if we take derivative with respect to $t$ again on \eqref{eqn:energy_first_diff} and evaluate at $0$ we obtain \eqref{eqn:energy_hess_at_0}.
\end{proof}

\begin{lemma}\label{lem:conv_diff}
  Let $f \in C^1(\RR^n)$, $p \in [1, \infty]$.
  Assume $\phi_\eps$ has compact support for some $\eps > 0$.
  Then for all $i=1,\ldots, n$, $\phi_\eps * f$ is differentiable and
  \[
    \pdv{x_i} (\phi_\eps * f)(x) = \left(\phi_\eps * \pdv{x_i} f \right)(x)
  .\]
\end{lemma}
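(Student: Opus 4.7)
The plan is to reduce the statement to a direct application of the differentiation-under-the-integral result \cref{lem:diff_lem}. The key point is that because $\phi_\eps$ has compact support, the convolution is effectively an integral over a compact set, so that differentiation in $x$ can be passed through the integral and onto $f$ without delicate integrability arguments.

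First, I would rewrite the convolution in the form $(\phi_\eps * f)(x) = \int_{\RR^n} \phi_\eps(y)\, f(x-y)\dd y$, using the change of variable $y \mapsto x-y$ in the definition of convolution (valid since $\phi_\eps$ is bounded with compact support and $f$ is continuous, hence the integral converges absolutely). Let $\Omega \subset \RR^n$ be any compact set containing $\supp(\phi_\eps)$. Since the integrand vanishes outside $\Omega$, we may write
\[
(\phi_\eps * f)(x) = \int_\Omega \phi_\eps(y)\, f(x-y)\dd y.
\]

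Next, fix $x_0 \in \RR^n$, $h > 0$, and $i \in \{1,\dots,n\}$. To apply \cref{lem:diff_lem} with the scalar variable being $x_i$ (all other coordinates held fixed at those of $x_0$), I would verify its two hypotheses on $(x_{0,i}-h,\, x_{0,i}+h) \times \Omega$: the map $(x_i, y) \mapsto \phi_\eps(y)\, f(x-y)$ is jointly continuous because $\phi_\eps \in C^1 \subset C^0$ and $f \in C^1 \subset C^0$; and its partial derivative in $x_i$, namely $(x_i, y) \mapsto \phi_\eps(y)\, (\partial f/\partial x_i)(x-y)$, is also jointly continuous because $\partial f/\partial x_i$ is continuous by the hypothesis $f \in C^1$. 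Invoking \cref{lem:diff_lem} then yields
\[
\pdv{x_i} \int_\Omega \phi_\eps(y)\, f(x-y)\dd y = \int_\Omega \phi_\eps(y)\, \pdv{x_i} f(x-y)\dd y = \Bigl(\phi_\eps * \pdv{x_i} f\Bigr)(x),
\]
which is the desired identity.

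There is no real obstacle here: the compact support of $\phi_\eps$ reduces the usual ``dominating function'' requirement for differentiation under the integral to the trivial bound obtained from continuity on a compact set. The same compactness also explains why no $L^p$ hypothesis on $f$ is needed, in contrast with the more delicate regularization results such as \cref{prop:conv_lp_conv}.
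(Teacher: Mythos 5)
Your proposal is correct and follows essentially the same route as the paper: rewrite the convolution as an integral of $\phi_\eps(y)f(x-y)$ over (a compact set containing) $\supp(\phi_\eps)$, check joint continuity of the integrand and of its $x_i$-derivative, and invoke \cref{lem:diff_lem} to differentiate under the integral sign. Your version just spells out the verification of the hypotheses slightly more explicitly than the paper does.
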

\begin{proof}
  Since $\supp(\phi_\eps)$ is compact and $f \in C^1(\RR^n)$ is bounded on any compact set, we know $\phi_\eps * f$ is well-defined at every $x \in \RR^n$.
  Note that
  \begin{align*}
    \pdv{x_i} (\phi_\eps * f)(x) &= \pdv{x_i} \left( \int f(x-y) \phi_\eps(y) \dd y \right) \\
                                 &\stackrel{(?)}{=} \int \pdv{f}{x_i}\/(x-y) \phi_\eps(y) \dd y = \left(\phi_\eps * \pdv{x_i} f \right)(x).
  \end{align*}
  Since $\supp(\phi_\eps)$ is compact and $(x, y) \mapsto f(x-y)\phi(y)$ is $C^1$, by \cref{lem:diff_lem} we justify the existence of the derivative and the exchange of differentiation and integration (?).
\end{proof}

\subsubsection{Subdifferentials of $\cE_\eps$}
Recall the following notion of a ``Wasserstein gradient'' in $\cP_2(\RR^n)$ from \citet[Definition 10.1.1]{ambrosio2005gradient}.
\begin{definition}\label{defn:strong_subdiff}
A vector field $\bm{w} \in L^2(\mu; \RR^n)$ is a \emph{strong Fr\'{e}chet subdifferential} of a functional $\mathcal{F}: \mathcal{P}_2(\RR^n) \to (-\infty, +\infty]$ if for all $\bm{T} \in L^2(\mu; \RR^n)$, the following holds:
\begin{align}\label{eqn:subdiff_def}
  \mathcal{F}(\bm{T}_\# \mu) - \mathcal{F}(\mu) \ge \int \bm{w}(x)^\top (\bm{T}(x) - x) \dd\mu(x) + o\left(\norm{\bm{T} - \bm{I}}_{L^2(\mu; \RR^n)}\right).
\end{align}
\end{definition}
Note that we cannot apply \citet[Lemma 10.4.1]{ambrosio2005gradient} directly to prove \eqref{eqn:W2_subdiff_energy} because interaction energies cannot be written in the form of (10.4.1) in their setup.
\begin{proof}[Proof of \cref{prop:W2_subdiff_energy}]
  Let $\bm{\xi} \in C^\infty_c(\RR^n;\RR^n)$.
  By \cref{lem:diff_calculus}, we have
  \begin{align*}
    \dv{}{t}\bigg|_{t=0} \mathcal{E}_\eps(\mu_t) &= 2\iint \left( \bm\xi(x)^\top \nabla_1 W_\eps(x, y) \right)
    \dd\mu(x)\dd\mu(y) \\
                                                 &=2\iint \left( \bm\xi(x)^\top \nabla_1 \left(\phi_\eps(x-y)(p(x)p(y))^{-1/2} \right) \right) q(y)\dd y \dd\mu(x) \\
                                                 &= 2\int  \bm\xi(x)^\top \nabla \left(p(x)^{-1/2}(\phi_\eps * q/\sqrt{p})(x) \right) \dd \mu(x),
  \end{align*}
  where the last step follows from applying \cref{lem:diff_lem} since $q$ has compact support.
  Now suppose $\bm{w} \in L^2(\mu;\RR^n)$ is a strong Fr\'{e}chet subdifferential satisfying \eqref{eqn:subdiff_def}.
  For the sequence $\{\bm{T}_t\}$, we have by definition
  \begin{align*}
    \mathcal{E}_\eps(\mu_t) - \mathcal{E}_\eps(\mu) &\ge \int \bm{w}(x)^\top (\bm{T}_t(x) - x) \dd\mu(x) + o\left(\norm{\bm{T}_t - \bm I}_{L^2(\mu;\RR^n)}\right) \\
                                                    &= \int \bm{w}(x)^\top(t \bm\xi(x)) \dd\mu(x) + o(t).
  \end{align*}
  Hence 
  \[
    \liminf_{t \downarrow 0} \frac{\mathcal{E}_\eps(\mu_t) - \mathcal{E}_\eps(\mu)}{t} \ge \int \bm{w}(x)^\top \bm\xi(x) \dd\mu(x) \ge \liminf_{t \uparrow 0} \frac{\mathcal{E}_\eps(\mu_t) - \mathcal{E}_\eps(\mu)}{t}.
  \]
  The previous calculation shows $\dv{t}|_{t=0} \mathcal{E}_\eps(\mu_t)$ exists, and hence it is equal to $\int \bm{w}(x)^\top \bm\xi(x) \dd\mu(x)$.
  This proves for any $\bm\xi \in C_c^\infty(\RR^n;\RR^n)$,
  \begin{align*}
    \int \bm{w}(x)^\top \bm\xi(x)\dd\mu(x) &= \int \bm\xi(x)^\top \nabla \left(2 p(x)^{-1/2}(\phi_\eps * q/\sqrt{p})(x) \right) \dd \mu(x).
  \end{align*}
  Hence we have shown \eqref{eqn:W2_subdiff_energy}.

  Finally, we show $\lim_{\eps \to 0} \bm w_\eps(x) = \bm w_{\chi^2}(x)$ for $\mu$-a.e. $x$ under the additional assumption that $\phi_\eps$ has compact support.
  By \citet[Lemma 10.4.1]{ambrosio2005gradient} with $F(x,\rho(x)) = \left( \frac{\rho(x)}{p(x)} -1 \right)^2 p(x)$, we find the strong subdifferential of $\chi^2\infdivx{\cdot}{\mu^*}$ is given by
  \begin{align}\label{eqn:W2_subdiff_chi^2}
    \bm{w}_{\chi^2,\mu}(x) = 2\nabla\frac{q(x)}{p(x)}, \text{ for } \mu\text{-a.e. } x \in \RR^n.
  \end{align}
  To show \eqref{eqn:subdiff_conv}, we compute, for $\mu$-a.e. $x$,
  \begin{align*}
    \bm w_\eps(x) &= \nabla \left( p(x)^{-1/2} (\phi_\eps * q/\sqrt{p})(x) \right) \\
                  &= \nabla (p(x)^{-1/2})(\phi_\eps * q/\sqrt{p})(x) + p(x)^{-1/2} \nabla (\phi_\eps * q/\sqrt{p})(x) \\
                  &=  \nabla (p(x)^{-1/2})(\phi_\eps * q/\sqrt{p})(x) + p(x)^{-1/2} (\phi_\eps * \nabla (q/\sqrt{p}))(x),
  \end{align*}
  where for the last equality we have applied \cref{lem:conv_diff}.
  Now taking $\eps \to 0$, by \cref{prop:conv_pw_conv} using the fact that $\supp(\phi_\eps)$ is compact (so that $q/\sqrt{p}$ and $\nabla (q/\sqrt{p})$ are bounded on $x + \supp(\phi_\eps)$), we obtain \eqref{eqn:subdiff_conv}.
\end{proof}

\subsubsection{Displacement convexity of $\cE_\eps$ at $\mu^*$ as $\eps \to 0$}
The statement of \cref{prop:displacement_cvx_mu_star} is similar in form as \citet[Corollary 4]{korba2021kernel} but in our case we do not have the second term in \cref{eqn:energy_hess_at_0} vanishing and we need to take the limit $\eps \to 0$. We also do not resort to RKHS theory in the proof.
\begin{proof}[Proof of \cref{prop:displacement_cvx_mu_star}]
  By \eqref{eqn:energy_hess_at_0}, we have
  \begin{align*}
    \dv[2]{}{t}\bigg|_{t=0} \mathcal{E}_\eps(\mu_t) &= 2(\mathcal{F}_\eps + \mathcal{G}_\eps),
  \end{align*}
  where
  \begin{align*}
    F_\eps &= \iint \left( \bm\xi(x)^\top \nabla_1\nabla_2 W_\eps(x, y)\bm\xi(y) \right)   \dd\mu^*(x)\dd\mu^*(y) \\
    G_\eps &= \iint \left( \bm\xi(x)^\top \hess_1  W_\eps(x, y)\bm\xi(x) \right)   \dd\mu^*(x)\dd\mu^*(y).
  \end{align*}
  We tackle $F_\eps$ first. Observe that successive application of integration by parts using the fact that $\bm \xi$ has compact support gives
  \begin{align*}
    F_\eps &= \sum_{i,j=1}^n \iint \bm\xi_i(x) \pdv{}{x_i}{y_j} W_\eps(x,y) \bm\xi_j(y) p(x)p(y) \dd x \dd y \\
           &= -\sum_{i,j=1}^n \iint \pdv{}{x_i}\/(\bm\xi_i(x) p(x)) \pdv{}{y_j} W_\eps(x,y) \bm\xi_j(y) p(y) \dd x \dd y \\
           &= \sum_{i,j=1}^n \iint \pdv{}{x_i}\/(\bm\xi_i(x) p(x)) W_\eps(x,y) \pdv{}{y_j}\/(\bm\xi_j(y) p(y)) \dd x \dd y \\
           &= \iint \left(\sum_{i=1}^n \pdv{}{x_i}\/(\bm\xi_i(x) p(x))\right) W_\eps(x,y) \left( \sum_{j=1}^n \pdv{}{y_j}\/(\bm\xi_j(y) p(y))\right) \dd x \dd y.
  \end{align*}
  If we view $\sum_{i=1}^n\pdv{}{x_i}\/(\bm\xi_i(x) p(x))$ as the density of a signed measure (it is integrable since it has compact support), and since $W_\eps$ is i.s.p.d. on the support of $\bm\xi$ by \cref{prop:energy_cvx}\ref{itm:ispd_kernel}, we see that each double integral in the last expression is non-negative.
  Hence $F_\eps \ge 0$.

  Next we show $\lim_{\eps \to 0} G_\eps = 0$. Since $\mu^*$ has compact support, by Fubini's theorem,
  \begin{align*}
    G_\eps &= \int \bm\xi(x)^\top \hess_1\left(\int W_\eps(x,y)p(y)\dd y\right) \bm\xi(x)p(x)\dd x.
  \end{align*}
  To expand the integral inside the Hessian operator, we have
  \begin{align*}
    \hess_1\left(\int W_\eps(x,y)p(y)\dd y\right) &= \hess \left( p(x)^{-1/2} (\phi_\eps * \sqrt{p})(x) \right).
  \end{align*}
  First by the chain rule and \cref{lem:conv_diff}, we have
  \begin{align*}
    \dv{}{x} \left( p(x)^{-1/2} (\phi_\eps * \sqrt{p})(x) \right) &= \dv{}{x} \left(p(x)^{-1/2}\right) (\phi_\eps * \sqrt{p}) + p(x)^{-1/2} \left(\phi_\eps * \dv{}{x} \sqrt{p}(x) \right).
  \end{align*}
  Differentiating again while applying \cref{lem:conv_diff}, we obtain after rearranging terms,
  \begin{align*}
    \hess_1\left(\int W_\eps(x,y)p(y)\dd y\right) &= \left( \dv[2]{}{x} p(x)^{-1/2} \right) (\phi_\eps * \sqrt{p})(x) \\
                                                  &+ 2\left( \dv{}{x} p(x)^{-1/2} \right)\left(\phi_\eps * \dv{}{x} \sqrt{p}\right)(x) \\
                                                  &+ p(x)^{-1/2} \left(\phi_\eps * \dv[2]{}{x} \sqrt{p}\right)(x). \numberthis\label{eqn:hess_int_Wp}
  \end{align*}
  By \cref{prop:conv_pw_conv} and the fact that $p \in C^2_c(\RR^n)$, we have
  \begin{align*}
    & \lim_{\eps \to 0}\hess_1\left(\int W_\eps(x,y)p(y)\dd y\right) \\
    =& \left( \dv[2]{}{x} p(x)^{-1/2} \right) \sqrt{p}(x) + 2\left( \dv{}{x} p(x)^{-1/2} \right)\dv{}{x} \sqrt{p}(x) 
                                                  + p(x)^{-1/2} \dv[2]{}{x} \sqrt{p}(x) \\
    =& \dv[2]{}{x} \left( p(x)^{-1/2} \sqrt{p(x)} \right) = 0.
  \end{align*}
  Finally, we have
  \begin{align*}
    \lim_{\eps \to 0}G_\eps &= \lim_{\eps \to 0}\int \bm\xi(x)^\top \hess_1\left(\int W_\eps(x,y)p(y)\dd y\right) \bm\xi(x)p(x)\dd x \\
                            &=\int \bm\xi(x)^\top \left(\lim_{\eps \to 0}\hess_1\left(\int W_\eps(x,y)p(y)\dd y\right) \right)\bm\xi(x)p(x)\dd x \\
                            &= 0,
  \end{align*}
  where interchanging the limit and the integral is jusfitied by the dominated convergence theorem and the fact that $p$ and $\phi_\eps$ have compact support (we need compact support assumption of $\phi_\eps$ to make sure convolutions appearing in \eqref{eqn:hess_int_Wp} are uniformly bounded when $\eps$ is sufficiently small).

\end{proof}

\subsubsection{A descent lemma for $\cE_\eps$ with time discretization}
We now consider a time discretization of the gradient flow of $\cE_\eps$ given by, for an initial measure $\mu_0 \in \cP_2(\RR^n)$ and $m \in \ZZ_{> 0}$, with a step size $\gamma > 0$,
\begin{equation}\label{eqn:wgf_disc_time}
  \mu_{m+1} \defeq (\bm I - \gamma \bm w_{\eps,\mu_m})_\# \mu_m.
\end{equation}

Similar to \citet[Proposition 14]{korba2021kernel}, we can show the following descent lemma for iterations \eqref{eqn:wgf_disc_time} with the following additional assumptions:
\begin{assumption}\label{ass:descent_lemma}
  Suppose the target density $p \in C^2(\RR^n)$ satisfies $1/C_p \le p(x) \le C_p$, $\norm{p(x)}_2 \le C'_p$, $\norm{\hess p(x)}_2 \le C''_p$ for some $C_p, C'_p, C''_p$ for all $x \in \RR^n$, where we use $\norm{A}_2$ to indicate the matrix spectral norm (i.e. $\norm{A}_2 = \sigma_{\max}(A)$).
  Suppose the mollifier $\phi_\eps \in C^2(\RR^n)$ satisfies, in addition to \cref{defn:mollifiers}, $\phi_\eps(x)\le C_\eps$, $\norm{\nabla\phi_\eps(x)}_2 \le C_\eps'$, and $\norm{\hess\phi_\eps(x)}_2 \le C''_\eps$ for some $C_\eps$, $C_\eps'$, $C_\eps''$.
\end{assumption}
\begin{lemma}\label{lem:grad_W_eps_lipschitz}
  Under \cref{ass:descent_lemma}, there exists a constant $L > 0$ such that the function $\nabla_1 W_\eps: \RR^n \times \RR^n \to \RR^n$ is $L$-Lipschitz in terms of $\norm{\cdot}_2$ in either input.
\end{lemma}
\begin{proof}
  Denote $r(x) \defeq p(x)^{-1/2}$.
  Then our assumptions imply $r(x) \le C_r \defeq C_p^{1/2}$, $\norm{\nabla r(x)}_2 \le C_r' \defeq \frac{1}{2} C_p^{3/2} C_p'$, and $\norm{\hess_1 r(x)}_2 \le C_r'' \defeq \frac{3}{4}C_p^{5/2}C_p'^2 + \frac{1}{2}C_p^{3/2}C_p''$.
  We compute, for $x, y\in \RR^n$,
  \begin{align*}
    \nabla_1 W_\eps(x,y) &= \nabla_x\left( \phi_\eps(x-y)r(x)r(y) \right) \\
                         &=\nabla \phi_\eps(x-y) r(x)r(y) + \phi_\eps(x-y) \nabla r(x) r(y).
  \end{align*}
  Then
  \begin{align*}
    \hess_1 W_\eps(x,y) &= \hess \phi_\eps(x-y) r(x)r(y) + 2\nabla\phi_\eps(x-y)\nabla r(x)^\top r(y) 
                        + \phi_\eps(x-y)\hess r(x)r(y),
  \end{align*}
  and
  \begin{align*}
    \nabla_2\nabla_1 W_\eps(x,y) &= -\hess \phi_\eps(x-y) r(x)r(y) + \nabla\phi_\eps(x-y)r(x) \nabla r(y)^\top \\
                                 &- \nabla \phi_\eps(x-y)\nabla r(x)r(y) + \phi_\eps(x-y)\nabla r(x)\nabla r(y)^\top.
  \end{align*}
  Then we have
  \begin{align*}
    \norm{\hess_1 W_\eps(x,y)}_2 &\le C''_\eps C_r^2 + 2 C_\eps' C_r'C_r + C_\eps C_r'' C_r, \\
    \norm{\nabla_2\nabla_1 W_\eps(x,y)}_2 &\le C_\eps'' C_r^2  + 2C_\eps' C_r C_r' + C_\eps C_r'^2.
  \end{align*}
  Hence we conclude that $\nabla_1 W_\eps$ is $L$-Lipschitz with
  \[
    L \defeq C''_\eps C_r^2 + 2 C_\eps' C_r'C_r + C_\eps \max(C_r'' C_r, C_r'^2)
  .\]

\end{proof}
\begin{remark}
Compared with \citet[Lemma 1]{korba2021kernel}, to ensure $\nabla_1 W_\eps$ is Lipscthiz, we only require uniform boundedness of $p$ and $\phi_\eps$ up to second order derivatives instead of up to order 3.
\end{remark}

\begin{proposition}
  Under \cref{ass:descent_lemma}, suppose $\mu_0 \in \cP_2(\RR^n)$ has compact support.
  Then for any $\gamma \le \frac{1}{2L}$, with  $L$ defined as in \cref{lem:grad_W_eps_lipschitz} and $\{\mu_m\}_{m> 0}$ defined as in \eqref{eqn:wgf_disc_time},
  \begin{equation}\label{eqn:wgf_descent_lemma}
    \cE_\eps(\mu_{m+1}) - \cE_\eps(\mu_m) \le -\gamma(1 - 2\gamma L) \norm{\bm w_{\eps,\mu_m}}_{L^2(\mu_m)} \le 0.
  \end{equation}
\end{proposition}
\begin{proof}
  We first show $\mu_m$ has compact support for all $m \in \ZZ_{\ge 0}$.
  For $\mu \in \cP_2(\RR^n)$ with compact support, the proof of \cref{prop:W2_subdiff_energy} implies (note \cref{prop:W2_subdiff_energy} assumes $\mu$ has density but it is not necessary to obtain the following formula using the same proof)
  \begin{align*}
    \bm w_{\eps, \mu}(x) = 2\int \nabla_1 W_\eps(x,y)\dd\mu(y).
  \end{align*}
  Since $x \mapsto \nabla_1 W_\eps(x,y)$ is $C^1$ by the assumptions and $\mu$ has compact support, by \cref{lem:diff_lem}, we see that $\bm w_{\eps,\mu}$ is differentiable with
 \[
    \nabla \bm w_{\eps, \mu}(x) = 2\int \hess_1 W_\eps(x,y)\dd\mu(y)
 .\]
 By induction, suppose $\mu_m$ has compact support. Then
 \begin{align*}
   \supp(\mu_{m+1}) \subset \left(\bm I - \gamma \bm w_{\eps, \mu_m}\right)_\# \supp(\mu_m) \subset \supp(\mu_m) + \gamma \left(\sup_{x\in\supp(\mu_m)}\norm{\bm w_{\eps,\mu_m}}_2\right) B_1(0).
 \end{align*}
 Since $\bm w_{\eps, \mu_m}$ is continuous, the set on the right-hand side is bounded. Hence $\mu_{m+1}$ has compact support.

 Now fix $m \in \ZZ_{> 0}$ and we show \eqref{eqn:wgf_descent_lemma}.
 Define a path $\left\{\mu_t\right\}_{t\in [0, 1]}$ defined by $\mu_t = (\bm I - \gamma t \bm w_{\eps, \mu_m})_\# \mu_m$.
 Let $f(t) \defeq \cE_\eps(\mu_t)$. By \cref{lem:diff_calculus}, the continuity of $\bm w_{\eps, \mu_m}$ implies that $f$ is differentiable.
 Moreover, another application of \cref{lem:diff_lem} implies that $f$ is twice differentiable, and in particular continuously differentiable.
 Hence $f$ is absolutely continuous on the compact interval $[0, 1]$.
 By the fundamental theorem of calculus, we have
 \begin{align*}
   \cE_\eps(\mu_{m+1}) - \cE_\eps(\mu_m) &= f(1) - f(0) = f'(0) + \int_0^1 (f'(t) - f'(0))\dd t.
 \end{align*}
 Observe that by \cref{lem:diff_calculus},
 \begin{align*}
   f'(0) &= 2 \iint (-\gamma \bm w_{\eps,\mu_m}(x))^\top \nabla_1 W_\eps(x,y) \dd\mu_m(x)\dd\mu_m(y) \\
         &= \int (-\gamma \bm w_{\eps,\mu_m}(x)) \left(2\int \nabla_1 W_\eps(x,y) \dd\mu_m(y) \right) \dd\mu_m(x) \\
         &= -\gamma \norm{\bm w_{\eps, \mu_m}(x)}^2_{L^2(\mu_m)},
 \end{align*}
 where we apply Fubini's theorem to exchange the order of integration together with the fact that $\mu_m$ has compact support and the integrand is continuous.
 Let $\bm s(x,t) \defeq x - \gamma t \bm w_{\eps, \mu_m}(x)$.
 Note that, again by \cref{lem:diff_calculus},
 \begin{align*}
   \abs{f'(t) - f'(0)} &\le 2\iint \abs{(-\gamma \bm w_{\eps,\mu_m}(x))^\top  \left(\nabla_1 W_\eps(\bm s(x,t),\bm s(y,t)) - \nabla_1 W_\eps(x,y) \right)}\dd\mu_m(x)\dd\mu_m(y) \\
                       &\le 2\gamma \iint \norm{\bm w_{\eps,\mu_m}(x)}_2 \norm{\nabla_1 W_\eps(\bm s(x,t), \bm s(y, t)) - \nabla_1 W_\eps(x,y)}_2 \dd\mu_m(x)\dd\mu_m(y).
 \end{align*}
 Note that, by \cref{lem:grad_W_eps_lipschitz}, we have
 \begin{align*}
&\norm{\nabla_1 W_\eps(\bm s(x,t), \bm s(y, t)) - \nabla_1 W_\eps(x,y)}_2 \\
   \le& \norm{\nabla_1 W_\eps(\bm s(x, t), \bm s(y, t)) - \nabla_1 W_\eps(\bm s(x, t), y)}_2 + \norm{\nabla_1 W_\eps(\bm s(x, t), y) - \nabla_1 W_\eps(x, y)}_2 \\
   \le& L\gamma t (\norm{\bm w_{\eps,\mu_m}(x)}_2 + \norm{\bm w_{\eps,\mu_m}(y)}_2).
 \end{align*}
 Thus
 \begin{align*}
   \abs{f'(t) - f'(0)} &\le 2\gamma^2 L t \iint \norm{\bm w_{\eps,\mu_m}(x)}_2  \left( \norm{\bm w_{\eps,\mu_m}(x)}_2 + \norm{\bm w_{\eps,\mu_m}(y)}_2\right) \dd\mu_m(x)\dd\mu_m(y) \\
                       &\le 2\gamma^2 L t \left( \norm{\bm w_{\eps,\mu_m}}_{L^2(\mu_m)} + \left(\int \norm{\bm w_{\eps, \mu_m}(x)}_2\dd\mu_m(x) \right)^2\right) \\
                       &\le 4\gamma^2 L t \norm{\bm w_{\eps,\mu_m}}_{L^2(\mu_m)},
 \end{align*}
 where we have used the Cauchy-Schwartz inequality in the last step.
 
 Combining everything, we have shown
 \begin{align*}
   \cE_\eps(\mu_{m+1}) - \cE_\eps(\mu_m) &= f'(0) + \int_0^1 (f'(t) - f'(0))\dd t \\
                                         &\le -\gamma(1 - 2\gamma L) \norm{\bm w_{\eps,\mu_m}}_{L^2(\mu_m)} \le 0,
 \end{align*}
 since $\gamma < \frac{1}{2L}$.

\end{proof}

\section{Weighted hypersingular Riesz energy}\label{sec:weight_riesz}
We recall results most relevant to us from \citet{borodachov2019discrete}.
Suppose $X \subset \RR^n$ is compact, of Hausdorff dimension $d$, and $d$-rectifiable, i.e., the image of a bounded set in $\RR^d$ under a Lipschitz mapping.
Given a non-vanishing continuous probability density $p$ on $X$, define a measure $\mathcal{H}_d^p(B) \defeq \int_B p(x) \dd \mathcal{H}_d(x)$ for any Borel set $B \subset X$.
The target measure $h_d^p$ is defined to be $h_d^p(B) \defeq \nicefrac{\mathcal{H}_d^p(B)}{\mathcal{H}_d^p(X)}$.
The \emph{weighted $s$-Riesz energy} is defined as the interaction energy
\begin{align}\label{eqn:weight_riesz_s_energy}
  E_s(\omega_N) \defeq \sum_{i \neq j} \frac{(p(x_i)p(x_j))^{-s/2d}}{\norm{x_i-x_j}^s}.
\end{align}
The following result states that minimizers of the weighted $s$-Riesz energy approximate the target measure when the $s$ is sufficiently large.
\begin{theorem}[Theorem 11.1.2, \citet{borodachov2019discrete}]
  Suppose $s > d$. For $\omega_N^* \in \argmin E_s$ with $\omega_N^* = \{x_1^N, \ldots, x_N^N\}$, we have weak convergence
  $\delta_{\omega_N^*} \to h_d^p$ as $N \to \infty$.
\end{theorem}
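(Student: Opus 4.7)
The plan is to follow the short-range/hypersingular Riesz energy framework, since for $s>d$ the kernel $\|x-y\|^{-s}$ is not $\mathcal{H}_d$-integrable on any positive-measure subset of $X$, so the continuous functional $\mu \mapsto \iint \|x-y\|^{-s}(p(x)p(y))^{-s/2d}\dd\mu(x)\dd\mu(y)$ is infinite on every measure absolutely continuous with respect to $\mathcal{H}_d$. This rules out classical potential-theoretic variational arguments and forces a local, combinatorial analysis of $E_s$.

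The first step is the unweighted version ($p \equiv 1$): for $F_s(\omega_N) \defeq \sum_{i\neq j} \|x_i-x_j\|^{-s}$ on $X$, minimizers satisfy $\delta_{\omega_N^*} \to \mathcal{H}_d|_X/\mathcal{H}_d(X)$ weakly, together with the scaling asymptotic $\min F_s \sim C_{s,d}\,N^{1+s/d}/\mathcal{H}_d(X)^{s/d}$ for a universal constant $C_{s,d}$ (the ``Poppy-seed bagel'' theorem, proved first on the unit cube by self-similarity and subadditivity of the scaled minimum energy, then transferred to $d$-rectifiable $X$ via bi-Lipschitz chart covers and the area formula). Tightness is automatic from compactness of $X$, and the identification of the limit proceeds by contradiction: if a weak subsequential limit $\nu$ had a non-uniform $\mathcal{H}_d$-density, then a local particle-swap between a region of excess density and one of deficit density would strictly reduce $F_s$ by order $N^{1+s/d}$, violating minimality.

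For the weighted case, I would partition $X$ into small $d$-rectifiable cells $\{A_k\}_{k=1}^K$ of diameter $\delta$ with $p$ approximately constant (value $p(x_k)$) on each $A_k$. Let $N_k \defeq |\omega_N^* \cap A_k|$. Because $s>d$, the cross-cell terms in $E_s$ are dominated by the same-cell terms as $\delta \to 0$ (they involve only distances bounded below by roughly the inter-cell separation, and the sum has only $O(N^2)$ such pairs against the leading $N^{1+s/d}$ in-cell contribution). Hence
\begin{equation*}
E_s(\omega_N^*) \;\approx\; \sum_{k=1}^K p(x_k)^{-s/d}\,F_s^{A_k}(\omega_N^* \cap A_k) \;\sim\; \sum_{k=1}^K p(x_k)^{-s/d}\,\frac{C_{s,d}\,N_k^{1+s/d}}{\mathcal{H}_d(A_k)^{s/d}}.
\end{equation*}
Minimizing the right-hand side over $(N_k)$ subject to $\sum_k N_k = N$ via Lagrange multipliers yields $N_k/N \to p(x_k)\mathcal{H}_d(A_k)/\int_X p\,\dd\mathcal{H}_d$. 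Sending $\delta \to 0$ and matching against continuous test functions gives $\delta_{\omega_N^*} \to h_d^p$ weakly.

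The main obstacle will be making the cell decomposition fully rigorous on a general $d$-rectifiable $X$ rather than a smooth submanifold, which requires a uniform (in the local Lipschitz chart) version of the Poppy-seed bagel theorem together with careful control of the error from cross-cell interactions and from cells that straddle the image of the singular set of the rectifiability parametrization. A secondary technical point is ensuring that the two-sided comparison between $E_s(\omega_N^*)$ and its cell-wise lower bound is tight enough: one typically proves a matching upper bound by exhibiting explicit near-optimal configurations built from scaled and transported copies of the Euclidean-cube minimizers, then uses the resulting squeeze on $N^{-1-s/d}E_s(\omega_N^*)$ to force any weak subsequential limit of $\delta_{\omega_N^*}$ to coincide with $h_d^p$.
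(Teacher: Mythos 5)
The paper does not prove this statement: it is quoted verbatim from \citet{borodachov2019discrete} (their Theorem 11.1.2) solely to contrast the weighted hypersingular Riesz energy with the discretized MIE, so there is no in-paper proof to compare against. Your sketch is a faithful outline of the strategy actually used in that reference---the unweighted poppy-seed bagel asymptotics $\min F_s \sim C_{s,d}N^{1+s/d}/\mathcal{H}_d(X)^{s/d}$ proved on the cube by subadditivity and self-similarity, transferred to $d$-rectifiable sets via bi-Lipschitz charts, followed by a cell decomposition on which the weight is nearly constant, negligibility of cross-cell interactions, and a Lagrange-multiplier balancing of the per-cell counts yielding $N_k/N \to p(x_k)\mathcal{H}_d(A_k)/\int_X p\,\dd\mathcal{H}_d$---and your exponent bookkeeping (the weight contributing $p(x_k)^{-s/d}$ per cell) is correct. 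The one point where your account departs from the actual mechanism is the identification of the limit in the unweighted case: the reference does not run a literal particle-swap argument but instead establishes that the scaled limiting energies $g_s(\cdot)$ of disjoint pieces satisfy $g_s(A\cup B)^{-d/s}\ge g_s(A)^{-d/s}+g_s(B)^{-d/s}$ with equality forced at minimizers, and strict convexity of $\alpha\mapsto\alpha^{1+s/d}$ then pins down equidistribution; your swap heuristic is the informal shadow of that argument and would need to be replaced by it (together with the separation of shrunk cells needed to control cross-cell terms in the matching upper bound) to be rigorous.
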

A similar result with a slightly different assumption holds for $s=d$ (Theorem 11.1.3 of \citet{borodachov2019discrete}).

\paragraph{Comparison of \eqref{eqn:weight_riesz_s_energy} with discretized MIE \eqref{eqn:unnorm_disc_energy}.}
Our theory is only valid for the full-dimensional case, i.e., $n = d$ (see a few exceptions discussed in \cref{rmk:hausdorff_d}).
When this is the case and if the mollifier is taken to be from the $s$-Riesz family, \eqref{eqn:unnorm_disc_energy} becomes, for $s > n$,
\begin{align*}
  E_\eps(\omega_N) \defeq \sum_{i, j =1}^N \frac{(p(x_i)p(x_j))^{-1/2}}{\left(\norm{x_i-x_j}^2_2 + \eps^2\right)^{s/2}}.
\end{align*}
Compared with \eqref{eqn:weight_riesz_s_energy}, in our case there is an $\eps$ in the denominator, so that the continuous version of the energy $\mathcal{E}_\eps$ does not blow up all the time (this is in contrast with the continuous version of \eqref{eqn:weight_riesz_s_energy}---see \citet[Theorem 4.3.1]{borodachov2019discrete}).
Moreover, the exponential scaling on $p$ is different: in our case we use $-1/2$ whereas in \eqref{eqn:weight_riesz_s_energy} it is $-s/2n$.
The diagonal $i=j$ is included in our energy, but this is inconsequential as another valid discretization is to discard the diagonal term \citep[Theorem 4.2.2]{borodachov2019discrete}.
On the other hand, our theory allows a bigger class of mollifiers that are not necessarily of Riesz families.
We also allow $X$ to be non-compact.
An interesting future research direction is to extend our theory to cases where $d < n$, e.g., when $X$ is an embedded $d$-dimensional submanifold in $\RR^n$.

\section{Algorithmic details}
In this section we provide algorithmic details of MIED and compare with the updates of SVGD \citep{liu2016stein}.
The negative gradient of \eqref{eqn:log_disc_energy} with respect to $x_i$ is 
\begin{align*}
    -\nabla_{x_i}\log E_\eps(\omega_N) = & 2\sum_{j\neq i} \frac{e^{I_{ij}}}{\sum_{i,j}e^{I_{ij}}} \left((\nabla \log\phi_\eps)(x_j-x_i) + \frac{1}{2} \nabla \log p(x_i)\right) \\
    &+ \frac{e^{I_{ii}}}{\sum_{i,j}e^{I_{ij}}} \nabla \log p(x_i) \\
    =& \sum_{j=1}^N \frac{e^{I_{ij}}}{\sum_{i,j}e^{I_{ij}}} \left(2\nabla \log\phi_\eps(x_j-x_i) +  \nabla \log p(x_i)\right),
\end{align*}
where
\begin{equation}\label{eqn:I_ij}
    I_{ij} \defeq \log\phi_\eps(x_i-x_j)-\frac{1}{2}(\log p(x_i) + \log p(x_j)),
\end{equation}
and to get the last equality we used the fact that $\nabla \phi (0) = 0$ thanks to the assumption $\phi(x)=\phi(-x)$.
Then gradient descent on \eqref{eqn:log_disc_energy} gives our algorithm (\cref{alg:mied}).
The special treatment of the diagonal terms described in \cref{sec:algorithm} amounts to modifying only the diagonal $I_{ii}$.

\begin{algorithm}[hbt]
\caption{Mollified interaction energy descent (MIED) in the logarithmic domain.}\label{alg:mied}
\SetKwInOut{Input}{Input}
\Input{target density $p$, mollifier $\phi_\eps$, initial particles $\{x^0_i\}_{i=1}^N$, learning rate $\eta$, total steps $T$.}

\For{$t \gets 1$ \KwTo $T$}{
\For{$i \gets 1$ \KwTo $N$}{
$x^{t+1}_i \gets x_i^t + \eta \sum_{j=1}^N \frac{e^{I_{ij}}}{\sum_{i,j} e^{I_{ij}}}  \left(2\nabla\log\phi_{\eps}(x_j-x_i) + \nabla \log p(x_i)\right)$, where $I_{ij}$ is defined in \eqref{eqn:I_ij}\;
}
}

\KwRet{} final particles $\{x_i^T\}_{i=1}^N$.
\end{algorithm}

\subsection{Comparison with SVGD}\label{sec:cf_svgd}
The update formula in \cref{alg:mied} is similar to the one in SVGD: if we use $\phi_\eps(x-y)$ in place of the kernel $k(x,y)$ in the SVGD update and rewrite:
\begin{align*}
x^{t+1}_i &= x_i^t + \eta \sum_{j=1}^N  \left(\nabla \phi_{\eps}(x_j-x_i) + \phi_\eps(x_j-x_i)\nabla \log p(x_j)\right)
\\
&=x_i^t + \eta \sum_{j=1}^N  \phi_\eps(x_j-x_i)\left(\nabla \log \phi_{\eps}(x_j-x_i) + \nabla \log p(x_j)\right). \tag{SVGD}\label{eqn:svgd_update}
\end{align*}
For both algorithms, the update formula for each particle consists of attraction and repulsion terms and the total time complexity of each update iteration is $O(N^2)$.
We note the following differences.
First, in our formulation we have scaling factors $\frac{e^{I_{ij}}}{\sum_{i,j}e^{I_{ij}}}$ which help stabilizing the optimization (as a by-product of working in the logarithmic domain) and put more weight on nearby particles as well as particles in low-density regions, whereas in \eqref{eqn:svgd_update} the scaling factors are $\phi_\eps(x_j-x_i)$ which are not adapted to prioritize low-density regions.
Second, in MIED, the attraction force for particle $i$ only comes from $\nabla \log p(x_i)$, whereas in \eqref{eqn:svgd_update} the attraction force comes from $\nabla \log p(x_j)$ for all $j$'s.
Third, for each $j$, in our formulation the repulsive force has an additional factor of $2$ in front of $\nabla \log \phi_\eps(x_j-x_i)$.

Empirically, the additional scaling factors in MIED help produce samples with good separations compared to SVGD, since closer pairs of points will have large weights.
Additionally, since MIED optimizes a finite-dimensional objective \eqref{eqn:log_disc_energy}, we can employ accelerated gradient-based optimizers like Adam \citep{kingma2014adam}, which we used in our experiments. 
In contrast, SVGD does not optimize any finite-dimensional objective. 
While practical SVGD implementations also use optimizers like Adam, it is unclear how the resulting particle dynamics is related to the gradient flow of KL divergence.

\subsection{Handling constraints with dynamic barrier method}\label{sec:db}
The dynamic barrier method is introduced in \citet{gong2021bi} which solves $\min_x f(x)$ subject to $g(x) \le 0$ where $g$ is scalar-valued.
Intuitively, their method computes update directions by either decreasing $g(x)$ when $g(x) > 0$, following $-\nabla f(x)$ if the constraints are satisfied, or balancing both gradient directions.

In order to handle multiple constraints such as in \cref{fig:period_vis}, we consider a generalized version of their dynamic barrier method.
In this generalized setting, $g: \RR^n \to \RR^m$ is vector-valued and constraints are $g(x) \le 0$ where the $\le$ sign is interpreted coordinate wise..
Suppose we are at iteration $t$ with current solution $x^t$.
Then the next update direction $v^*$ is taken to be the $\argmin$ of
\begin{align}\label{eqn:update_opt}
    \min_{v \in \RR^n} \norm{v - \nabla f(x^t)}_2^2 \text{ s.t. } \forall i=1,\ldots,m, \nabla g_i(x^t)^\top v \ge \alpha_i g_i(x^t),
\end{align}
where $\alpha_i > 0$ are fixed hyperparameters; in our implementation we simply choose $\alpha_i = 1$.
Then $x^{t+1} = x^t - \eta v^*$ with learning rate $\eta$.
In our implementation we use Adam \citep{kingma2014adam} that modulates the update directions.
Observe that the optimization problem \eqref{eqn:update_opt} is the same as projecting a point $\nabla f(x^t)$ onto the polyhedron formed by the intersection of the halfspaces $\cap_{i=1}^m \{x \in \RR^n: \nabla g_i(x^t)^\top v \ge \alpha_i g_i(x^t)\}$.
To solve \eqref{eqn:update_opt}, we use Dykstra's algorithm \citep{tibshirani2017dykstra} which can be interpreted as running coordinate descent on the dual of \eqref{eqn:update_opt}.
We use a fixed number of $20$ iterations for the Dykstra's algorithm which we found to be sufficient for our experiments; in the case of a single constraint, we only need to use one iteration.
\section{Experiment details and additional results}\label{sec:detailed_exp}

\subsection{Gaussians in varying dimensions}\label{sec:detailed_gaussian}
We generate the $n$-dimensional Gaussians used to produce \cref{fig:gaussian_plot} as follows.
We generate a matrix $\sqrt{A} \in \RR^{n\times n}$ with i.i.d. entries uniformly in $[-1, 1]$.
Then we set $A = \sqrt{A}\sqrt{A}^\top / \det(\sqrt{A})$.
This way $\det(A) = 1$.
We then use $A$ as the covariance matrix for the Gaussian (centered at 0).
We use Adam with learning rate $0.01$ for all methods for a total of 2000 iterations. This is enough for SVGD and MIED to converge, while for KSDD the convergence can be much slower.

We visualize the samples from each method for $n = 2$ in \cref{fig:2d_gaussian_vis}.
We notice that MIED is capable of generating well-separated samples, while for SVGD there is a gap between the inner cluster of samples and a sparse outer ring.
For KSDD we see the artifact where too many samples concentrate on the diagonal.
\begin{figure}[htb]
  \centering
  \includegraphics[width=0.6\textwidth]{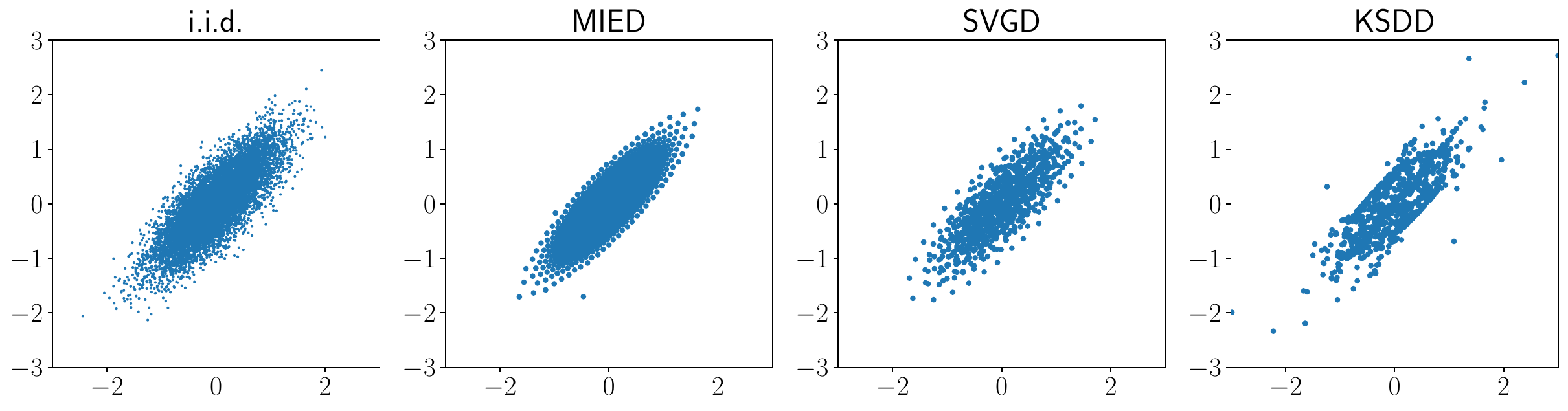}
  \caption{Visualization of samples from each method for a 2D skewed Gaussian.}
  \label{fig:2d_gaussian_vis}
\end{figure}

\subsection{Collapsed samples when the kernel width is too big}\label{sec:collapse}
In \cref{fig:uniform_box}, we see that samples from SVGD collapse with an adaptive kernel where the variance is taken to be half of the median of the squared distance among all pairs of points \citep{liu2016stein}; at termination the median of the squared distance is greater than $1$ in that experiment.
Here we investigate this issue further. 
In \cref{fig:kernel_size_ablation}, for the same uniform sampling setup, we visualize the results of SVGD with a fixed kernel width and MIED with Gaussian mollifiers with the same kernel width: when the kernel width (i.e. $2\eps^2$ in $\phi^g_\eps(x) = \exp(-\frac{\norm{x}_2^2}{2\eps^2})/Z_\eps^g$) is too big, both SVGD and MIED result in collapsed samples.
This is because since the target density is a constant, the only force in the updates is the repulsive force.
When the kernel width is too large, the repulsive force coming from points in the same collapsed cluster is dominated by the repulsive force coming from points from other clusters---this is evident in the update directions shown in the leftmost column of \cref{fig:kernel_size_ablation} (SVGD with kernel width $0.1$).
\begin{figure}[htb]
  \centering
  \includegraphics[width=1.0\textwidth]{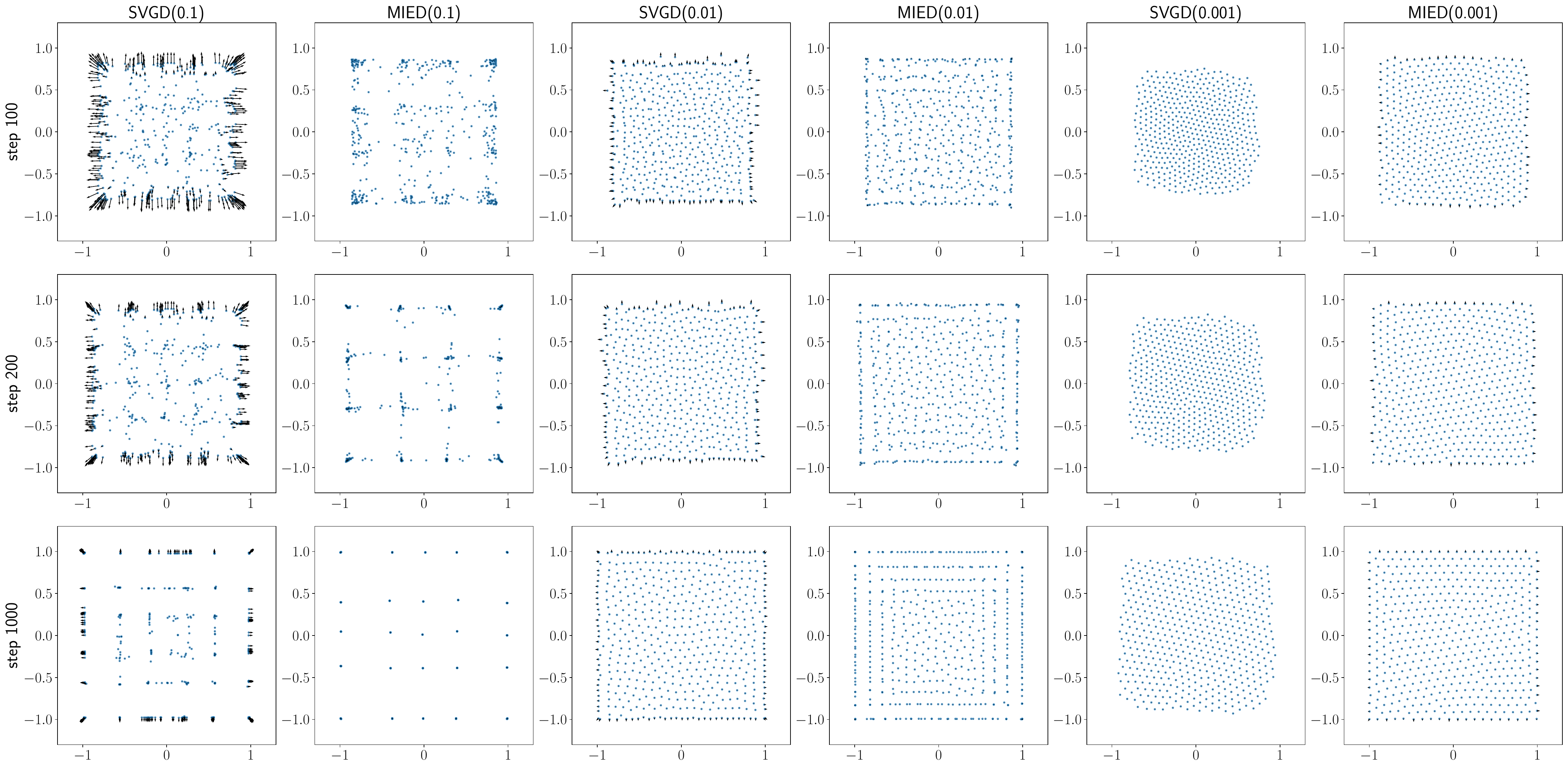}
  \caption{SVGD and MIED with fixed-size Gaussian kernels for uniform sampling in a square. In each cell of the grid, we plot the samples along with the update direction (black arrows) at that iteration. Rows correspond to iterations $100, 200, 1000$. Columns correspond to each method with varying kernel widths (twice the variance of the Gaussian kernel/mollifier) indicated in the parentheses.}
  \label{fig:kernel_size_ablation}
\end{figure}
When using the dynamic barrier method \cite{gong2021bi} to enforce the square constraints instead of reparameterization with $\tanh$, we obtain similar results as in \cref{fig:kernel_size_ablation}.

This pathological phenomenon is not only limited to constrained sampling: when sampling the 2D Gaussian from \cref{fig:2d_gaussian_vis}, using too big a kernel width can also result in collapsing (\cref{fig:kernel_size_ablation2}).
\begin{figure}[htb]
  \centering
  \includegraphics[width=1.0\textwidth]{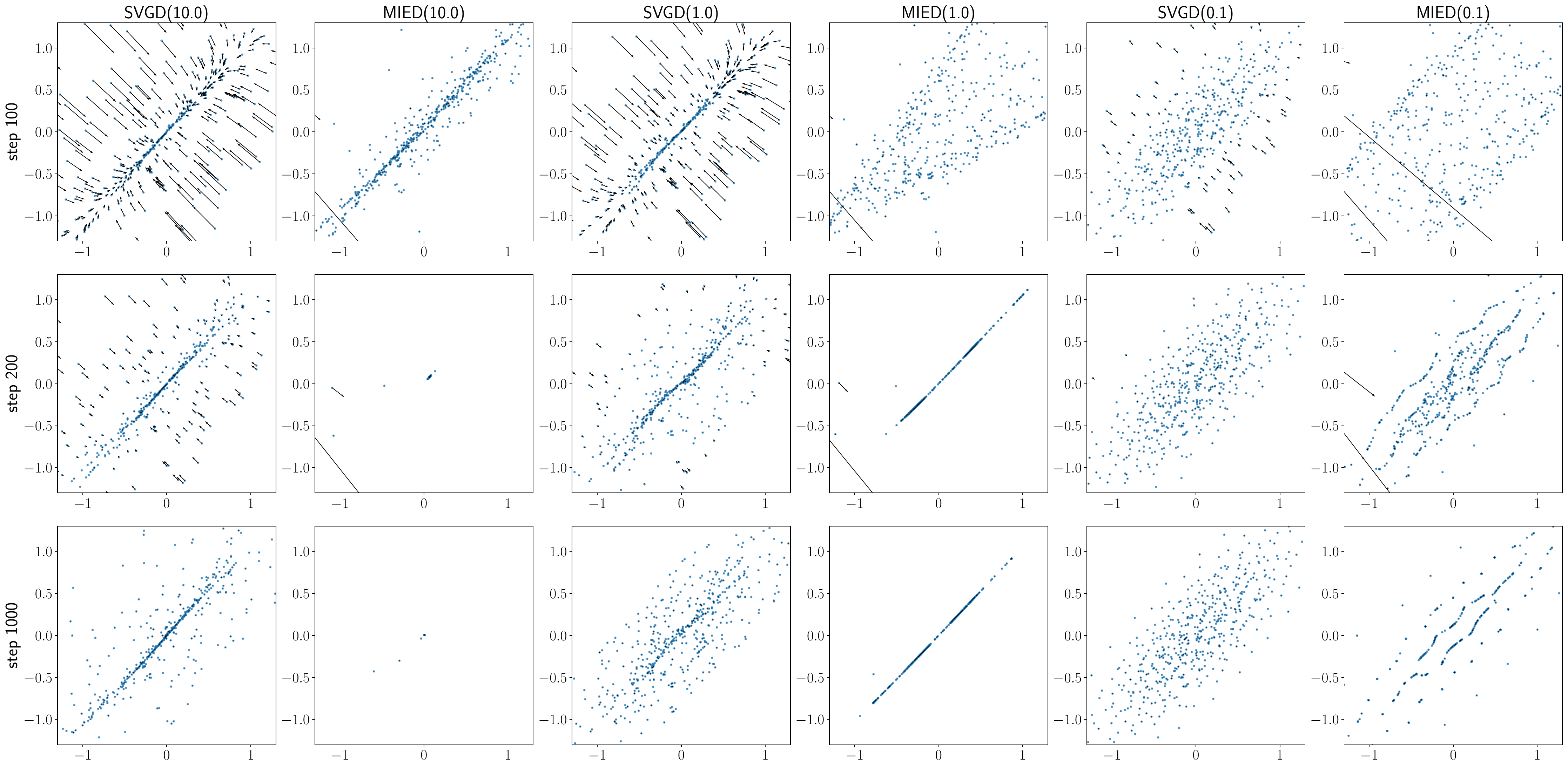}
  \caption{SVGD and MIED with fixed-size Gaussian kernels for sampling a 2D Gaussian as in \cref{fig:2d_gaussian_vis}.
  We see that using too big a kernel size can lead to collapsed samples for both SVGD and MIED.
  }
  \label{fig:kernel_size_ablation2}
\end{figure}

We emphasize that our theory of MIED suggests that in practice we need to choose the kernel width very small in order to sample from the correct target measure according to \cref{thm:lim_chi_square} and \cref{thm:gamma_conv}.
In comparison, the theory of SVGD has no such implication.

\subsection{Uniform sampling with an alternative mirror map}\label{sec:mirror_map_challenge}
In this section, we show that for sampling from a uniform distribution in the square $[-1, 1]^2$, the results of SVMD/MSVGD \citep{shi2021sampling} depend heavily on the choice of the mirror map.
Instead of the entropic mirror map used to produce results in \cref{fig:uniform_box}, here we use the mirror map $\phi(\theta) = \sum_{i=1}^n \left(\log \frac{1}{1-\theta_i} + \log \frac{1}{1+\theta_i} \right)$ as in \citet{ahn2021efficient}.
The results are shown in \cref{fig:uniform_box_bad}.
SVMD/MSVGD fail to draw samples near the boundary; we suspect this is because the gradient of the conjugate $\nabla \phi^*(\eta) = \nicefrac{(\sqrt{1+\eta^2} - 1)}{\eta}$ (coordinate-wise arithmetic) requires coordinates of $\eta$ to go to $\infty$ to land near the boundary. 
We verify this phenomenon by using $\nabla \phi^*(\eta)$ as the reparametrization map in MIED (rightmost figure in \cref{fig:uniform_box_bad}): indeed with such reparameterization MIED also struggles near the boundary.

\begin{figure}[htb]
  \centering
  \includegraphics[align=c, width=0.70\textwidth]{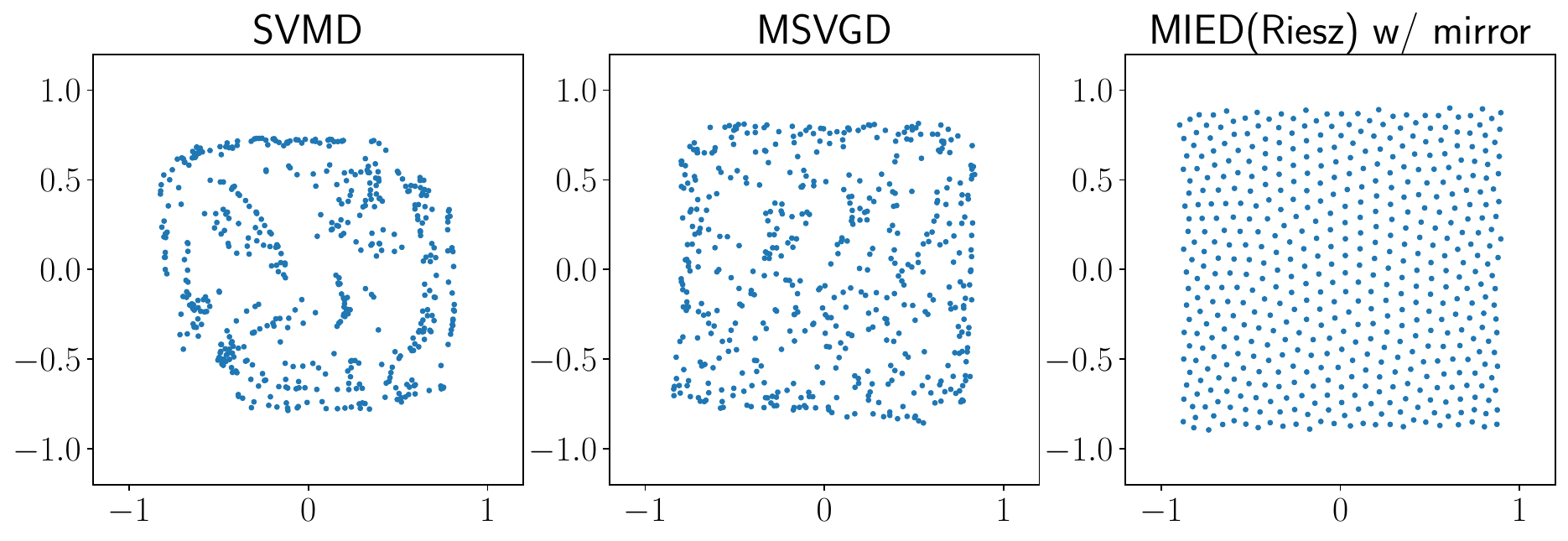}
  \caption{Visualization of samples for uniform sampling from a 2D box when using a suboptimal mirror map. All three methods fail to draw samples near the boundary of the box $[-1,1]^2$.}
  \label{fig:uniform_box_bad}
\end{figure}

\subsection{20-dimensional Dirichlet distribution}\label{sec:dirichlet_20d}
We sample from the 20-dimensional Dirichlet distribution in the same setup as in \citet{shi2021sampling} with 50 particles.
Results and visualization are shown in \cref{fig:dirichlet_20d}.
We see that unlike sampling from a box (\cref{fig:uniform_box}), both MSVGD and SVMD by \citet{shi2021sampling} perform well in this experiment.
This is due to the fact that the entropic mirror map used here is a well-tested choice for simplex constraint, yet obtaining a good mirror map for a generic constraint, even if it is linear, can be challenging.
Our method does not have such a limitation, as it can easily incorporate existing constrained optimization tools.
\begin{figure}[htb]
  \centering
  \includegraphics[width=1.0\textwidth]{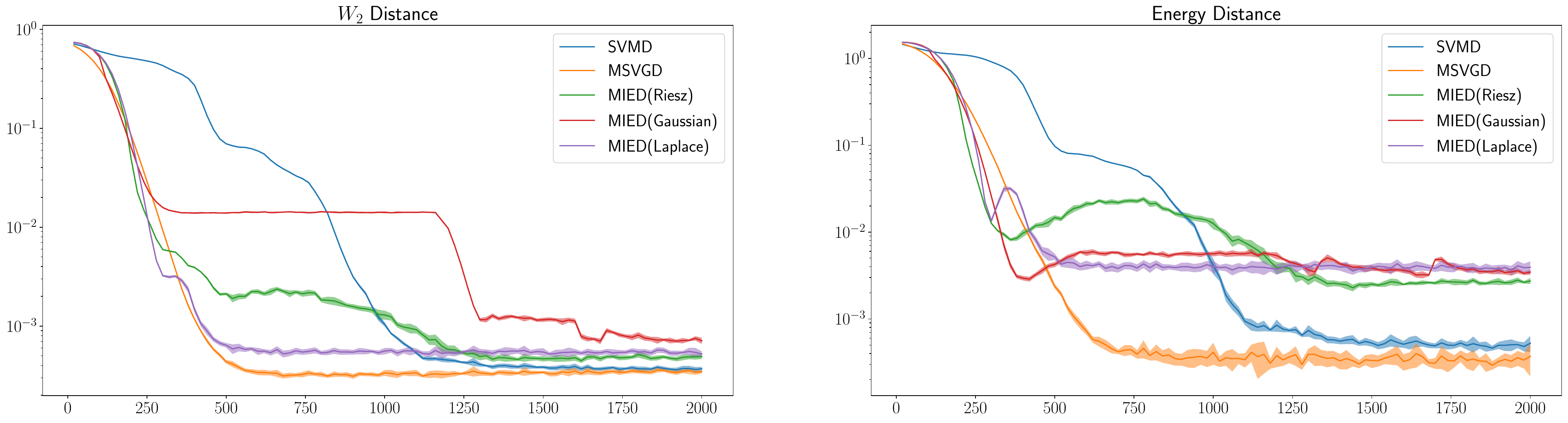}
  \includegraphics[width=1.0\textwidth]{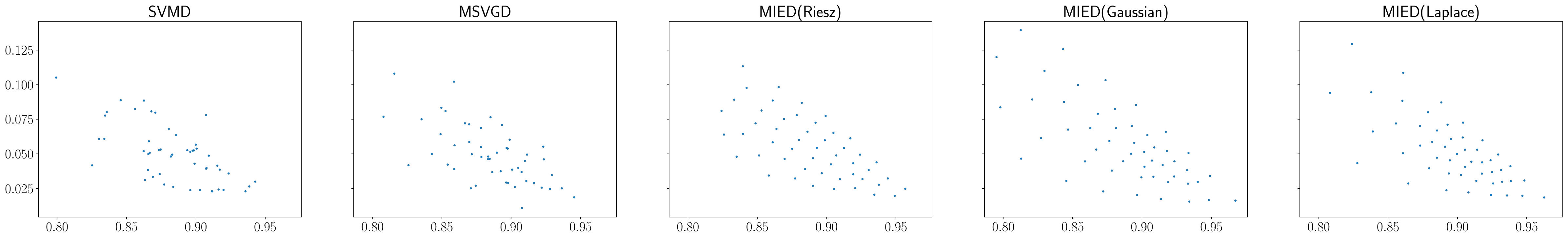}
  \caption{Top: Metrics vs. the number of iteration for sampling the 20-dimensional Dirichlet distribution. Bottom: visualization of samples from each method.}
  \label{fig:dirichlet_20d}
\end{figure}

\subsection{Effect of $s$ for Riesz mollifiers}\label{sec:effect_of_s}
When we use the $s$-Riesz families of mollifiers in MIED, we have the freedom of choosing the hyperparameter $s$ so long as $s > n$.
In this section, we study the effect of $s$ on the resulting samples.
We consider the problem of sampling from a mixture of four 2D Gaussians centered at $(\pm 1, \pm 1)$, each with diagonal variance $0.3$ and constrained to the $[-1,1]^2$ box.
We vary $s$ in $[2, 10]$ and the number of particles $N$ in $\{100, 200, 500, 1000, 2000\}$.
All runs use a total of $1000$ iterations with learning rate $0.01$.
In the top of \cref{fig:effect_of_s}, we plot the $W_2$ distance and energy distance as functions of $s$ for each $N$.
Interestingly, we see the best performing $s$ is in $[3, 5.0]$ and depends on $N$.
This suggests that our choice of $s=n+10^{-4}$ in \cref{sec:experiments} may not be optimal and there is room for hyperparameter tuning to further improve the performance of MIED with Riesz kernel.
At the bottom of \cref{fig:effect_of_s} we visualize the samples of MIED with $s=3$ and of SVGD with kernel width $0.01$ (adaptive kernels would result in collapsed samples and other widths we tested on would result in worse samples).
While SVGD samples form visible artifacts, the samples of MIED are evenly distributed and the four modes of the mixtures emerge as $N$ increases.

\begin{figure}[htb]
  \centering
  \includegraphics[width=1.0\textwidth]{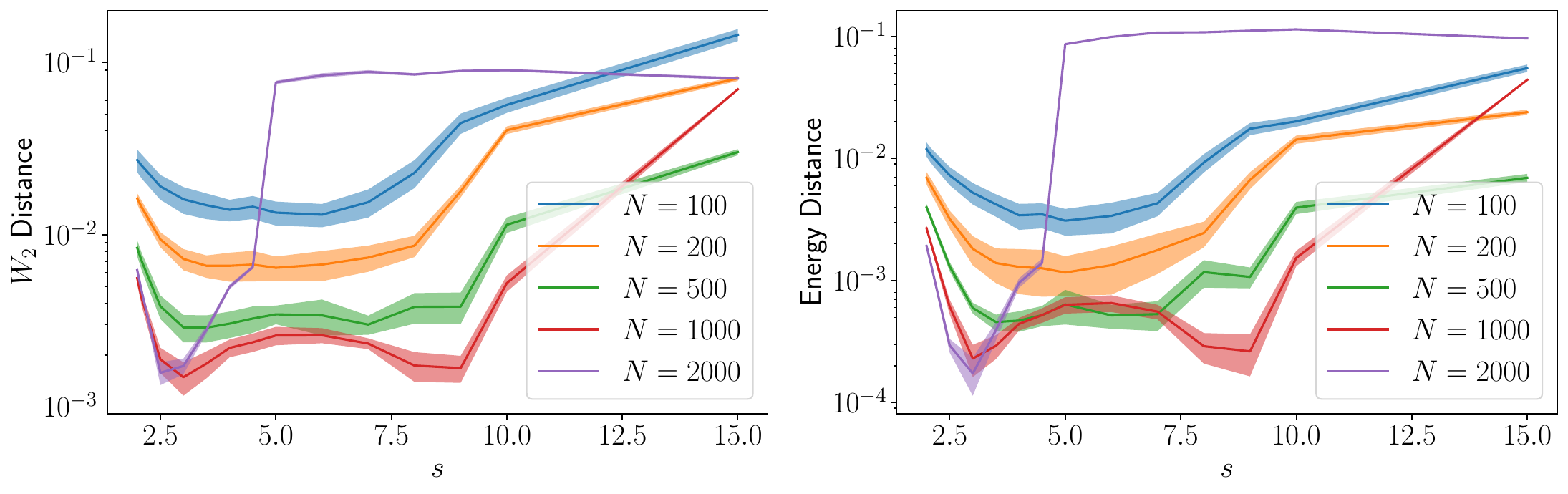}
  \includegraphics[width=1.0\textwidth]{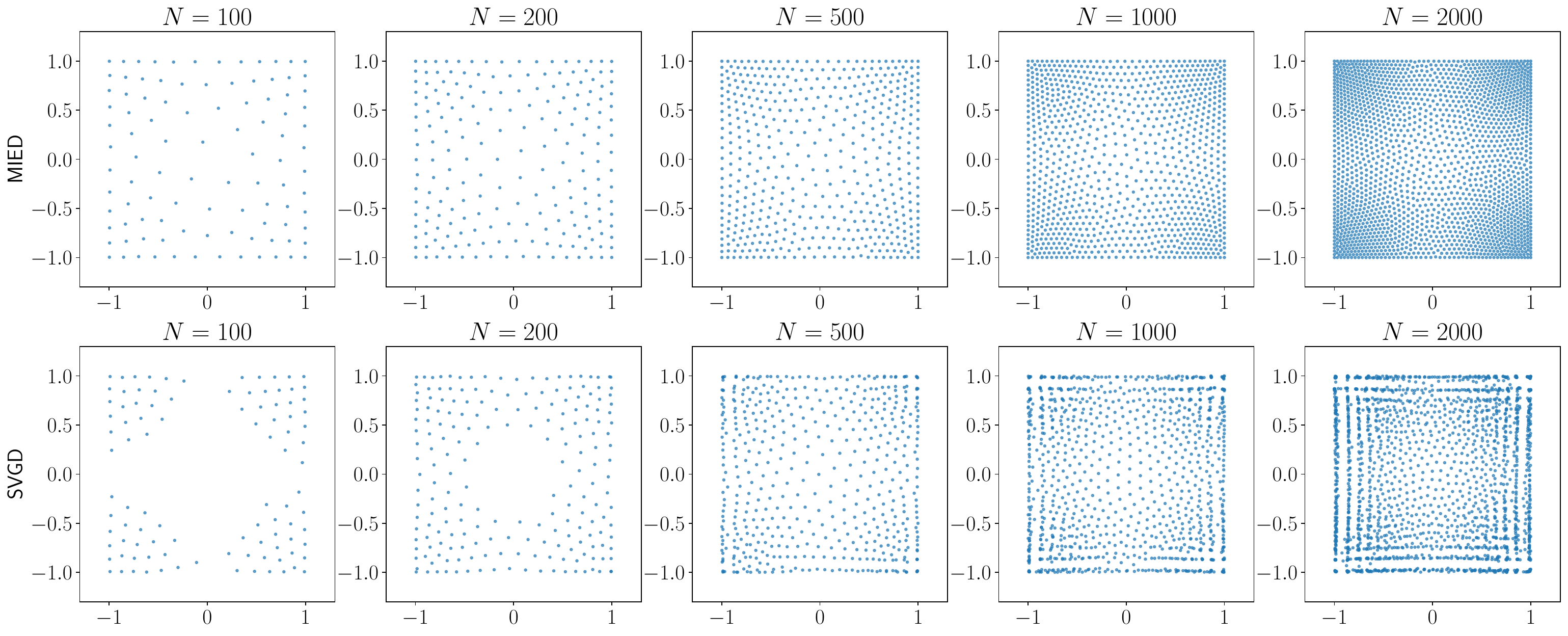}
  \caption{Ablation study of the hyperparameter $s$ for MIED with $s$-Riesz families of mollifiers. Top: metrics as functions of $s$. Bottom: visualization of samples of MIED with a Riesz mollifier with $s=3$ and of SVGD with kernel width $0.01$.}
  \label{fig:effect_of_s}
\end{figure}

\subsection{More constrained sampling experiments}\label{sec:exotic_constraints}
In this section we test MIED on more low dimensional constrained sampling problems and qualitatively assess the results.
Note that mirror LMC \citep{zhang2020wasserstein, ahn2021efficient} or mirror SVGD \citep{shi2021sampling} cannot be applied due to non-convexity of the constraints.
In \cref{fig:period_vis}, we consider uniform sampling of a challenging 2D region with initial samples drawn from the top-right corner: as the number of iterations increases, MIED gradually propagate samples to fill up the entire region. 
In \cref{fig:vmf}, we consider sampling from a von Mises-Fisher distribution on a unit sphere.
Although our theory focuses on sampling from a full-dimensional distribution, as discussed in \cref{rmk:hausdorff_d}, we can extend \cref{thm:lim_chi_square} to the case of a sphere due to its symmetry. 
We see the samples visualized in \cref{fig:vmf} capture the two modes that emerge by restricting the Gaussian to a unit sphere.

\begin{figure}[htb]
  \centering
  \includegraphics[width=1.0\textwidth]{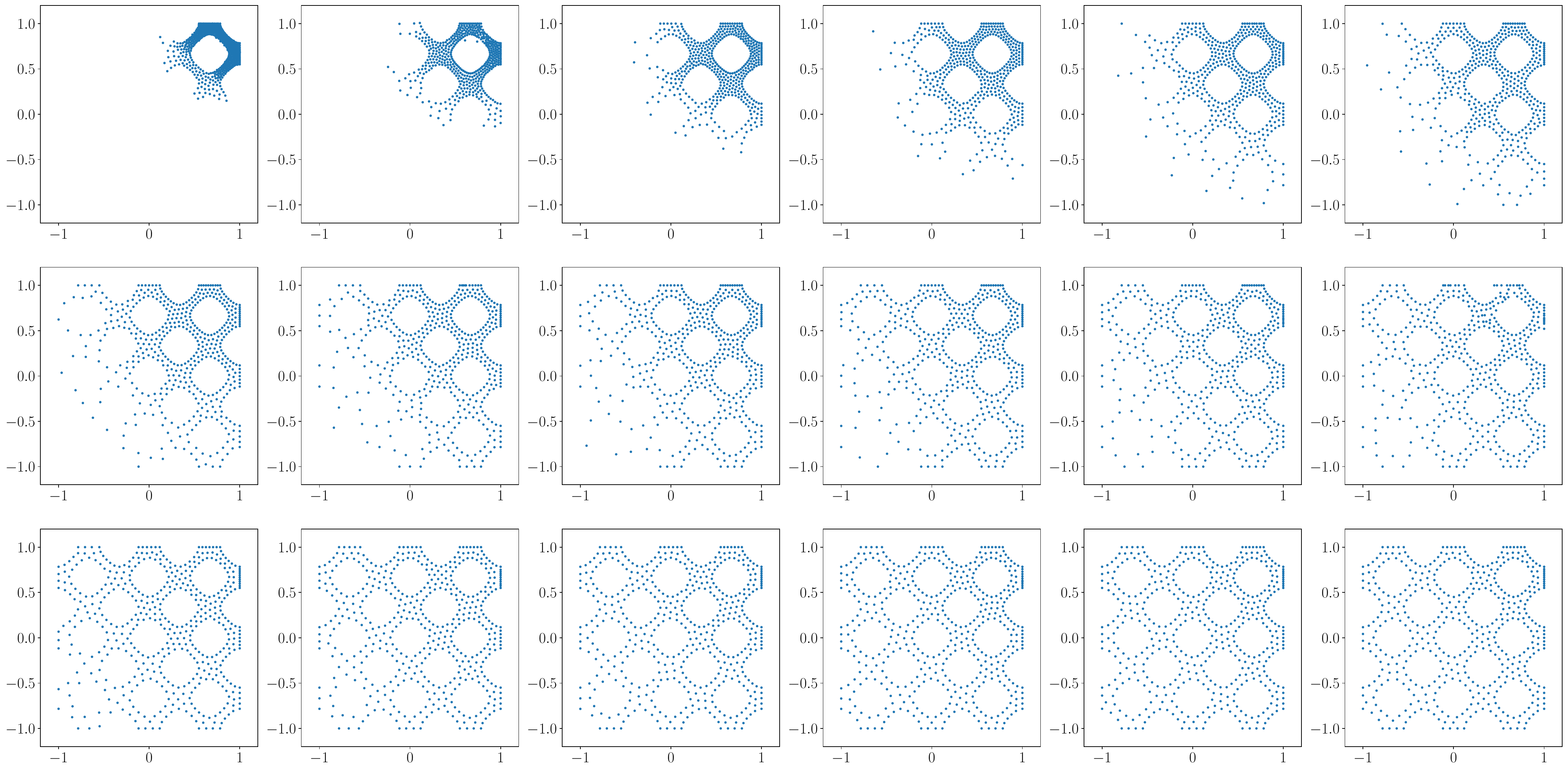}
  \caption{Uniform sampling of the region $\{(x, y)\in [-1, 1]^2: (\cos(3\pi x)+\cos(3\pi y))^2 < 0.3\}$ using MIED with a Riesz mollifier ($s=3$) where the constraint is enforced using the dynamic barrier method. The plot in row $i$ column $j$ shows the samples at iteration $100+200(6i+j)$. The initial samples are drawn uniformly from the top-right square $[0.5,1.0]^2$.}
  \label{fig:period_vis}
\end{figure}

\begin{figure}[htb]
  \centering
  \includegraphics[width=0.30\textwidth]{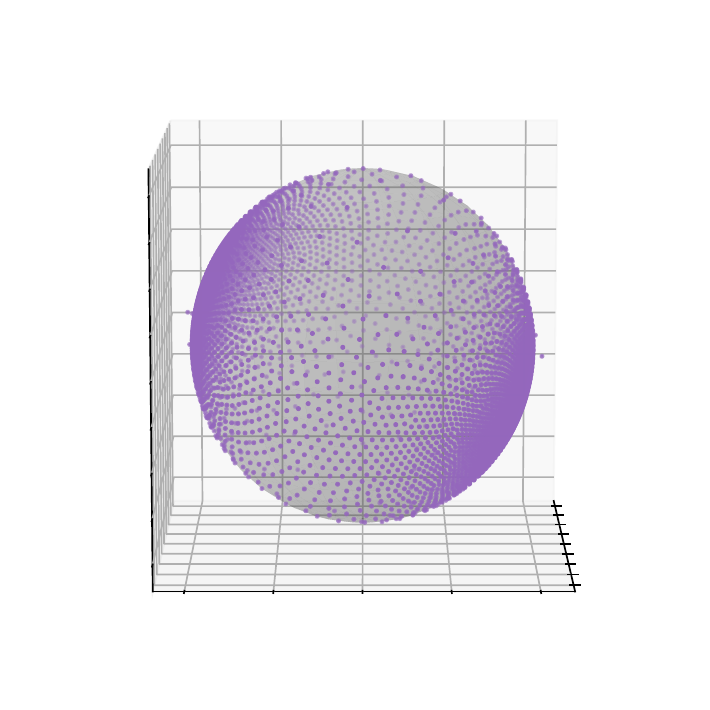}
  \includegraphics[width=0.30\textwidth]{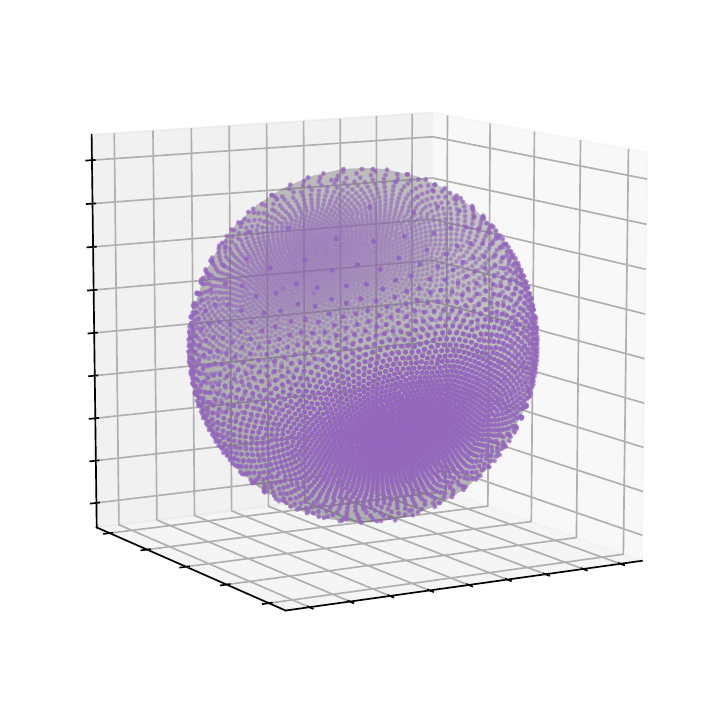}
  \includegraphics[width=0.30\textwidth]{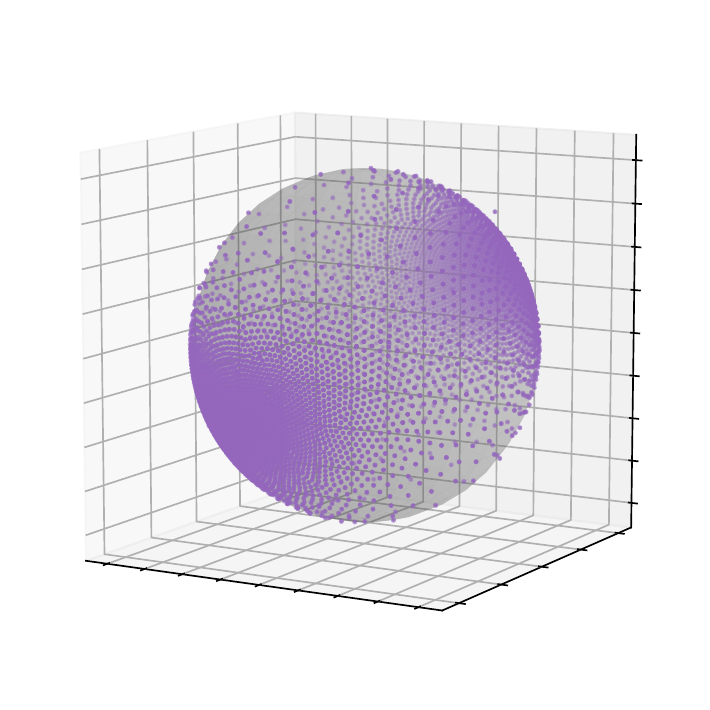}
  \includegraphics[width=0.30\textwidth]{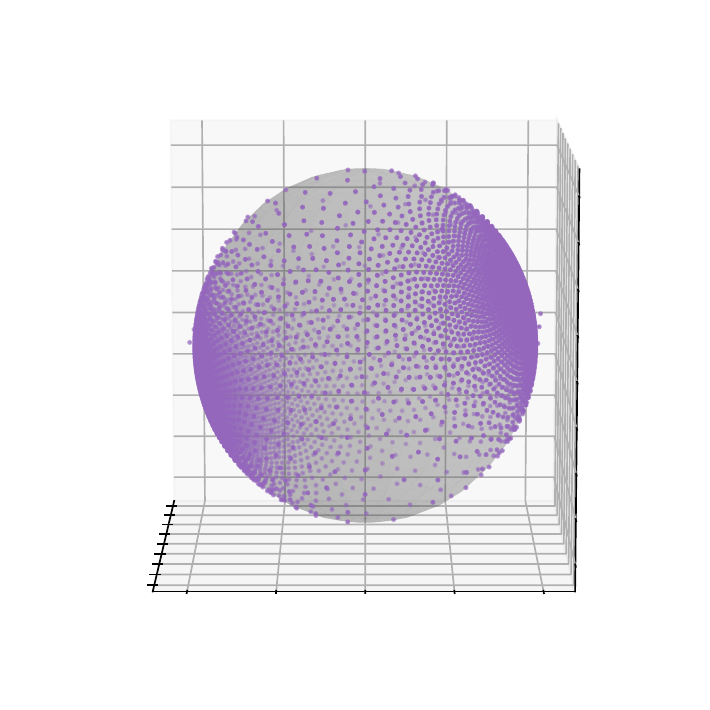}
  \includegraphics[width=0.30\textwidth]{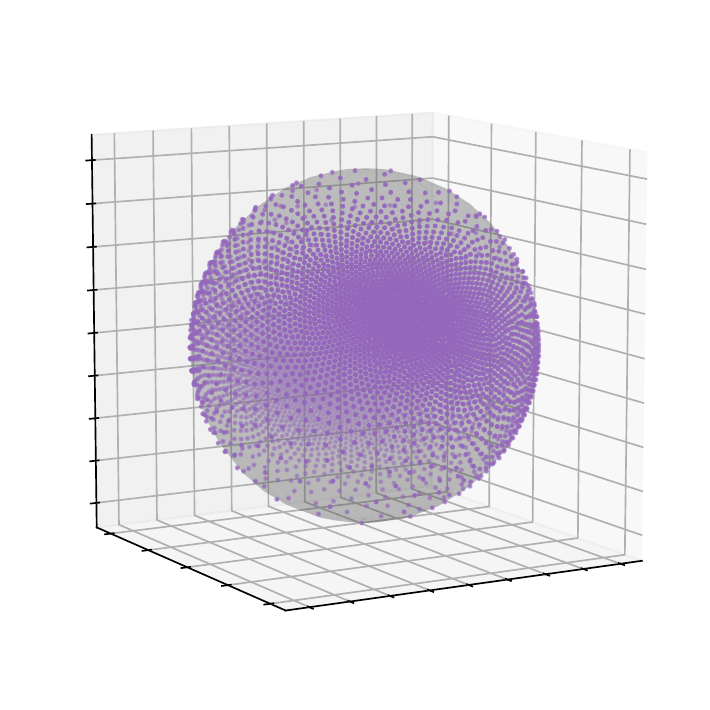}
  \includegraphics[width=0.30\textwidth]{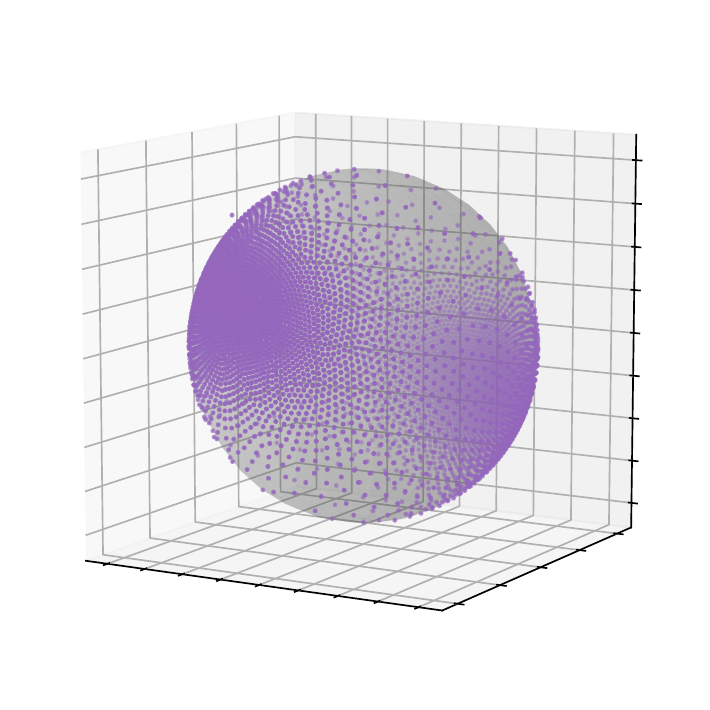}
  \caption{Sampling from the von Mises-Fisher distribution obtained by constraining the 3-dimensional Gaussian from \cref{sec:detailed_gaussian} to the unit sphere. The unit-sphere constraint is enforced using the dynamic barrier method and the shown results are obtained using MIED with Riesz kernel and $s=3$. The six plots are views from six evenly spaced azimuthal angles.}
  \label{fig:vmf}
\end{figure}

\subsection{Details on fairness Bayesian neural network experiment}\label{sec:fairness_details}
We use $80\%/20\%$ training/test split as in \citet{liu2021sampling}.
We use the source code provided by \citet{liu2021sampling} with default hyperparameters.The source code provided by the authors of \citet{liu2021sampling} does not implement the calculation of $g(\theta)$ faithfully as written in the formula, so we corrected it.
All methods use 2000 iterations for training.
For our method we use learning rate $0.001$.
One of their four methods (Control+SVGD) got stuck at initialization (with accuracy around $0.75$), so we omit its result from the plot in \cref{fig:fairness_bnn}.

\end{document}